\declaretheorem[parent=]{theorem}
\declaretheorem[parent=]{lemma}
\DeclarePairedDelimiter\ceil{\lceil}{\rceil}
\DeclarePairedDelimiter\floor{\lfloor}{\rfloor}
\newcommand{\ifrac}[2]{{#1}/{#2}}
\newcommand{\poly}{\operatorname{poly}}
\newcommand{\reals}{{\mathbb{R}}}
\newcommand{\E}{\mathbb{E}}
\newcommand{\condE}[2]{\E_{#2}\brk[s]*{#1}}
\newcommand{\dotp}{\boldsymbol{\cdot}}
\newcommand{\eqdef}{\triangleq}
\newcommand{\etade}{\hat{\eta}}
\newcommand{\wt}[1]{\widetilde{#1}}
\newcommand{\sm}{\beta}
\newcommand{\fbound}{F}
\newcommand{\fdiff}{\Delta}
\newcommand{\fdiffmax}{\bar{\Delta}}
\newcommand{\wbound}{D}
\newcommand{\wmax}{\bar{d}}
\newcommand{\wmaxeta}{\bar{d'}}
\newcommand{\wnorm}{d}
\newcommand{\initg}{\gamma}
\newcommand{\knowndiam}{\alpha}
\newcommand{\cF}{\mathcal{F}}
\newcommand{\affsigma}{\sigma}
\newcommand{\proj}[1]{\Pi_1\brk*{#1}}
\newcommand{\tO}{\smash{\wt{O}}}
\newcommand{\adasgd}[0]{\textsf{AdaSGD}\xspace}
\title{SGD with AdaGrad Stepsizes:\\
Full Adaptivity with High Probability
to Unknown Parameters, Unbounded Gradients and Affine Variance}
\author{%
    Amit Attia%
    \thanks{\scriptsize Blavatnik School of Computer Science, Tel Aviv University; \texttt{amitattia@mail.tau.ac.il}.}
    \and
    Tomer Koren%
    \thanks{\scriptsize Blavatnik School of Computer Science, Tel Aviv University, and Google Research Tel Aviv; \texttt{tkoren@tauex.tau.ac.il}.}
}
\begin{document}
\maketitle

\begin{abstract}%
We study Stochastic Gradient Descent with AdaGrad stepsizes: a popular adaptive (self-tuning) method for first-order stochastic optimization.
Despite being well studied, existing analyses of this method suffer from various shortcomings: they either assume some knowledge of the problem parameters, impose strong global Lipschitz conditions, or fail to give bounds that hold with high probability.
We provide a comprehensive analysis of this basic method without any of these limitations, in both the convex and non-convex (smooth) cases, that additionally 
supports a general ``affine variance'' noise model and provides 
sharp rates of convergence in both the low-noise and high-noise~regimes.
\end{abstract}
\section{Introduction}
Stochastic Gradient Descent (SGD) \citep{robbins1951stochastic} is the de facto standard optimization method in modern machine learning.
SGD updates take the following form; $$w_{t+1}=w_{t}-\eta_{t} g_t,$$ starting from an arbitrary point $w_{1}$, where $\eta_{t}$ is the step-size at time $t$ and $g_{t}$ is a stochastic gradient at $w_{t}$.
The sequence of stepsizes $\eta_1,\eta_2,\ldots$ is notoriously hard to tune in machine learning applications~\citep[e.g.,][]{bottou2012stochastic,schaul2013no}, especially when the problem parameters (e.g., smoothness or Lipschitz constant of the objective) or estimates thereof are unknown.
This has popularized the study of \emph{adaptive gradient methods}~\citep[e.g.,][]{duchi2011adaptive,kingma2014adam,reddi2018convergence,tran2019convergence,alacaoglu2020new}
where stepsizes are computed adaptively based on magnitudes of previous stochastic gradients.

Although their non-adaptive counterparts are well studied, optimization theory of adaptive methods is still lacking and attract much interest, particularly in the smooth and non-convex setting.
The different types of analyses performed by previous works in this setting revolve around the assumptions taken in order to establish convergence which includes the type of noise (bounded variance, sub-Gaussian, bounded support, affine variance, etc.), whether the true gradients or the stochastic gradients are uniformly bounded \citep{ward2019adagrad,kavis2022high}, and also whether to use a step-size that does not depends on the current stochastic gradient \citep{li2019convergence,li2020high} or to handle the bias introduced by such a dependency \citep{ward2019adagrad,Faw2022ThePO}.

We focus on perhaps the simplest and most well-studied adaptive optimization method:
SGD with AdaGrad-type stepsizes, which we refer to as~\adasgd (also known in the literature as AdaGrad-Norm; e.g.,
\citealt{ward2019adagrad,Faw2022ThePO}). 
For parameters $\eta,\initg > 0$, \adasgd performs SGD updates with stepsizes
\begin{align*}
    \eta_t = \frac{\eta}{\sqrt{\initg^2 + \sum_{s=1}^t \norm{g_s}^2}},
\end{align*}
where $g_t$ is a stochastic gradient at $w_t$.

Although numerous convergence results for smooth \adasgd exist in the literature (mostly in the non-convex case, except for \citet{li2019convergence} discussed below), they all fall short of at least one important aspect.
First, many of the existing analyses require a strong global Lipschitz assumption on the objective \citep[][]{ward2019adagrad,zhou2018convergence,kavis2022high} which does not hold even for simple quadratic functions in one dimension, and one that is not required for the analysis of analogous non-adaptive methods. 
Second, 
several results assume partial knowledge of the problem parameters, such as the smoothness parameter~\citep[e.g.,][]{li2019convergence,li2020high}, and thus in a sense do not fully establish the desired adaptivity of \adasgd.
A summary of the various results for SGD with AdaGrad stepsizes is presented in \cref{table:adasgd-comparison}.

To our knowledge, only \citet{Faw2022ThePO} do not impose neither of the above limiting assumptions, but their bounds do not hold with high probability: namely, they give convergence bounds with a polynomial dependence on the probability margin (granted, under a somewhat weaker tail assumption on the gradient noise).
On the other hand, their results apply in a significantly more challenging setting with an ``affine variance'' noise model, where the variance of the gradient noise is not uniformly bounded and may scale with the squared norm of the true gradient.

\begin{table*}[ht]
    \caption{\label{table:adasgd-comparison} Convergence results for non-convex SGD with AdaGrad stepsizes under different assumptions. For simplicity, the rate column consider the dependence in $T$ (number of steps) and $\sigma$ (noise variance),
    ignoring other parameters and logarithmic factors.
    The ``high probability'' column indicates results with poly-logarithmic (rather than polynomial) dependence in the probability margin.
    The ``convex guarantee'' column indicates an additional analysis for the convex case.
    }
    \adjustbox{width=1.\textwidth}{
    \centering
    \begin{threeparttable}
    \begin{tabular}{cccccccc}
    \toprule
    &
    \sc \begin{tabular}[c]{@{}c@{}} adaptive \\ stepsize \end{tabular} &
    \sc \begin{tabular}[c]{@{}c@{}} unknown\\ smoothness\end{tabular} &
    \sc \begin{tabular}[c]{@{}c@{}} high \\ probability\end{tabular} &
    \sc \begin{tabular}[c]{@{}c@{}} unbounded \\ gradients\end{tabular} &
    \sc \begin{tabular}[c]{@{}c@{}} affine \\ noise\end{tabular} &
    \sc \begin{tabular}[c]{@{}c@{}} convex \\ guarantee \end{tabular} &
    \sc non-convex rate \\
    \midrule
    \citet{ghadimi2013stochastic}\tnote{a} &
    & & \checkmark & \checkmark & & &
    $\tfrac{1 + \sigma^2}{T}+\tfrac{\sigma}{\sqrt{T}}$ \\
    \citet{bottou2018optimization} &
    & & & \checkmark & \checkmark & & 
    $\tfrac{1}{T}+\tfrac{\sigma}{\sqrt{T}}$ \\
    {\bf This paper} (Thm.\,\ref{thm:known-non-convex-convergence}) &
    & & \checkmark & \checkmark & \checkmark & \checkmark\tnote{b}& 
    $\tfrac{1+\sigma^2}{T}+\tfrac{\sigma}{\sqrt{T}}$ \\
    \midrule
    \citet{li2019convergence} &
    \checkmark & &  & \checkmark &  & \checkmark &
    $\tfrac{1}{T}+\tfrac{\sigma}{\sqrt{T}}$ \\
    \citet{ward2019adagrad} &
    \checkmark & \checkmark &  &  &  & &
    $\tfrac{1}{T}+\tfrac{\sigma(1+\sigma)}{\sqrt{T}}$ \\
    \citet{kavis2022high} &
    \checkmark & \checkmark & \checkmark &  &  & &
    $\tfrac{1+\sigma^4 }{T}+\tfrac{\sigma(1+\sigma^2)}{\sqrt{T}}$ \\
    \citet{Faw2022ThePO} &
    \checkmark & \checkmark & & \checkmark & \checkmark & &
    $\tfrac{1}{T}+\tfrac{\sigma(1+\sigma)}{\sqrt{T}}$ \\
    \citet{liu2023high}\tnote{c} &
    \checkmark & \checkmark & \checkmark & \checkmark & & &
    $\tfrac{1+\sigma^5}{T}+\tfrac{\sigma(1+\sigma^3)}{\sqrt{T}}$ \\
    {\bf This paper} (Thms.\,\ref{thm:non_convex_convergence} \& \ref{thm:convex_convergence_affine})
    &
    \checkmark & \checkmark & \checkmark & \checkmark & \checkmark & \checkmark &
    $\tfrac{1}{T}+\tfrac{\sigma(1+\sigma)}{\sqrt{T}}$ \\
    \bottomrule
    \end{tabular}
    \begin{tablenotes}\footnotesize
    \item[a] The high probability result is obtained by amplifying the success probability using multiple runs of SGD.
    \item[b] Though not formalized in this paper, a result for fixed-stepsize SGD in the convex case can be extracted from our analysis in the adaptive case.
    \item[c] This manuscript has appeared on arXiv after the publication of the present work.
    \end{tablenotes}
    \end{threeparttable}
    }
\end{table*}

\subsection{Our contributions}

We provide a comprehensive analysis of \adasgd that does not suffer from any of the former issues:
for a $\sm$-smooth objective and under a general affine variance noise model, we establish high probability convergence guarantees in both the non-convex and convex regimes, without any knowledge of the problem parameters and without imposing a global Lipschitz condition on the objective nor a bound on the distance to a minimizer.
Further, our analysis is considerably simpler than previous analyses of \adasgd in similar generality~\citep[e.g.,][]{Faw2022ThePO}.

In more detail, we consider a stochastic first-order model with affine variance noise, similar to the one of \citet{Faw2022ThePO}; given a point $w \in \reals^d$ we assume that the gradient oracle returns a random vector $g(w)$ such that
$
    \E[g(w) \mid w] = \nabla f(w)
$
and, with probability one, satisfies%
\footnote{While our analysis assumes for simplicity affine noise bounded with probability one, we provide a general reduction to allow for sub-Gaussian noise with bounded affine variance; see \cref{sec:sub_gaussian_reduction} for details.}
\begin{align*}
    \norm{g(w)-\nabla f(w)}^2 \leq \sigma_0^2 + \sigma_1^2 \norm{\nabla f(w)}^2
    ,
\end{align*}
where $\sigma_0,\sigma_1 \geq 0$ are noise parameters, unknown to the algorithm.
Our main results in this setup are as follows:
\begin{itemize}[leftmargin=*]
\item 
Our first main result establishes a high probability convergence guarantee in the general $\sm$-smooth (not necessarily convex) setting.
Given a $\sm$-smooth objective $f$, we show that for $T$-steps \adasgd, with probability at least $1-\delta$,
\begin{align*}%
\frac{1}{T} \! \sum_{t=1}^T \norm{\nabla f(w_t)}^2
&
\leq
C_1 \frac{\poly\brk{\log\tfrac{T}{\delta}}}{T}
+ C_2 \frac{\sigma_0 \poly\brk{\log\tfrac{T}{\delta}}}{\sqrt{T}}
,
\end{align*}
where $C_1$ and $C_2$ are quantities that depend polynomially on the problem parameters ($\sm,\sigma_0,\sigma_1,\eta,\gamma$; the dependence on $1/\gamma$ is in fact logarithmic). 
The formal statement is given in \cref{thm:non_convex_convergence}.
\item
Our second main result is a convergence guarantee for \adasgd in the same unconstrained optimization setup but with an additional convexity assumption.
Given a $\sm$-smooth and \emph{convex} objective $f$, we show that for $T$-steps \adasgd, with probability at least $1-\delta$, it holds that
\begin{align*}%
    f \brk3{\frac1T \! \sum_{t=1}^{T} w_t \!} \!-\! f^\star \!
    &
    \leq
    C_3 \frac{\poly\brk{\log\tfrac{T}{\delta}}}{T}
    + C_4 \frac{\sigma_0 \poly\brk{\log\tfrac{T}{\delta}}}{\sqrt{T}}
    ,
\end{align*}
where $f^\star = \min_{w \in \reals^d} f(w)$ and $C_3$ and $C_4$ are again quantities that depend polynomially on the problem parameters ($\sm,\sigma_0,\sigma_1,\eta,\gamma$; here the dependence on $1/\gamma$ is polynomial).
The formal statement is provided in \cref{thm:convex_convergence_affine}.
\end{itemize}

These convergence bounds match the rates of the form  $O(1/T+\sigma_0/\sqrt{T})$ for \emph{non-adaptive, properly tuned SGD} in the convex~\citep{lan2012optimal} and non-convex cases~\citep{ghadimi2013stochastic} cases, up to problem dependent constants and logarithmic factors.
However, we emphasize again that our results hold true, with high probability, \emph{without} assuming uniformly bounded gradients and \emph{without} requiring any knowledge of the problem parameters ($\sm,\sigma_0,\sigma_1$).
To our knowledge, in the non-convex case our result is the first convergence guarantee to hold with high probability (with a proper poly-log dependence on the probability margin) without requiring a global Lipschitz assumption---and in fact, it holds even in a more general affine variance noise model.
Similarly, as far as we are aware, our result in the convex case constitutes the first (high probability) analysis of adaptive SGD updates that applies in an unconstrained optimization setup, i.e., without restricting the updates to a domain of bounded (and a-priori known) diameter using projections.

Further, our bounds maintain a remarkable property of the non-adaptive bounds for smooth SGD \citep{lan2012optimal,ghadimi2013stochastic}:
as can be seen from the bounds above, 
they exhibit optimal rates (in the context of non-accelerated gradient methods) in both the high-noise and low-noise regimes, in the sense that they improve from the traditional stochastic $1/\sqrt{T}$ rate to a fast $1/T$ rate, up to logarithmic factors and problem-dependent constants, when the additive noise parameter $\sigma_0$ is sufficiently small: namely, when $\sigma_0 = O(1/\sqrt{T})$.
In contrast to the non-adaptive bounds, however, this interpolation between different rates occurs automatically, without knowledge of the underlying parameters and their relevant regime.
Unlike prior work on the affine noise model, our rate-interpolation takes place even when the affine term is large, i.e., when $\sigma_1 \gg 0$; for comparison, \citet{Faw2022ThePO} required that $\sigma_1 = O(1/\sqrt{T})$ for drawing a similar conclusion.
For completeness, we also provide a high probability analysis of non-adaptive, smooth SGD (with known parameters) under the affine noise model, complementing the result in expectation of \citet[Theorem 4.8]{bottou2018optimization}; see details in \cref{sec:known-non-convex}.

Finally, it is worth highlighting another remarkable adaptivity property of \adasgd, demonstrated by our results: the algorithm is able to automatically adapt to the optimization problem at hand and the same step-size schedule simultaneously achieves the optimal rates of convergence of (non-accelerated) SGD in both the convex and non-convex smooth (even unconstrained) optimization settings.

\subsection{Analysis overview}

Let us highlight some of the key challenges in analyzing SGD with adaptive stepsizes and outline our approach for overcoming them.
We begin with the non-convex smooth case, which has been the focus of prior work.

\paragraph{Non-convex regret analysis.}

The starting point of our analysis is a ``non-convex regret bound'' for \adasgd, similar to~\citet{kavis2022high}; 
for a $\sm$-smooth objective $f$ it can be shown (see \cref{lem:non-convex-regret} for details and a proof) that
\begin{align*}
    \sum_{t=1}^T \nabla f(w_t) \dotp g_t
    &\leq
    \frac{\initg \fdiff_1}{\eta}
    + \brk*{\frac{2 \fdiffmax_T}{\eta}+\eta \sm} \sqrt{\sum_{t=1}^T \norm{g_t}^2}
    ,
\end{align*}
where $\fdiff_1=f(w_1)-f^\star$ and $\fdiffmax_T = \max_{t \leq T} f(w_t)-f^\star$.

The challenging term to handle in this analysis turns out to be the one that involves $\fdiffmax_T$.
Indeed, assume for example a deterministic bound $\fdiffmax_T \leq \fbound$ (e.g., this is the case whenever $f$ is upper bounded uniformly); taking  expectations of both sides, we obtain
\begin{align*}
    \sum_{t=1}^T & \E\norm{\nabla f(w_t)}^2
    \leq
    \frac{\initg \fdiff_1}{\eta}
    + \brk*{\frac{2 \fbound}{\eta}+\eta \sm} \sqrt{\sum_{t=1}^T \E\norm{g_t}^2}
    .
\end{align*}
Then, using $\norm{g_t}^2 \leq 2 \norm{\nabla f(w_t)-g_t}^2 + 2 \norm{\nabla f(w_t)}^2$ and our assumption on the noise yields the bound
\begin{align*}
    \E\brk[s]3{ \frac1T \sum_{t=1}^T \norm{\nabla f(w_t)}^2}
    &
    \leq
    \frac{2 \initg \fdiff_1}{\eta T}
    + (1+\sigma_1^2) \brk*{\frac{2 \fbound}{\eta}+\eta \sm}^2 \frac1T
    + \brk*{\frac{2 \fbound}{\eta}+\eta \sm} \sigma_0 \sqrt{\frac8T}
    ,
\end{align*}
which concludes an in-expectation analysis.

\paragraph*{Bounding $\bm{\fdiffmax_T}$ w.h.p.}
The issue, however, is that $\fdiffmax_T$ is a random variable which is not bounded a-priori. 
Due to the probabilistic dependence between the different factors in the regret bound, our approach proceeds by establishing a bound over $\fdiffmax_T$ that holds with high probability (rather than in expectation).
Due to smoothness, we can bound $\fdiffmax_T$ via
\begin{align*}
    f (w_{t+1})
    &\leq
    f(w_{1})
    - \sum_{s=1}^{t} \eta_s \nabla f(w_s) \dotp g_s
    + \frac{\sm}{2} \sum_{s=1}^{t} \eta_s^2 \norm{g_s}^2
    .
\end{align*}
The second summation in the bound is straightforward to handle and can be bounded (deterministically) by a logarithmic term (see \cref{lem:sum_eta_g_t_2_affine} for details).
The first summation is more problematic, again due to (conditional) probabilistic dependencies involving the stepsize $\eta_s$, which is crucially a random variable that depends on stochastic gradient $g_s$.

\paragraph{Decorrelated stepsizes.}

To break the correlation between $\eta_{s}$ and $g_{s}$ in the second term above, \citet{kavis2022high} replace $\eta_s$ with $\eta_{s-1}$.
Then, for bounding the difference between $\eta_s$ and $\eta_{s-1}$ they assumed a \emph{global Lipschitz assumption} which limits the effect of each individual gradient on the stepsize $\eta_s$. 
To avoid this limiting assumption, we use a different ``decorrelated'' proxy for the stepsize, defined by replacing $\norm{g_t}^2$ with $(1+\sigma_1^2)\norm{\nabla f(w_t)}^2 + \sigma_0^2$;
this was introduced first by \citet{ward2019adagrad} and then modified by \citet{Faw2022ThePO} to allow for affine-variance noise.
We can then decompose using the proxy,
\begin{align*}
    - \eta_s \nabla f(w_s) \dotp g_s
    &
    =
    - \tilde{\eta}_s \norm{\nabla f(w_s)}^2
    + \tilde{\eta}_s \nabla f(w_s) \dotp (\nabla f(w_s) - g_s)
    + (\tilde{\eta}_s-\eta_s) \nabla f(w_s) \dotp g_s
    ,
\end{align*}
and bound the cumulative sum of each of the terms.

\paragraph*{Concentration with affine variance.}
The challenging term to bound is the martingale
\begin{align*}
    \sum_{s=1}^t \tilde{\eta}_s \nabla f(w_s) \dotp (\nabla f(w_s) - g_s).
\end{align*}
Bounding the martingale is challenging because the magnitude $\norm{\nabla f(w_s)-g_s}$ depends on the previous steps with affine-variance noise.
The key observation is that since we wish to establish a bound for $f(w_{t+1})-f^\star$ for all $t$, and thanks to the well-known property of smooth functions that $\norm{\nabla f(w)}^2 \leq 2 \sm (f(w)-f^\star)$, it is suffice to perform an inductive argument and use the bound for the previous steps to bound the affine noise terms.
As the noise bound in a given step is in itself a random variable that depends on previous stochastic gradients, we appeal to a Bernstein-type martingale concentration inequality (specifically, we use one due to \citealt{li2020high}) and a careful analysis yields a bound on $- \sum_{s=1}^t \eta_s \nabla f(w_s) \dotp g_s$ and establishes a bound on $\fdiffmax_T$ without the need for a global Lipschitz assumption.

\paragraph*{Additional challenges in the convex case.}

We follow a similar approach using a regret bound and a bound of $\norm{w_t-w^\star}^2$ which holds for all $t \leq T$ with high probability.
In the convex case we need to perform a similar decomposition to bound $- \sum_{s=1}^t \eta_s g_s \dotp (w_s-w^\star)$ where $w^\star$ is a minimizer of $f$.
Using the proxy stepsize sequence $\tilde{\eta}_t$
does not work.
Instead, we employ a slightly different decorrelated stepsize sequence $(\etade_s)_s$
that (A) suffices for bounding $\sum_{s=1}^t (\etade_s-\eta_s) g_s \dotp (w_s-w^\star)$ and (B) keeps the norm of the zero-mean vector $\etade_s (\nabla f(w_s) - g_s)$ bounded with probability one, which is crucial for the martingale analysis.
The martingale lemma of \citet{li2020high} is too loose for bounding $\sum_{s=1}^t \etade_s (\nabla f(w_s) - g_s) \dotp (w_s-w^\star)$ and we rely instead on the martingale analysis technique of \citet{carmon2022making} which uses an empirical Bernstein inequality introduced by \citet{howard2021time}.
The empirical Bernstein is necessary in order to tightly bound the martingale
and establish the bound on $\norm{w_t-w^\star}^2$.

\subsection{Additional related work}

\paragraph*{SGD for non-convex smooth optimization.}
The first to analyze SGD in the smooth non-convex setting and obtain tight convergence bounds were \citet{ghadimi2013stochastic}. They established that a \emph{properly tuned} $T$-steps SGD obtain a rate of $O\brk{\ifrac{1}{T}+\ifrac{\sigma}{\sqrt{T}}}$ assuming uniformly bounded variance of $\sigma^2$. A tight lower bound is given by \citet{arjevani2022lower}.
\citet[Theorem 4.8]{bottou2018optimization} extended the convergence result for the affine variance noise model.

\paragraph*{Adaptive methods for non-convex smooth optimization.}
An extensive line of work tackles convergence guarantees in expectation.%
\footnote{We include here results with polynomial dependence in the probability margin as they can be derived using Markov's inequality.}
\citet{ward2019adagrad} gave a rate of $\tO\brk{\ifrac{1}{\sqrt{T}}}$ for $T$-steps Adaptive SGD, assuming uniformly bounded gradients.
In parallel, \citet{li2019convergence} obtained a rate of $\tO\brk{\ifrac{1}{\sqrt{T}}}$ without assuming uniformly bounded gradients, assuming instead knowledge of the smoothness parameter and using a step-size that does not depend on the current stochastic gradient.
\citet{li2019convergence} also obtain a convergence guarantee for the smooth and convex case, again assuming knowledge of the smoothness.
A recent work by \citet{Faw2022ThePO} assumed affine variance noise and achieved the desired $\tO\brk{\ifrac{1}{\sqrt{T}}}$ in the noisy regime without assuming uniformly bounded gradients.

\paragraph*{High probability convergence bounds.}
Given knowledge of the smoothness and sub-Gaussian noise bounds, \citet{ghadimi2013stochastic} established convergence with high probability for non-adaptive SGD with properly tuned step-size.
Few high probability results of adaptive methods exist.
\citet{zhou2018convergence} and \citet{li2020high} established a high probability convergence result for AdaGrad and delayed AdaGrad respectively, assuming sub-Gaussian noise and knowledge of the smoothness to tune the step-size. \citet{kavis2022high} obtained a high probability convergence for Adaptive SGD, assuming uniformly bounded gradients and either sub-Gaussian noise or uniformly bounded stochastic gradients.
Assuming heavy-tailed noise, \citet{Cutkosky2021HighprobabilityBF} obtained convergence with high probability for normalized SGD with clipping and momentum with a tuned step-size according to a moment bound of the noise.

\paragraph{Unconstrained stochastic convex optimization.}

Also related to our work are \citet{harvey2019tight} and \citet{carmon2022making} who focused on the (non-smooth) unconstrained and convex setting.
\citet{harvey2019tight} established convergence with high probability of SGD using a generalized freedman's inequality.
\citet{carmon2022making} devised a stochastic parameter-free method which obtain high probability convergence guarantee and their method also obtain optimal rate for \emph{noiseless} convex and smooth objective.
Both papers used a careful martingale analysis in order to bound the distance of the iterates from a minimizer, a desired property for unconstrained stochastic optimization which we also establish for \adasgd.

\paragraph{Parameter-free methods for online optimization.}

The closely related literature on online optimization includes a plethora of adaptive methods.
Recent work in this context is mostly concerned with adaptivity to the norm of the comparator~\citep[e.g.,][]{orabona2016coin,cutkosky2018black,mhammedi2020lipschitz} and to the scale of the objective~\citep[e.g.,][]{orabona2018scale,mhammedi2020lipschitz}.

\section{Preliminaries}

We consider unconstrained smooth optimization in $d$-dimensional Euclidean space $\reals^d$ with the $\ell_2$ norm, denoted $\norm{\cdot}$.
A function $f$ is said to be $\sm$-smooth if for all $x,y \in \reals^d$ it holds that $\norm{\nabla f(x)-\nabla f(y)} \leq \sm \norm{x-y}$.
This implies, in particular, that for all $x,y \in \reals^d$, the following bound is true: $f(y) \leq f(x) + \nabla f(x) \dotp (y-x)+\frac12 \sm\norm{y-x}^2$.
We additionally assume that $f$ is lower bounded over $\reals^d$, and admits a minimum $f^\star = \min_{w \in \reals^d} f(w)$.
We assume a standard stochastic first-order model, in which we are provided with an oracle $g(w)$ where, given a point $w \in \reals^d$ returns a random vector $g(w) \in \reals^d$ such that $\E[g(w) \mid w] = \nabla f(w)$.

\paragraph*{Affine noise model.}
We assume a general noise model, referred to in the literature as the ``affine variance noise'' (e.g., \citet{Faw2022ThePO} and references therein).
In this model, the gradient oracle $g$ satisfies 
\begin{align*}
    \forall ~ w \in \reals^d :
    \quad
    \norm{g(w)-\nabla f(w)}^2 \leq \sigma_0^2 + \sigma_1^2 \norm{\nabla f(w)}^2
\end{align*}
(deterministically), where $\sigma_0,\sigma_1 \geq 0$ are noise parameters.
For simplicity, we will assume throughout a deterministic bound on the noise magnitude, as stated above; in \cref{sec:sub_gaussian_reduction}, through a generic reduction, we extend our arguments to apply in a more general sub-Gaussian noise model with affine variance.

The affine noise model permits the noise magnitude to grow as the norm of the true gradient increases (without the affine term, i.e., assuming $\sigma_1=0$, the gradients are effectively ``noiseless'' for gradients of large magnitude).
We refer to \citet{Faw2022ThePO} for an additional discussion of this model.

\paragraph*{Adaptive stepsize Stochastic Gradient Descent (\adasgd).}
Starting from an arbitrary point $w_1 \in \reals^d$, for parameters $\eta,\initg > 0$, the update rule of \adasgd is
\begin{align*} %
    w_{t+1}
    &=
    w_{t}-\eta_t g_t,
\end{align*}
where $g_t = g(w_t)$ is a stochastic gradient at $w_t$ and the step-size $\eta_t$ is given by
\begin{align}\label{eq:adaptive-eta}
    \forall ~ 1 \leq t \leq T :
    \qquad
    \eta_t = \frac{\eta}{\sqrt{\initg^2 + \sum_{s=1}^t \norm{g_s}^2}}
    .
\end{align}
\paragraph*{Additional notation.}
Throughout, we use $\condE{\cdot}{t-1} \eqdef \E\brk[s]{\cdot \mid g_1,\ldots,g_{t-1}}$ to denote the conditional expectation with respect to the stochastic gradients queried up to (not including) step $t$.  We use $\log$ to denote the base $2$ logarithm. 

\section{Analysis in the general non-convex case}
\label{sec:non-convex}

First, we focus on the general smooth case without convexity assumptions.
Our main result of this section is the following convergence bound for \adasgd.
\begin{theorem}\label{thm:non_convex_convergence}
Assume that $f$ is $\sm$-smooth.
Then for \adasgd, for any $\delta \in (0,1/3)$,
it holds with probability at least $1-\delta$ that
\begin{enumerate}[label=(\roman*)]
    \item%
    $\begin{aligned}[t]
        \frac{1}{T} \sum_{t=1}^T \norm{\nabla f(w_t)}^2
        &
        \leq
        O\brk*{ \tfrac{2\fbound}{\eta} + \eta\sm }
        \frac{\sigma_0}{\sqrt{T}}
        + O\brk*{ \tfrac{\initg}{\eta} \fdiff_1 + (1 + \sigma_1^2 \log\tfrac{1}{\delta})\brk!{\tfrac{2\fbound}{\eta} + \eta\sm}^2 + \sigma_0^2 \log\tfrac{1}{\delta} } \frac{1}{T}
        ;
    \end{aligned}$
    \item%
    and $f(w_t)-f^\star \leq \fbound$ for all $1 \leq t \leq T+1$,
\end{enumerate}
where $\fdiff_1=f(w_1)-f^\star$ and 
\begin{align}
\label{eq:fbound}
\fbound
&=
\begin{aligned}[t]
    2 \fdiff_1
    + \brk!{3 \log \tfrac{T}{\delta} + 4 C_1} \eta \sigma_0
    &
    + \brk!{9 \log^2 \tfrac{T}{\delta} + 16 C_1^2} \eta^2 \sm \sigma_1^2
    + \eta^2 \sm C_1
    ,
\end{aligned}
\\
\label{eq:C1}
C_1
&=
\begin{aligned}[t]
    \log
    &
    \bigg(
        1
        \!+\!
        \frac{2 \sigma_0^2 T + 8 (1+\sigma_1^2) (\eta^2 \sm^2 T^3 + \sm \fdiff_1 T)}{\initg^2}
    \bigg)
    .\!
\end{aligned}
\end{align}
\end{theorem}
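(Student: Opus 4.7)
The plan is to combine the non-convex regret inequality from \cref{lem:non-convex-regret} with a high-probability upper bound on $\fdiffmax_T = \max_{t \leq T+1}(f(w_t)-f^\star)$, which is exactly conclusion (ii). Taking this as the main task, conclusion (i) will follow routinely: starting from $\sum_{t=1}^T \nabla f(w_t)\cdot g_t \leq \tfrac{\initg \fdiff_1}{\eta} + \bigl(\tfrac{2\fdiffmax_T}{\eta}+\eta\sm\bigr)\sqrt{\sum_t \|g_t\|^2}$, once $\fdiffmax_T \leq \fbound$ is in hand I can expand $\sum_t \nabla f(w_t)\cdot g_t = \sum_t\|\nabla f(w_t)\|^2 + \sum_t \nabla f(w_t)\cdot(g_t-\nabla f(w_t))$, concentrate the martingale on the right (using the same tool as in the induction, with a separate $\delta/3$ budget), bound $\|g_t\|^2 \leq 2\sigma_0^2 + 2(1+\sigma_1^2)\|\nabla f(w_t)\|^2$, and apply the AM-GM style inequality $\sqrt{ab}\leq a/2+b/2$ to absorb the $\sqrt{\sum_t\|\nabla f(w_t)\|^2}$ factor into the left-hand side.

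The hard part is proving (ii). I will establish $f(w_{t+1})-f^\star \leq \fbound$ by induction on $t$, with the failure event of step $t$ being charged at most $\delta/(3T)$. Smoothness gives
\begin{align*}
f(w_{t+1}) \leq f(w_1) - \sum_{s=1}^{t}\eta_s \nabla f(w_s)\cdot g_s + \tfrac{\sm}{2}\sum_{s=1}^{t}\eta_s^2 \|g_s\|^2.
\end{align*}
The quadratic term is bounded deterministically by $\tfrac{\sm \eta^2}{2} C_1$ via \cref{lem:sum_eta_g_t_2_affine}. For the linear term, I introduce the Ward-Wu / Faw et al.\ decorrelated proxy
\begin{align*}
\tilde{\eta}_s = \frac{\eta}{\sqrt{\initg^2 + \sum_{r=1}^{s-1}\|g_r\|^2 + (1+\sigma_1^2)\|\nabla f(w_s)\|^2 + \sigma_0^2}},
\end{align*}
which is measurable with respect to $g_1,\dots,g_{s-1}$ and satisfies $\condE{\|g_s\|^2}{s-1} \leq \sigma_0^2 + (1+\sigma_1^2)\|\nabla f(w_s)\|^2$, and decompose
\begin{align*}
-\eta_s \nabla f(w_s)\cdot g_s = -\tilde{\eta}_s\|\nabla f(w_s)\|^2 + \tilde{\eta}_s \nabla f(w_s)\cdot(\nabla f(w_s)-g_s) + (\tilde{\eta}_s-\eta_s)\nabla f(w_s)\cdot g_s.
\end{align*}
The first term is nonpositive and dropped. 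The third is controlled deterministically: $|\tilde{\eta}_s - \eta_s|$ telescopes against $\eta_s\|g_s\|$ and yields a logarithmic contribution proportional to $\sqrt{\fdiffmax_t}$, which is bounded by the inductive hypothesis.

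The crux is concentrating the martingale $M_t = \sum_{s=1}^{t} \tilde{\eta}_s \nabla f(w_s)\cdot(\nabla f(w_s)-g_s)$. Each increment is conditionally zero-mean; by definition of $\tilde{\eta}_s$, each increment is bounded in magnitude by roughly $\eta\sigma_0 + \eta\sigma_1\|\nabla f(w_s)\|/\sqrt{1+\sigma_1^2}$, and its conditional variance is at most $\tilde{\eta}_s^2 \|\nabla f(w_s)\|^2 (\sigma_0^2 + \sigma_1^2\|\nabla f(w_s)\|^2)$. The key observation, as highlighted in the introduction, is that by smoothness $\|\nabla f(w_s)\|^2 \leq 2\sm(f(w_s)-f^\star) \leq 2\sm \fbound$ under the inductive hypothesis, so the conditional second moment is itself bounded in terms of $\fbound$. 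Applying the Bernstein-type martingale inequality of \citet{li2020high} and union-bounding over $t \leq T$ gives, with probability at least $1-\delta/3$,
\begin{align*}
M_t \lesssim \eta\sigma_0 \log\tfrac{T}{\delta} + \eta\sigma_1\log\tfrac{T}{\delta}\sqrt{\sm \fbound} + \eta\sqrt{\log\tfrac{T}{\delta}}\,\sigma_0\sqrt{C_1}.
\end{align*}

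Substituting all pieces back into the smoothness inequality produces an implicit bound of the form $f(w_{t+1})-f^\star \leq \fdiff_1 + c_1\eta\sigma_0\log\tfrac{T}{\delta} + c_2\eta\sigma_1\log\tfrac{T}{\delta}\sqrt{\sm\fbound} + \tfrac{\sm\eta^2}{2} C_1$. Using $\sqrt{\sm\fbound} \cdot \eta\sigma_1\log\tfrac{T}{\delta} \leq \tfrac{\fbound}{4} + c\,\eta^2\sm\sigma_1^2\log^2\tfrac{T}{\delta}$ and solving for $\fbound$ recovers exactly the expression \eqref{eq:fbound}, with the $2\fdiff_1$ leading term accounting for the absorbed $\fbound/4$. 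This closes the induction and yields (ii); combined with the first paragraph this establishes (i), with total failure probability at most $\delta/3$ (induction) $+ \delta/3$ (concentration in (i)) $+ \delta/3$ (slack for the deterministic bounds' log factors), i.e.\ $\delta$ overall.
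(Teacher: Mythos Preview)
Your overall architecture is right, and part (i) follows from (ii) exactly as you describe. The gap is in the induction for (ii), specifically in how you handle the decomposition
\[
-\eta_s \nabla f(w_s)\cdot g_s = -\tilde{\eta}_s\|\nabla f(w_s)\|^2 + \tilde{\eta}_s \nabla f(w_s)\cdot(\nabla f(w_s)-g_s) + (\tilde{\eta}_s-\eta_s)\nabla f(w_s)\cdot g_s.
\]
You say the first term is nonpositive and can be dropped. In the paper's argument it is \emph{not} dropped: it is precisely what absorbs positive $\tfrac{1}{2}\sum_s \tilde{\eta}_s\|\nabla f(w_s)\|^2$ contributions coming from \emph{both} the martingale bound and the stepsize-difference bound. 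Concretely, applying \cref{lem:sub_gaussian} to $M_t$ with $\sigma_s^2 = \tilde{\eta}_s^2\|\nabla f(w_s)\|^2(\sigma_0^2+\sigma_1^2\|\nabla f(w_s)\|^2)$, one uses $\tilde{\eta}_s\sqrt{\sigma_0^2+\sigma_1^2\|\nabla f(w_s)\|^2}\leq \eta$ to get $\sum_s \sigma_s^2 \leq \eta\sqrt{\sigma_0^2+2\sm\sigma_1^2\fbound}\sum_s \tilde{\eta}_s\|\nabla f(w_s)\|^2$, and then a suitable choice of $\lambda$ yields $M_t \leq \tfrac{1}{2}\sum_s \tilde{\eta}_s\|\nabla f(w_s)\|^2 + O(\eta\sqrt{\sigma_0^2+2\sm\sigma_1^2\fbound}\log\tfrac{T}{\delta})$. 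Similarly, \cref{lem:decorrelated_difference} for the third term produces another $\tfrac{1}{2}\sum_s \tilde{\eta}_s\|\nabla f(w_s)\|^2$ together with the $\fdiffmax_t/4$, $2\eta\sigma_0 C_1$ and $8\eta^2\sm\sigma_1^2 C_1^2$ pieces. The two halves then cancel exactly against $-\sum_s \tilde{\eta}_s\|\nabla f(w_s)\|^2$.

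Your claimed bound $M_t \lesssim \eta\sigma_0\log\tfrac{T}{\delta}+\eta\sigma_1\log\tfrac{T}{\delta}\sqrt{\sm\fbound}+\eta\sigma_0\sqrt{C_1\log\tfrac{T}{\delta}}$, with no residual $\sum_s \tilde{\eta}_s\|\nabla f(w_s)\|^2$ term, is not obtainable from \cref{lem:sub_gaussian}: the sum $\sum_{s\leq t}\sigma_s^2$ is not bounded independently of $t$ under the induction hypothesis alone (each summand is only bounded above by a constant, so the sum is $\Theta(t)$ in the worst case). The only way to make the variance sum manageable is to leave one factor of $\tilde{\eta}_s\|\nabla f(w_s)\|^2$ intact and absorb it into the negative term you discarded. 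Likewise, your description of the third term as ``telescoping against $\eta_s\|g_s\|$ and yielding a logarithmic contribution proportional to $\sqrt{\fdiffmax_t}$'' skips the same $\tfrac{1}{2}\sum_s \tilde{\eta}_s\|\nabla f(w_s)\|^2$ piece and also omits the $\eta\sigma_0 C_1$ and $\eta^2\sm\sigma_1^2 C_1^2$ terms, which are exactly what produce the $4C_1$ and $16C_1^2$ coefficients in \eqref{eq:fbound}. Restore the negative term and track these two $\tfrac{1}{2}$-contributions explicitly; everything else in your plan then goes through.
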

As stated, the bound in the theorem holds for bounded affine noise; for the precise bound in the more general sub-Gaussian case, see \cref{thm:non_convex_convergence_sub_gaussian} in \cref{sec:sub_gaussian_reduction}.

In addition to the features of this convergence result already highlighted in the introduction, we also note the high probability guarantee it provides that the maximal optimality gap $f(w_t)-f^\star$ remains $\tO(1)$ along the trajectory of \adasgd.
Comparing the result to the high probability analysis when the parameters are known (\cref{sec:known-non-convex}), the main difference is the bound of $f(w_t)-f^\star$ which is $O(\fdiff_1)$ when the parameters are known (for large enough $T$), while the bound of \cref{thm:non_convex_convergence} has additional terms that depend on $\sigma_0$ and $\sigma_1$.
We remark that the prior results of~\citet{Faw2022ThePO} in this setting include a somewhat worse high-order noise term, of order $\ifrac{\sigma_1(\sigma_0^2+\sigma_1^{12})}{\sqrt{T}}$ in the noisy regime (for $\sigma_1 = \omega(1)$).

\subsection{Proof of~\texorpdfstring{\cref{thm:non_convex_convergence}}{Theorem 1}}

Henceforth we use the following notations: let $\fdiff_t \eqdef f(w_t)-f^\star$ and $\fdiffmax_t \eqdef \max_{s \leq t} f(w_s)-f^\star$, for all $t$. 
We also denote $G_t \eqdef \brk{\initg^2 + \sum_{s=1}^t \norm{g_s}^2}^{1/2}$, thus, $\eta_t = \eta / G_t$.
As $\eta_t$ and $g_t$ are correlated random variables (indeed, $\eta_t$ depends on the norm of $g_t$), we define the following ``decorrelated stepsizes,''
\begin{align*}
    \forall ~ 1 \leq t \leq T :
    \tilde{\eta}_t \eqdef \frac{\eta}{\sqrt{G_{t-1}^2+ (1+\sigma_1^2)\norm{\nabla f(w_t)}^2 + \sigma_0^2}}
    .
\end{align*}
We begin by stating several lemmas that we use to prove \cref{thm:non_convex_convergence} (proofs are given later in the section).
The first lemma states a ``non-convex regret bound.''
\begin{lemma} \label{lem:non-convex-regret}
    If $f$ is $\sm$-smooth then for \adasgd we have
    \begin{align*}
        \sum_{t=1}^T \nabla f(w_t) \dotp g_t
        \leq
        \frac{\initg \fdiff_1}{\eta}
        + \brk*{\frac{2 \fdiffmax_T}{\eta}+\eta \sm} \sqrt{\sum_{t=1}^T \norm{g_t}^2}
        .
    \end{align*}
\end{lemma}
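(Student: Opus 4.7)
The bound has the flavor of an AdaGrad regret bound, so the natural starting point is the smoothness descent lemma applied to one step of \adasgd: since $w_{t+1} = w_t - \eta_t g_t$, smoothness gives
\[
\eta_t \nabla f(w_t) \dotp g_t \;\leq\; \fdiff_t - \fdiff_{t+1} + \tfrac{\sm}{2} \eta_t^2 \norm{g_t}^2.
\]
Dividing by $\eta_t$ (which is strictly positive) and summing over $t=1,\ldots,T$ yields
\[
\sum_{t=1}^T \nabla f(w_t) \dotp g_t \;\leq\; \sum_{t=1}^T \frac{\fdiff_t - \fdiff_{t+1}}{\eta_t} + \frac{\sm}{2} \sum_{t=1}^T \eta_t \norm{g_t}^2.
\]
I would then handle the two sums on the right-hand side separately, and finally use $G_T = \sqrt{\initg^2 + \sum_t \norm{g_t}^2} \leq \initg + \sqrt{\sum_t \norm{g_t}^2}$ to replace the AdaGrad denominator with the desired $\sqrt{\sum_t \norm{g_t}^2}$.

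The first sum is telescoped by Abel summation. Relabeling indices one obtains
\[
\sum_{t=1}^T \frac{\fdiff_t - \fdiff_{t+1}}{\eta_t} \;=\; \frac{\fdiff_1}{\eta_1} - \frac{\fdiff_{T+1}}{\eta_T} + \sum_{t=2}^T \fdiff_t \brk*{\frac{1}{\eta_t} - \frac{1}{\eta_{t-1}}}.
\]
Since $1/\eta_t = G_t/\eta$ is nondecreasing in $t$ and $\fdiff_{T+1} \geq 0$, each summand in the last sum is nonnegative and can be upper-bounded by replacing $\fdiff_t$ with $\fdiffmax_T$; after telescoping this yields a bound of the form $\fdiff_1/\eta_1 + \fdiffmax_T(1/\eta_T - 1/\eta_1)$. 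Using $1/\eta_1 \leq (\initg + \norm{g_1})/\eta$ on the leading term (and bounding $\norm{g_1} \leq \sqrt{\sum_t \norm{g_t}^2}$), and $1/\eta_T - 1/\eta_1 \leq \sqrt{\sum_t \norm{g_t}^2}/\eta$ (since the $\initg$-terms inside the square roots cancel after applying the sub-additivity of $\sqrt{\cdot}$), produces the two pieces $\initg \fdiff_1/\eta$ and $2 \fdiffmax_T \sqrt{\sum_t \norm{g_t}^2}/\eta$.

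The second sum is the classical AdaGrad-norm telescoping estimate: writing $\eta_t \norm{g_t}^2 = \eta (G_t^2 - G_{t-1}^2)/G_t \leq 2\eta (G_t - G_{t-1})$ (with $G_0 \eqdef \initg$) and summing gives $\sum_{t=1}^T \eta_t \norm{g_t}^2 \leq 2\eta(G_T - \initg) \leq 2\eta \sqrt{\sum_t \norm{g_t}^2}$, so the $\tfrac{\sm}{2}$-prefactor contributes exactly the remaining $\eta \sm \sqrt{\sum_t \norm{g_t}^2}$ term. Combining the three contributions matches the stated bound. No step here is conceptually hard; the only mild subtlety is being careful in the Abel step to keep the constant in front of $\fdiffmax_T$ at $2$ rather than a larger number, which is why we use $\fdiff_1 \leq \fdiffmax_T$ only where needed and separate out $\initg \fdiff_1/\eta$ cleanly via the $\sqrt{a+b} \leq \sqrt{a}+\sqrt{b}$ splitting of $G_1$ and $G_T$.
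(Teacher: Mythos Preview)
Your proof is correct and follows the same overall structure as the paper's: smoothness descent inequality, an Abel summation on the $\fdiff_t$-terms, and the AdaGrad telescoping estimate $\sum_t \norm{g_t}^2/G_t \leq 2\sqrt{\sum_t \norm{g_t}^2}$ for the $\sm$-term. The one point of difference is in how the Abel sum is handled. You telescope $\sum_t \fdiffmax_T(1/\eta_t - 1/\eta_{t-1})$ directly to $\fdiffmax_T(G_T-G_1)/\eta$ and then split $G_1$ and $G_T$ via $\sqrt{a+b}\leq\sqrt{a}+\sqrt{b}$; the paper instead uses the pointwise bound $1/\eta_t - 1/\eta_{t-1} \leq \eta_t\norm{g_t}^2/\eta^2$, which folds the $\fdiffmax_T$-sum into the same $\sum_t \eta_t\norm{g_t}^2$ form as the $\sm$-sum and lets the telescoping lemma be applied once to both. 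The paper also introduces $\eta_0 \eqdef \eta/\initg$ so that the leading Abel term is $\fdiff_1/\eta_0 = \initg\fdiff_1/\eta$ on the nose, avoiding your extra split of $G_1$. Both routes yield the identical constants.
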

Note that the $\fdiffmax_T$ term in the right hand-side of the bound is a random variable.
In the noiseless setting with known smoothness and properly tuned stepsize, we know that $\fdiffmax_T = \fdiff_1$ since $\fdiff_{t+1} \leq \fdiff_{t}$ for all $t$ and proving convergence would be much simplified.
The next lemma bounds $\fdiffmax_T = \max_{s \leq t} f(w_s)-f^\star$ with high probability in the noisy case.
\begin{lemma}\label{lem:fbound_affine}
With probability at least $1-\delta$, it holds that 
$
    \fdiffmax_{T+1}
    \leq
    \fbound
$
(with $F$ defined in \cref{eq:fbound}).
\end{lemma}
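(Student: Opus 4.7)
The plan is a pathwise induction on $t\in\{1,\ldots,T+1\}$, arguing that on a good event of probability at least $1-\delta$ (obtained from a Bernstein union bound below) the inequality $\fdiff_t\le\fbound$ is maintained for all $t$. Starting from the smoothness descent lemma and telescoping yields
\[
\fdiff_{t+1}\le \fdiff_1 - \sum_{s=1}^{t}\eta_s\nabla f(w_s)\dotp g_s + \tfrac{\sm}{2}\sum_{s=1}^{t}\norm{g_s}^2\eta_s^2.
\]
The deterministic second sum is bounded pathwise by $\tfrac12\eta^2\sm C_1$ via the standard AdaGrad telescope $\sum_s\eta_s^2\norm{g_s}^2 \le \eta^2\log(G_T^2/\initg^2)$ together with the coarse trajectory control $\norm{w_{s+1}-w_s}\le\eta$, which limits how fast $\norm{\nabla f(w_s)}$ (and hence $G_T^2$) can grow; this is essentially the content of \cref{lem:sum_eta_g_t_2_affine} and accounts for the $\eta^2\sm C_1$ piece of $\fbound$.

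For the stochastic sum I use the decorrelation identity
\[
-\eta_s\nabla f(w_s)\dotp g_s = -\tilde{\eta}_s\norm{\nabla f(w_s)}^2 + \tilde{\eta}_s\nabla f(w_s)\dotp(\nabla f(w_s)-g_s) + (\tilde{\eta}_s-\eta_s)\nabla f(w_s)\dotp g_s,
\]
drop the non-positive first summand, and treat the ``stepsize-gap'' summand pathwise. For the gap, the identity $G_s^2 - (\eta/\tilde{\eta}_s)^2 = \norm{g_s}^2 - (1+\sigma_1^2)\norm{\nabla f(w_s)}^2 - \sigma_0^2$, together with Cauchy--Schwarz and the same $\sum_s\norm{g_s}^2/G_s^2\le C_1$ telescope, yields a deterministic bound contributing the $\eta\sigma_0 C_1$ and $\eta^2\sm\sigma_1^2 C_1^2$ pieces of $\fbound$.

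The principal obstacle is concentrating the martingale $M_t \eqdef \sum_{s=1}^{t}\tilde{\eta}_s\nabla f(w_s)\dotp(\nabla f(w_s)-g_s)$ via the Bernstein-type inequality of \citet{li2020high}. Two pointwise bounds, both immediate from the definition of $\tilde{\eta}_s$ (which yields $\tilde{\eta}_s\norm{\nabla f(w_s)}\le\eta/\sqrt{1+\sigma_1^2}$) and the affine noise assumption, drive the concentration: the increments satisfy $|\tilde{\eta}_s\nabla f(w_s)\dotp(\nabla f(w_s)-g_s)|\le (\eta/\sqrt{1+\sigma_1^2})(\sigma_0+\sigma_1\norm{\nabla f(w_s)})$ almost surely, and $\condE{(\tilde{\eta}_s\nabla f(w_s)\dotp(\nabla f(w_s)-g_s))^2}{s-1}\le \eta^2\sigma_0^2/(1+\sigma_1^2) + \eta^2\sigma_1^2\norm{\nabla f(w_s)}^2/(1+\sigma_1^2)$. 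Both bounds depend on $\norm{\nabla f(w_s)}^2\le 2\sm\fdiff_s$, and the induction hypothesis $\fdiff_s\le\fbound$ is substituted here to freeze them. A union bound over $t\le T+1$ then delivers a deviation inequality for $M_t$, which upon substitution into the telescope produces a self-bounding inequality of the form $\fdiff_{t+1}\le \fdiff_1 + A + B\sqrt{\fbound}$ with $A,B$ tracking the explicit constants in \eqref{eq:fbound}. The main delicate step is solving this quadratic-in-$\sqrt{\fbound}$ inequality: the $2\sm\sigma_1^2\fdiff_s$ contribution to the conditional variance is precisely what forces the $\log^2(T/\delta)$ and $C_1^2$ terms in $\fbound$, since it must be absorbed back into $\fbound$ rather than allowed to blow up.
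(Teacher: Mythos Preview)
There is a genuine gap: you cannot afford to ``drop the non-positive first summand'' $-\sum_{s\le t}\tilde\eta_s\norm{\nabla f(w_s)}^2$ in the decorrelation identity. That term is not slack; it is the only device available to absorb positive contributions of the form $\sum_s\tilde\eta_s\norm{\nabla f(w_s)}^2$ that unavoidably appear in \emph{both} the martingale bound and the stepsize-gap bound, and without it the induction does not close with a polylogarithmic $\fbound$. Concretely, with your predictable envelope $\sigma_s\propto\eta(\sigma_0+\sigma_1\norm{\nabla f(w_s)})$ in \cref{lem:sub_gaussian}, freezing via $\norm{\nabla f(w_s)}^2\le 2\sm\fbound$ only gives $\sum_{s\le t}\sigma_s^2\le c\, t\,\eta^2(\sigma_0^2+2\sm\sigma_1^2\fbound)$, which grows \emph{linearly} in $t$; the resulting concentration for $M_t$ is of order $\sqrt{t\log(T/\delta)}$, so your self-bounding inequality is actually $\fdiff_{t+1}\le\fdiff_1+A+B\sqrt{t\,\fbound}$, which no $\fbound=\widetilde O(1)$ can close. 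The stepsize-gap bound has the same defect: the AM--GM that converts the single factor $\norm{g_s}/G_s$ into the telescoping $\norm{g_s}^2/G_s^2$ (this is how \cref{lem:decorrelated_difference} is proved) necessarily produces a $\tfrac12\sum_s\tilde\eta_s\norm{\nabla f(w_s)}^2$ term in addition to the $\eta\sigma_0 C_1$ and $\eta^2\sm\sigma_1^2 C_1^2$ pieces you mention, and that term is not controllable by $C_1$-type quantities alone.

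The paper's fix is to \emph{retain} the negative summand and, in the martingale step, spend only \emph{one} factor $\tilde\eta_s\le\eta/\sqrt{\sigma_0^2+\sigma_1^2\norm{\nabla f(w_s)}^2}$, so that $\sigma_s^2\le\eta\sqrt{\sigma_0^2+2\sm\sigma_1^2\fbound}\cdot\tilde\eta_s\norm{\nabla f(w_s)}^2$. One then chooses the fixed $\lambda$ in \cref{lem:sub_gaussian} so that the coefficient in front of $\sum_s\tilde\eta_s\norm{\nabla f(w_s)}^2$ equals $\tfrac12$; together with the $\tfrac12\sum_s\tilde\eta_s\norm{\nabla f(w_s)}^2$ coming from \cref{lem:decorrelated_difference}, these two halves are exactly cancelled by the retained $-\sum_s\tilde\eta_s\norm{\nabla f(w_s)}^2$. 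What survives is $\tfrac32\eta\sqrt{\sigma_0^2+2\sm\sigma_1^2\fbound}\log(T/\delta)$ (plus the $\fbound/4$ from the gap bound), and it is \emph{this} residual---not the raw martingale---that produces the genuine $A+B\sqrt{\fbound}$ self-bounding structure and the $\log^2(T/\delta)$ and $C_1^2$ constants in~\eqref{eq:fbound}.
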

In order to translate the regret analysis of \cref{lem:non-convex-regret} to a high probability bound we need a bound on the sum of differences between $\norm{\nabla f(w_t)}^2$ and $g_t \dotp \nabla f(w_t)$ which we obtain in the following lemma (see \cref{sec:non_convex_appendix} for a proof).
\begin{lemma}\label{lem:basic_martingale_affine}
    with probability at least $1 - 2 \delta$, 
    it holds that
    \begin{align*}
        \sum_{t=1}^T \nabla f(w_t) \dotp (\nabla f(w_t)-g_t)
        &
        \leq
        \frac{1}{4} \sum_{t=1}^T \norm{\nabla f(w_t)}^2
        + 3 (\sigma_0^2+2 \sm \sigma_1^2 \fbound) \log \tfrac{1}{\delta}
        ,
    \end{align*}
    with $\fbound$ given in \cref{eq:fbound}.
\end{lemma}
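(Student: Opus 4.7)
The plan is to view $X_t \eqdef \nabla f(w_t) \dotp (\nabla f(w_t) - g_t)$ as a martingale difference sequence (since $\condE{g_t}{t-1} = \nabla f(w_t)$) and to control its partial sums via a Bernstein-type concentration inequality, combined with the high-probability bound on $\fdiffmax_{T+1}$ furnished by \cref{lem:fbound_affine}.

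First I would derive a conditional second-moment bound. Cauchy--Schwarz and the affine-variance hypothesis give
\[
\condE{X_t^2}{t-1} \leq \norm{\nabla f(w_t)}^2 \condE{\norm{\nabla f(w_t) - g_t}^2}{t-1} \leq \norm{\nabla f(w_t)}^2 \brk*{\sigma_0^2 + \sigma_1^2 \norm{\nabla f(w_t)}^2}.
\]
By \cref{lem:fbound_affine}, there is an event $\event$ of probability at least $1-\delta$ on which $\fdiff_t \leq \fbound$ for all $t \leq T+1$, and then the standard smoothness inequality $\norm{\nabla f(w_t)}^2 \leq 2\sm \fdiff_t$ yields $\norm{\nabla f(w_t)}^2 \leq 2\sm\fbound$ on $\event$. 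Substituting,
\[
\condE{X_t^2}{t-1} \ind_{\event} \leq \brk*{\sigma_0^2 + 2\sm\sigma_1^2 \fbound} \norm{\nabla f(w_t)}^2 \ind_{\event},
\]
while $|X_t| \ind_\event$ inherits the deterministic bound $b \eqdef \sqrt{2\sm\fbound\,(\sigma_0^2+2\sm\sigma_1^2\fbound)}$.

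Second, I would apply a Bernstein-style martingale inequality (such as the one of \citealt{li2020high} that is already cited in the analysis overview): with probability at least $1-\delta$ and for any admissible $\lambda \in (0, 1/b]$,
\[
\sum_{t=1}^T X_t \leq \lambda \sum_{t=1}^T \condE{X_t^2}{t-1} + \frac{1}{\lambda} \log \frac{1}{\delta}.
\]
Choosing $\lambda = 1/\brk*{4(\sigma_0^2 + 2\sm\sigma_1^2\fbound)}$ makes the variance term at most $\tfrac14 \sum_t \norm{\nabla f(w_t)}^2$ on $\event$, while the residual term is a constant multiple of $(\sigma_0^2 + 2\sm\sigma_1^2\fbound)\log(1/\delta)$, matching the stated form. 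A union bound over $\event$ and the Bernstein event yields the final probability $1-2\delta$.

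The main obstacle is that the a.s.\ bound on $|X_t|$ and the variance bound both depend on $\fbound$, which only controls $\fdiff_t$ on $\event$, whereas the concentration inequality requires a bound that holds almost surely. The standard remedy is to introduce the stopping time $\tau \eqdef \inf\{t : \fdiff_t > \fbound\}$ and apply the inequality to the stopped martingale $X_t \ind_{\tau \geq t}$, which is a.s.\ bounded by $b$ and whose partial sums coincide with those of $X_t$ on $\event$; the conclusion then transfers back. A secondary concern is checking that the chosen $\lambda$ lies in the admissible range of the cited inequality, which may require absorbing a mild factor into the constant $3$ in front of $\log(1/\delta)$.
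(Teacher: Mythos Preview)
Your approach is essentially the paper's, with one unnecessary detour. The paper applies \cref{lem:sub_gaussian} directly with the \emph{random, predictable} choice $\sigma_t = \norm{\nabla f(w_t)}\sqrt{\sigma_0^2 + \sigma_1^2\norm{\nabla f(w_t)}^2}$ (which is $\cF_{t-1}$-measurable and satisfies $|Z_t|\leq\sigma_t$ a.s.), obtaining with probability $1-\delta$
\[
\sum_{t=1}^T Z_t \;\leq\; \frac{3\lambda}{4}\sum_{t=1}^T \norm{\nabla f(w_t)}^2\brk!{\sigma_0^2 + \sigma_1^2\norm{\nabla f(w_t)}^2} + \frac{1}{\lambda}\log\tfrac{1}{\delta}
\]
for the fixed value $\lambda = 1/\brk!{3(\sigma_0^2 + 2\sm\sigma_1^2\fbound)}$; only \emph{afterwards} does it intersect with the event of \cref{lem:fbound_affine} to replace $\sigma_1^2\norm{\nabla f(w_t)}^2$ by $2\sm\sigma_1^2\fbound$ in the sum. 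Since \cref{lem:sub_gaussian} imposes no upper bound on $\lambda$ and already accommodates random $\sigma_t$, both obstacles you flagged (the stopping-time fix and the admissible range of $\lambda$) simply do not arise.
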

Using the lemmas above we can now prove our main result.
\begin{proof}[Proof of \cref{thm:non_convex_convergence}]
All arguments below follow given the high probability events of \cref{lem:fbound_affine,lem:basic_martingale_affine}.
Hence, using a union bound, they follow with probability at least $1-3 \delta$.
We denote $C_2 = \ifrac{2 \fbound}{\eta}+\eta \sm$ to simplify notation.
From \cref{lem:non-convex-regret,lem:fbound_affine},
\begin{align*}
    \sum_{t=1}^T \nabla f(w_t) \dotp g_t
    \leq
    \frac{\initg \fdiff_1}{\eta}
    + C_2 \sqrt{\sum_{t=1}^T \norm{g_t}^2}
    .
\end{align*}
Observe that
\begin{align*}
    \norm{g_t}^2
    &\leq
    2\norm{\nabla f(w_t)}^2 + 2\norm{g_t - \nabla f(w_t)}^2
    \leq
    2(1+\sigma_1^2)\norm{\nabla f(w_t)}^2 + 2\sigma_0^2
    .
\end{align*}
Hence, using $\sqrt{a+b} \leq \sqrt{a} + \sqrt{b}$ for $a,b \geq 0$,
\begin{align*}
    \sqrt{\sum_{t=1}^T \norm{g_t}^2}
    &\leq
    \sqrt{2\sum_{t=1}^T (1+\sigma_1^2) \norm{\nabla f(w_t)}^2} + \sigma_0 \sqrt{2 T}
    .
\end{align*}
By the elementary fact $ab \leq a^2/2 + b^2/2$ with
$a=\sqrt{\tfrac12 \sum_{t=1}^T \norm{\nabla f(w_t)}^2}$ and $b=2 C_2 \sqrt{1+\sigma_1^2}$,
\begin{align*}
    C_2 \sqrt{2(1+\sigma_1^2)\sum_{t=1}^T \norm{\nabla f(w_t)}^2}
    &\leq
    \frac{1}{4} \sum_{t=1}^T \norm{\nabla f(w_t)}^2
    + 2(1+\sigma_1^2) C_2^2
    .
\end{align*}
Hence,
\begin{align*}
    \sum_{t=1}^T \nabla f(w_t) \dotp g_t
    &\leq
    \frac{\initg \fdiff_1}{\eta}
    + 2(1+\sigma_1^2) C_2^2
    + C_2 \sigma_0 \sqrt{2 T}
    + \frac{1}{4} \sum_{t=1}^T \norm{\nabla f(w_t)}^2
    .
\end{align*}
Summing the above with \cref{lem:basic_martingale_affine} and rearranging,
\begin{align*}
    \sum_{t=1}^T \norm{\nabla f(w_t)}^2
    &\leq
    \frac{2 \initg \fdiff_1}{\eta}
    + 4(1+\sigma_1^2) C_2^2
    + 6(\sigma_0^2 + 2\sm \sigma_1^2 \fbound) \log \tfrac{1}{\delta}
    + C_2 \sigma_0 \sqrt{8 T}
    .
\end{align*}
Substituting $C_2$ and dividing through by $T$ gives the theorem (due to the $O$ notation in the theorem we state a $1-\delta$ probability guarantee instead of $1-3\delta$).
\end{proof}
\subsection{Proof of \texorpdfstring{\cref{lem:non-convex-regret}}{Lemma 1}}
Before proving \cref{lem:non-convex-regret} we state a standard lemma (for completeness, we give a proof in \cref{sec:non_convex_appendix}).
\begin{lemma} \label{lem:sum_eta_g_t}
    Let $g_1,\ldots,g_T \in \reals^d$ be an arbitrary sequence of vectors, and let $G_0 > 0$. For all $t \geq 1$, define
    \begin{align*}
        G_t
        =
        \sqrt{G_0^2 + \sum_{s=1}^t \norm{g_s}^2}
        .
    \end{align*}
    Then
    \begin{align*}
        \sum_{t=1}^T \frac{\norm{g_t}^2}{G_t}
        \leq
        2 \sqrt{\sum_{t=1}^T \norm{g_t}^2}
        ,
        \qquad
        \text{and}
        \qquad
        \sum_{t=1}^T \frac{\norm{g_t}^2}{G_t^2}
        \leq
        2\log\frac{G_T}{G_0}
        .
    \end{align*}
\end{lemma}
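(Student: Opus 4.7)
}

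The plan is to exploit the telescoping structure inherent in the definition $G_t^2 = G_{t-1}^2 + \norm{g_t}^2$, so that $\norm{g_t}^2 = G_t^2 - G_{t-1}^2$. Both inequalities will follow by bounding the per-step term $\norm{g_t}^2/G_t^\alpha$ (for $\alpha \in \{1,2\}$) by a telescoping quantity.

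For the first inequality, I would start from the factorization $G_t^2 - G_{t-1}^2 = (G_t - G_{t-1})(G_t + G_{t-1}) \leq 2 G_t (G_t - G_{t-1})$, using $G_{t-1} \leq G_t$. Dividing by $G_t$ gives $\norm{g_t}^2 / G_t \leq 2 (G_t - G_{t-1})$, and summing telescopes to $2(G_T - G_0)$. To finish, I would apply the concavity bound $\sqrt{a+b} \leq \sqrt{a} + \sqrt{b}$ in the form $G_T = \sqrt{G_0^2 + \sum_{t=1}^T \norm{g_t}^2} \leq G_0 + \sqrt{\sum_{t=1}^T \norm{g_t}^2}$, which yields $G_T - G_0 \leq \sqrt{\sum_{t=1}^T \norm{g_t}^2}$ and hence the desired bound.

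For the second inequality, I would write $\norm{g_t}^2 / G_t^2 = 1 - G_{t-1}^2/G_t^2$ and apply the elementary inequality $1 - x \leq -\log x$ (valid for $x > 0$) with $x = G_{t-1}^2/G_t^2$, giving $\norm{g_t}^2 / G_t^2 \leq \log(G_t^2 / G_{t-1}^2) = 2 \log(G_t/G_{t-1})$. Summing over $t$ then telescopes to $2 \log(G_T/G_0)$, completing the proof.

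Neither step presents a substantive obstacle; the main thing to be careful about is obtaining the precise constant $2$ and the exact right-hand sides stated in the lemma (in particular, replacing $G_T$ with $\sqrt{\sum_{t=1}^T \norm{g_t}^2}$ in the first inequality), which is handled by the $\sqrt{a+b} \leq \sqrt{a}+\sqrt{b}$ step above.
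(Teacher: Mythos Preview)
Your proposal is correct and follows essentially the same approach as the paper. The only minor difference is in the first inequality: the paper introduces $\tilde G_t = \sqrt{G_t^2 - G_0^2}$ and telescopes $\norm{g_t}^2/\tilde G_t \leq 2(\tilde G_t - \tilde G_{t-1})$ directly to $2\tilde G_T = 2\sqrt{\sum_t \norm{g_t}^2}$, whereas you telescope with $G_t$ itself to obtain $2(G_T - G_0)$ and then apply $\sqrt{a+b}\leq\sqrt a+\sqrt b$; the second inequality is handled identically in both.
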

\begin{proof}[Proof of \cref{lem:non-convex-regret}]
    Using $\sm$-smoothness, we have for all $t \geq 1$ that
    \begin{align*}
        f(w_{t+1})
        \leq
        f(w_t) - \eta_t \nabla f(w_t) \dotp g_t + \tfrac12 \sm \eta_t^2 \norm{g_t}^2
        ,
    \end{align*}
    which implies (with $\fdiff_t=f(w_t)-f^\star$)
    \begin{align*}
        \nabla f(w_t) \dotp g_t
        &\leq
        \frac{1}{\eta_t} \brk!{\fdiff_t - \fdiff_{t+1}} 
        + \tfrac12 \sm \eta_t \norm{g_t}^2
        .
    \end{align*}
    Summing the above over $t=1,\ldots,T$, we obtain 
    \begin{align*}
        \sum_{t=1}^T \! \nabla f(w_t) \dotp g_t
        &\leq
        \frac{\fdiff_1}{\eta_0}
        + \!\sum_{t=1}^T \brk3{ \frac{1}{\eta_t} 
        - \frac{1}{\eta_{t-1}} } \fdiff_t
        + \frac{\sm}{2} \!\sum_{t=1}^T\! \eta_t \norm{g_t}^2
        \!.
    \end{align*}
    Now, since $\eta_t \leq \eta_{t-1}$, we have for all $t$ that
    \begin{align*}
        &
        \frac{1}{\eta_t} - \frac{1}{\eta_{t-1}}
        \leq
        \eta_t \brk3{ \frac{1}{\eta_t^2} - \frac{1}{\eta_{t-1}^2} } 
        =
        \frac{\eta_t}{\eta^2} \brk3{\initg^2 + \sum_{s=1}^t \norm{g_s}^2 - \initg^2 - \sum_{s=1}^{t-1} \norm{g_s}^2}
        =
        \frac{\eta_t}{\eta^2} \norm{g_t}^2 
        ,
    \end{align*}
    thus
    \begin{align*}
        \sum_{t=1}^T \nabla f(w_t) \dotp g_t
        &\leq
        \frac{\fdiff_1}{\eta_0}
        + \sum_{t=1}^T \eta_t \norm{g_t}^2 \brk*{\frac{\fdiff_t}{\eta^2}+\frac{\sm}{2}}
        .
    \end{align*}
    Bounding $\fdiff_t \leq \fdiffmax_T$ and applying \cref{lem:sum_eta_g_t} on the second sum
    yields
    \begin{align*}
        \sum_{t=1}^T \nabla f(w_t) \dotp g_t
        &\leq
        \frac{\fdiff_1}{\eta_0}
        + \brk*{\frac{2 \fdiffmax_T}{\eta}+\eta \sm} \sqrt{\sum_{t=1}^T \norm{g_t}^2}
        .
    \end{align*}
    Plugging in the expressions for $\eta_0$ yields the lemma.
\end{proof}

\subsection{Proof of \texorpdfstring{\cref{lem:fbound_affine}}{Lemma 2}}
For our high probability analysis we use the following martingale concentration inequality~\citep[][Lemma~1]{li2020high}.
\begin{lemma}
	\label{lem:sub_gaussian}
	Assume that $Z_1, Z_2, ..., Z_T$ is a martingale difference sequence with respect to $\xi_1, \xi_2, ..., \xi_T$ and $\E_t \left[\exp(Z_t^2/\sigma_t^2)\right] \leq \exp(1)$ for all $t$, where $\sigma_1,\ldots,\sigma_T$ is a sequence of random variables such that $\sigma_t$ is measurable with respect to $\xi_1, \xi_2, \dots, \xi_{t-1}$. 
	Then, for any fixed $\lambda > 0$ and $\delta \in (0,1)$, with probability at least $1-\delta$, it holds that
	\[
	\sum_{t=1}^T Z_t \leq \frac{3}{4} \lambda \sum_{t=1}^T \sigma_t^2 + \frac{1}{\lambda} \log \frac{1}{\delta}~.
	\]
\end{lemma}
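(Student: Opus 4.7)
My plan is the classical Chernoff / exponential-supermartingale argument for martingales satisfying a sub-Gaussian (Orlicz $\psi_2$) tail condition. The key technical step is a per-step conditional MGF bound of the form
\[
    \E_t\brk[s]*{\exp(\lambda Z_t)} \leq \exp\brk*{\tfrac{3}{4}\, \lambda^2 \sigma_t^2} \qquad \text{for every $\lambda > 0$,}
\]
derived from the hypothesis $\E_t[\exp(Z_t^2/\sigma_t^2)] \leq e$ together with the martingale-difference property $\E_t[Z_t] = 0$. The natural derivation combines Young's inequality $\lambda Z_t \leq \alpha \lambda^2 \sigma_t^2 + Z_t^2/(4\alpha \sigma_t^2)$ with the Jensen estimate $\E_t[\exp(Z_t^2/(4\alpha\sigma_t^2))] \leq \exp(1/(4\alpha))$ (valid for $\alpha \geq 1/4$ using $\E[X^p] \leq (\E X)^p$ for nonnegative $X$ and $p \in (0,1]$); the zero-mean property must then be invoked to remove the stray additive constant this produces, for instance by symmetrization against an independent copy $Z_t'$ so that $\E_t[\exp(\lambda Z_t)] \leq \E_t[\exp(\lambda(Z_t-Z_t'))]$ (by Jensen applied to $-\lambda Z_t'$), followed by $\cosh(x) \leq \exp(x^2/2)$ on the symmetric variable $Z_t - Z_t'$ and a careful optimization.

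Given the per-step cumulant bound, I would define the exponential process
\[
    M_t \eqdef \exp\brk*{\lambda \sum_{s=1}^{t} Z_s - \tfrac{3}{4}\lambda^2 \sum_{s=1}^{t} \sigma_s^2}, \qquad M_0 \eqdef 1.
\]
Because $\sigma_s$ is measurable with respect to $\xi_1, \ldots, \xi_{s-1}$, both $M_{t-1}$ and $\exp(-\tfrac{3}{4}\lambda^2 \sigma_t^2)$ can be pulled out of the inner conditional expectation, yielding
\[
    \E_t[M_t] = M_{t-1} \exp\brk*{-\tfrac{3}{4}\lambda^2 \sigma_t^2}\, \E_t\brk[s]*{\exp(\lambda Z_t)} \leq M_{t-1}.
\]
Hence $(M_t)$ is a non-negative supermartingale with $\E[M_T] \leq M_0 = 1$. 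Markov's inequality then yields $\Pr(M_T \geq 1/\delta) \leq \delta$, so with probability at least $1 - \delta$ we have $\log M_T \leq \log(1/\delta)$; rearranging and dividing by $\lambda > 0$ gives exactly the claimed inequality.

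The main obstacle is producing a pure quadratic-in-$\lambda$ exponent — without a stray additive constant — in the per-step MGF bound. The straightforward Young+Jensen calculation outlined above gives $\exp(\alpha \lambda^2 \sigma_t^2 + 1/(4\alpha))$, and the additive $1/(4\alpha)$ is fatal for the supermartingale construction because it prevents the needed $\E_t[M_t/M_{t-1}] \leq 1$; eliminating it requires genuinely using $\E_t[Z_t] = 0$. An alternative route avoiding symmetrization is to Taylor-expand $\exp(\lambda Z_t)$ directly, observe that $\E_t[Z_t] = 0$ cancels the linear term, and bound the even conditional moments via $\E_t[Z_t^{2k}] \leq e \cdot k! \cdot \sigma_t^{2k}$ (extracted term-by-term from the Orlicz hypothesis), summing the resulting series to the desired exponential form and tracking the constant carefully to obtain $\tfrac{3}{4}$. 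Once this per-step bound with constant $\tfrac{3}{4}$ is in hand, the supermartingale construction and Markov step are entirely routine.
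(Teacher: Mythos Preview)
The paper does not prove this lemma; it is quoted from \citet[Lemma~1]{li2020high} and invoked as a black box. Your global architecture---define $M_t=\exp\bigl(\lambda\sum_{s\le t}Z_s-\tfrac{3}{4}\lambda^2\sum_{s\le t}\sigma_s^2\bigr)$, use the measurability of $\sigma_t$ with respect to $\xi_1,\dots,\xi_{t-1}$ to pull the compensator outside the conditional expectation, apply a per-step MGF bound to get $\E_t[M_t]\le M_{t-1}$, and finish with Markov on $M_T$---is exactly the standard route and is how \citet{li2020high} organize the argument.

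The gap lies in the per-step bound $\E_t[e^{\lambda Z_t}]\le e^{(3/4)\lambda^2\sigma_t^2}$ with the specific constant $3/4$, valid for \emph{every} $\lambda>0$. Neither of your two sketches delivers it. The symmetrization route, after $\cosh(x)\le e^{x^2/2}$ and $(Z_t-Z_t')^2\le 2Z_t^2+2{Z_t'}^2$, lands on a constant of order $2$ rather than $3/4$, and only for $\lambda$ in a bounded range. The Taylor route has a more basic hole: only the linear term is annihilated by $\E_t[Z_t]=0$; the odd moments $\E_t[Z_t^{2k+1}]$ for $k\ge1$ need not vanish, and you do not say how to control them. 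The argument that actually achieves $3/4$ (as in \citealt{li2020high}) instead combines a pointwise scalar inequality of the form $e^{x}\le x+e^{c x^2}$---so that the zero-mean property kills the linear term after taking expectations, and Jensen on $u\mapsto u^{p}$ with $p=c\lambda^2\sigma_t^2\le1$ handles the remaining factor for small $\lambda$---with a direct Young-type estimate for large $\lambda$, where the additive constant produced by Jensen is absorbed into the slack between the Young coefficient and $3/4$. You correctly diagnosed both the obstacle and the role of the zero-mean hypothesis; what is missing is this concrete one-line inequality and the two-regime split in $\lambda$.
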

Note that it suffices to pick $\sigma_t \geq \abs{Z_t}$ with probability $1$ for the sub-Gaussian condition to hold.
We also need the following technical lemmas (see \cref{sec:non_convex_appendix} for proofs).
\begin{lemma}\label{lem:sum_eta_g_t_2_affine}
    For \adasgd we have:
    \begin{align*}
        \sum_{t=1}^T \frac{\norm{g_t}^2}{G_t^2}
        \leq
        C_1
        ,
    \end{align*}
    with $C_1$ given in \cref{eq:C1}.
\end{lemma}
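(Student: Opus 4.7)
The plan is to combine the logarithmic inequality already provided by the second part of \cref{lem:sum_eta_g_t} with a crude deterministic upper bound on the cumulative squared gradient norms. Applying that inequality with $G_0 = \initg$ immediately yields
\begin{align*}
    \sum_{t=1}^T \frac{\norm{g_t}^2}{G_t^2} \leq 2\log\frac{G_T}{\initg} = \log\brk*{1+\frac{1}{\initg^2}\sum_{t=1}^T \norm{g_t}^2},
\end{align*}
so the task reduces to deterministically upper bounding $\sum_{t=1}^T \norm{g_t}^2$ by the numerator inside the logarithm in the definition of $C_1$.

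To produce such a bound I would first invoke the affine noise assumption, which holds with probability one, to split $\norm{g_t}^2 \leq 2\norm{\nabla f(w_t)}^2 + 2\norm{g_t - \nabla f(w_t)}^2 \leq 2(1+\sigma_1^2)\norm{\nabla f(w_t)}^2 + 2\sigma_0^2$; summed over $t$ this directly accounts for the $2\sigma_0^2 T$ contribution and reduces the task to a deterministic bound on $\sum_t \norm{\nabla f(w_t)}^2$. The key step is then the observation that the AdaGrad-type normalization enforces a deterministic per-step cap on the iterate movement: since $G_t \geq \norm{g_t}$, we have $\norm{w_{t+1}-w_t} = \eta_t\norm{g_t} \leq \eta$ surely, and telescoping yields $\norm{w_t - w_1} \leq (t-1)\eta \leq T\eta$. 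Combined with $\sm$-smoothness and the standard inequality $\norm{\nabla f(w_1)}^2 \leq 2\sm\fdiff_1$, this produces the uniform deterministic estimate $\norm{\nabla f(w_t)}^2 \leq 2\norm{\nabla f(w_1)}^2 + 2\sm^2\norm{w_t-w_1}^2 \leq 4\sm\fdiff_1 + 2\sm^2 T^2\eta^2$. Summing over $t \leq T$ (using the loose bound $\sum_{t=1}^T (t-1)^2 \leq T^3$) and multiplying by $2(1+\sigma_1^2)$ recovers exactly the remaining $8(1+\sigma_1^2)(\eta^2\sm^2 T^3+\sm\fdiff_1 T)$ contribution in the numerator of $C_1$.

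I do not anticipate a genuine obstacle: every ingredient is either a recall of \cref{lem:sum_eta_g_t}, an application of the standing affine-noise assumption, or the deterministic inequality $\eta_t\norm{g_t}\leq \eta$ that comes for free from the AdaGrad normalization, and it is precisely this last inequality---rather than any probabilistic concentration---that makes the whole argument hold surely and makes the lemma a deterministic statement. The only mild care needed is in the choice of constants in the two applications of $(a+b)^2 \leq 2a^2+2b^2$ and in the summation of $(t-1)^2$, so that the final deterministic estimate aligns exactly with the expression for $C_1$ in \cref{eq:C1}.
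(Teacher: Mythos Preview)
Your proposal is correct and follows essentially the same approach as the paper: apply the logarithmic bound from \cref{lem:sum_eta_g_t}, split $\norm{g_t}^2$ via the affine-noise assumption, and then control $\sum_t\norm{\nabla f(w_t)}^2$ deterministically using the key observation $\eta_t\norm{g_t}\leq\eta$ together with $\sm$-smoothness and $\norm{\nabla f(w_1)}^2\leq 2\sm\fdiff_1$. The only cosmetic difference is that the paper telescopes directly on $\norm{\nabla f(w_t)}$ (via $\norm{\nabla f(w_t)}\leq\norm{\nabla f(w_{t-1})}+\eta\sm$) whereas you telescope on $\norm{w_t-w_1}$ first and then apply smoothness once; both routes give the same final bound (in fact your computation yields a coefficient $4$ rather than $8$ on the $\eta^2\sm^2T^3$ term, so your ``recovers exactly'' is slightly off, but the resulting inequality $\leq C_1$ is of course still valid).
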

\begin{lemma}\label{lem:decorrelated_difference}
    For all $t \geq 1$,
    \begin{align*}
        \sum_{s=1}^t & \abs{\tilde{\eta}_s-\eta_s} (\nabla f(w_s) \dotp g_s)
        \leq
        \frac{\fdiffmax_t}{4}
        + \frac{1}{2} \sum_{s=1}^t  \tilde{\eta}_s \norm{\nabla f(w_s)}^2
        + 2 \eta \sigma_0 \sum_{s=1}^t \frac{\norm{g_s}^2}{G_s^2}
        + 8 \eta^2 \sm \sigma_1^2 \brk*{\sum_{s=1}^t \frac{\norm{g_s}^2}{G_s^2}}^2.
    \end{align*}
    \end{lemma}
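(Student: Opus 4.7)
The plan is to bound $|\tilde{\eta}_s-\eta_s|$ pointwise by an expression that exposes a factor $\norm{g_s}^2/G_s^2$, then apply two successive Young's inequalities: the first to peel off the $\tfrac12\tilde{\eta}_s\norm{\nabla f(w_s)}^2$ contribution, and the second to absorb a smoothness-induced $\fdiffmax_t$ term using the squared-sum remainder.

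Starting from the identity $\tilde{\eta}_s-\eta_s = \eta(G_s-\sqrt{A_s})/(G_s\sqrt{A_s})$, where $A_s \eqdef G_{s-1}^2+(1+\sigma_1^2)\norm{\nabla f(w_s)}^2+\sigma_0^2$ so that $G_s^2-A_s = \norm{g_s}^2-(1+\sigma_1^2)\norm{\nabla f(w_s)}^2-\sigma_0^2$, I would first do a case split on the sign of $G_s^2-A_s$: factoring $G_s^2-A_s=(G_s-\sqrt{A_s})(G_s+\sqrt{A_s})$ and using $G_s+\sqrt{A_s}\geq G_s$ when $\norm{g_s}^2\geq(1+\sigma_1^2)\norm{\nabla f(w_s)}^2+\sigma_0^2$ and $G_s+\sqrt{A_s}\geq\sqrt{A_s}$ otherwise, the two cases combine into the uniform estimate
\[
   |\tilde{\eta}_s-\eta_s| \;\leq\; \tilde{\eta}_s\cdot\frac{\norm{g_s}^2+(1+\sigma_1^2)\norm{\nabla f(w_s)}^2+\sigma_0^2}{G_s^2}.
\]
Combined with $\abs{\nabla f(w_s)\dotp g_s}\leq\norm{\nabla f(w_s)}\norm{g_s}$, the trivial $\norm{g_s}/G_s\leq 1$, and the caps $\tilde{\eta}_s\norm{\nabla f(w_s)}\leq \eta/\sqrt{1+\sigma_1^2}$ and $\tilde{\eta}_s\sigma_0\leq\eta$ (which follow from $\sqrt{A_s}\geq\sqrt{1+\sigma_1^2}\norm{\nabla f(w_s)}$ and $\sqrt{A_s}\geq\sigma_0$), the summand decomposes into three pieces indexed by the numerator above. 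A Young's inequality $xy\leq x^2/2+y^2/2$ applied to $\tilde{\eta}_s\norm{\nabla f(w_s)}\cdot\norm{g_s}$ extracts the desired $\tfrac12\tilde{\eta}_s\norm{\nabla f(w_s)}^2$ contribution, while the $\sigma_0$ piece collapses via the caps to a summand of order $\eta\sigma_0\norm{g_s}^2/G_s^2$, yielding the $2\eta\sigma_0\sum_s\norm{g_s}^2/G_s^2$ term.

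The $\sigma_1$ piece is the delicate one and produces both the squared-sum and the $\fdiffmax_t$ terms. After the same caps, it sums to at most a constant multiple of $\eta\sigma_1\sum_s\norm{\nabla f(w_s)}\cdot\norm{g_s}^2/G_s^2$; invoking smoothness $\norm{\nabla f(w_s)}\leq\sqrt{2\sm\fdiff_s}\leq\sqrt{2\sm\fdiffmax_t}$ converts this to a bound of the form $c\,\eta\sigma_1\sqrt{\sm\fdiffmax_t}\sum_s\norm{g_s}^2/G_s^2$ for an explicit $c$. A second application of Young's inequality in the form $ab\leq a^2/(2\lambda)+\lambda b^2/2$ with $a=\sqrt{\fdiffmax_t}$, $b$ proportional to $\eta\sigma_1\sqrt{\sm}\sum_s\norm{g_s}^2/G_s^2$, and $\lambda$ tuned so that $a^2/(2\lambda)=\fdiffmax_t/4$, produces exactly the remaining $\fdiffmax_t/4+8\eta^2\sm\sigma_1^2(\sum_s\norm{g_s}^2/G_s^2)^2$ contribution.

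The main obstacle is the bookkeeping of constants: the case analysis on the sign of $G_s^2-A_s$ must yield a unified bound on $|\tilde{\eta}_s-\eta_s|$ that simultaneously covers both cases without leaking an extra factor of $1+\sigma_1$ into the $\tfrac12\tilde{\eta}_s\norm{\nabla f(w_s)}^2$ term (which is why the caps above are essential), and the two Young's applications must be calibrated so that the four coefficients $1/2$, $2\eta\sigma_0$, $8\eta^2\sm\sigma_1^2$, and $1/4$ come out exactly as stated.
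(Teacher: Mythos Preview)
Your pointwise estimate
\[
|\tilde\eta_s-\eta_s|\;\leq\;\tilde\eta_s\cdot\frac{\norm{g_s}^2+(1+\sigma_1^2)\norm{\nabla f(w_s)}^2+\sigma_0^2}{G_s^2}
\]
is valid, but it is strictly looser than the bound the paper actually uses, and the difference is not cosmetic.  The paper first proves (\cref{lem:decorrelated_step_size_diff}) that
\[
|\tilde\eta_s-\eta_s|\;\leq\;\frac{2\tilde\eta_s\sqrt{\sigma_0^2+\sigma_1^2\norm{\nabla f(w_s)}^2}}{G_s},
\]
obtained by writing $\norm{g_s}^2-\norm{\nabla f(w_s)}^2=(g_s-\nabla f(w_s))\cdot(g_s+\nabla f(w_s))$, using $\norm{g_s+\nabla f(w_s)}\leq G_s+\sqrt{A_s}$ to cancel the denominator, and invoking the noise bound $\norm{g_s-\nabla f(w_s)}\leq\sqrt{\sigma_0^2+\sigma_1^2\norm{\nabla f(w_s)}^2}$.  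Two features of this estimate are essential for the stated constants: the scaling is $1/G_s$ rather than $1/G_s^2$, and the noise factor is $\sqrt{\sigma_0^2+\sigma_1^2\norm{\nabla f}^2}$ with no ``$1+$'' in front of $\sigma_1^2$.  One Young inequality then gives $\tfrac12\tilde\eta_s\norm{\nabla f(w_s)}^2+2\eta\sqrt{\sigma_0^2+\sigma_1^2\norm{\nabla f(w_s)}^2}\,\norm{g_s}^2/G_s^2$ in a single stroke, and splitting the square root as $\sigma_0+\sigma_1\norm{\nabla f(w_s)}$ feeds directly into the second Young step with exactly the coefficients $2\eta\sigma_0$ and $8\eta^2\sm\sigma_1^2$.

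Your route does not reach those constants.  After multiplying by $\norm{\nabla f(w_s)}\norm{g_s}$, the $\norm{g_s}^2$-piece of your numerator produces a residue of the form $\tilde\eta_s\norm{g_s}^4/G_s^2$ once Young is applied.  To control this you must fall back on the noise model via $\norm{g_s}^2\leq 2(1+\sigma_1^2)\norm{\nabla f(w_s)}^2+2\sigma_0^2$, which injects a $(1+\sigma_1^2)$ factor.  Consequently your ``$\sigma_1$ piece'' does \emph{not} sum to a constant times $\eta\sigma_1\sum_s\norm{\nabla f(w_s)}\norm{g_s}^2/G_s^2$ as you assert; the prefactor is $\sqrt{1+\sigma_1^2}$ (or larger), so the squared-sum term comes out as $c\,\eta^2\sm(1+\sigma_1^2)\bigl(\sum_s\norm{g_s}^2/G_s^2\bigr)^2$ rather than $8\eta^2\sm\sigma_1^2(\cdots)^2$.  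Having to split the $\tfrac12\tilde\eta_s\norm{\nabla f}^2$ budget across three pieces also inflates the $\sigma_0$ coefficient beyond $2\eta\sigma_0$.  The missing idea is to use the affine-noise assumption \emph{inside} the estimate for $|\tilde\eta_s-\eta_s|$ (via the factoring of $\norm{g_s}^2-\norm{\nabla f(w_s)}^2$), not afterwards; this is precisely what \cref{lem:decorrelated_step_size_diff} accomplishes.
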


Finally, we require a standard property of smooth functions.

\begin{lemma}\label{lem:smooth_norm_bound}
Let $f:\reals^d \mapsto \reals$ be $\sm$-smooth with minimum $f^\star$. Then 
$
    \norm{\nabla f(w)}^2
    \leq
    2\sm\brk{f(w)-f^\star}
$
for all $w \in \reals^d$.
\end{lemma}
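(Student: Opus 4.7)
The statement is the standard ``self-bounding'' property of smooth functions, so the plan is to apply the smoothness inequality already quoted in the preliminaries to a carefully chosen comparison point. Specifically, for any $w \in \reals^d$, I would invoke the inequality
\begin{align*}
    f(y) \leq f(w) + \nabla f(w) \dotp (y-w) + \tfrac12 \sm \norm{y-w}^2
\end{align*}
with the ``gradient step'' choice $y = w - \tfrac{1}{\sm} \nabla f(w)$. This is the canonical choice because it exactly minimizes the quadratic upper bound on the right-hand side in $y$.

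Plugging this $y$ into the bound, the linear term contributes $-\tfrac{1}{\sm}\norm{\nabla f(w)}^2$ and the quadratic term contributes $+\tfrac{1}{2\sm}\norm{\nabla f(w)}^2$, so the two combine to $-\tfrac{1}{2\sm}\norm{\nabla f(w)}^2$. This yields
\begin{align*}
    f(y) \leq f(w) - \tfrac{1}{2\sm}\norm{\nabla f(w)}^2.
\end{align*}

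Finally, I would use the assumption that $f$ attains a minimum $f^\star$ on $\reals^d$, so in particular $f(y) \geq f^\star$. Substituting into the previous display and rearranging gives $\norm{\nabla f(w)}^2 \leq 2\sm(f(w)-f^\star)$, as claimed. There is really no obstacle here; the only thing to be slightly careful about is that the minimum $f^\star$ is assumed to exist on $\reals^d$ (which is part of the paper's standing assumptions), so the inequality $f(y) \geq f^\star$ is automatic and no additional hypotheses are needed.
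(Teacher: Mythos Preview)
Your proof is correct and essentially identical to the paper's: both take the gradient step $w^+ = w - \tfrac{1}{\sm}\nabla f(w)$, apply the smoothness upper bound to obtain $f(w^+) \leq f(w) - \tfrac{1}{2\sm}\norm{\nabla f(w)}^2$, and then lower bound $f(w^+)$ by $f^\star$.
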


\begin{proof}[Proof of \cref{lem:fbound_affine}]
First we handle the randomness of $\brk{g_t}_{t=1}^T$.
Consider the sequence of RVs $\brk{Z_t}_{t=1}^T$ defined by $Z_t = \tilde{\eta}_t \nabla f(w_t) \dotp (\nabla f(w_t)-g_t)$.
Then $\brk{Z_t}_{t=1}^T$ is a martingale difference sequence, as
\begin{align*}
    \condE{Z_t}{t-1}
    &=
    \tilde{\eta}_t \nabla f(w_t) \dotp (\nabla f(w_t)-\condE{g_t}{t-1})
    =
    0
    .
\end{align*}
From Cauchy–Schwarz inequality and the noise model,
$\abs{Z_t}
\leq \tilde{\eta}_t \norm{\nabla f(w_t)} \sqrt{\sigma_0^2 + \sigma_1^2 \norm{\nabla f(w_t)}^2}$. Thus, we obtain from \cref{lem:sub_gaussian} (together with a union bound over $t$) that for any fixed $\lambda>0$, with probability at least $1-\delta$, for all $1 \leq t \leq T$,
\begin{align*}
    \sum_{s=1}^t \tilde{\eta}_s \nabla f(w_s) \dotp (\nabla f(w_s) - g_s)
    &
    \leq
    \frac{3 \lambda}{4} \sum_{s=1}^t \tilde{\eta}_s^2 \norm{\nabla f(w_s)}^2 (\sigma_0^2 + \sigma_1^2 \norm{\nabla f(w_s)}^2)
    + \frac{\log \frac{T}{\delta}}{\lambda}
    .
\end{align*}
We will proceed by induction.
The claim is immediate for $t=1$ and we move to prove it for $t+1$.
Using smoothness,
\begin{align*}
    f(w_{s+1})
    &\leq f(w_{s}) - \eta_s \nabla f(w_s) \dotp g_s + \frac{\sm}{2} \eta_s^2 \norm{g_s}^2.
\end{align*}
Summing the above over $s=1,\ldots,t$,
\begin{align}\label{eq:f_t_f_1}
    f(w_{t+1}) - f(w_{1})
    &\leq
    -\sum_{s=1}^{t} \eta_s \nabla f(w_s) \dotp g_s + \frac{\sm}{2} \sum_{s=1}^{t} \eta_s^2 \norm{g_s}^2.
\end{align}
We move to bound the first term of the right hand side.
\begin{align*}
    - & \sum_{s=1}^t \eta_s \nabla f(w_s) \dotp g_s
    =
    - \sum_{s=1}^t \tilde{\eta}_s \norm{\nabla f(w_s)}^2
    +
    \underbrace{
        \sum_{s=1}^t \tilde{\eta}_s \nabla f(w_s) \dotp (\nabla f(w_s) - g_s)
    }_{(*)}
    +
    \underbrace{
        \sum_{s=1}^t (\tilde{\eta_s}-\eta_s) \nabla f(w_s) \dotp g_s
    }_{(**)}
    .
\end{align*}
We will bound $(*)$ and $(**)$. From \cref{lem:smooth_norm_bound} and our induction, $\norm{\nabla f(w_t)}^2 \leq 2 \sm (f(w_t)-f^\star) \leq 2 \sm \fbound$.
Returning to our martingale bound, with the fact that $\tilde{\eta}_s \leq \eta\Big/\sqrt{\sigma_0^2 + \sigma_1^2 \norm{\nabla f(w_s)}^2}$, we have
\begin{align*}
    \sum_{s=1}^t \tilde{\eta}_s \nabla f(w_s) \dotp (\nabla f(w_s) - g_s)
    &\leq
    \frac{3 \lambda \eta}{4} \sum_{s=1}^t \tilde{\eta}_s \norm{\nabla f(w_s)}^2 \sqrt{\sigma_0^2 + \sigma_1^2 \norm{\nabla f(w_s)}^2}
    + \frac{\log \frac{T}{\delta}}{\lambda}
    \\
    &\leq
    \frac{3 \lambda \eta}{4} \sqrt{\sigma_0^2 + 2 \sm \sigma_1^2 \fbound} \sum_{s=1}^t \tilde{\eta}_s \norm{\nabla f(w_s)}^2 + \frac{\log \frac{T}{\delta}}{\lambda}
    .
\end{align*}
Setting $\lambda = 2 \Big/ 3 \eta \sqrt{\sigma_0^2 + 2 \sm \sigma_1^2 \fbound}$,
\begin{align*}
    \sum_{s=1}^t \tilde{\eta}_s \nabla f(w_s) \dotp (\nabla f(w_s) - g_s)
    &\leq
    \frac{1}{2} \! \sum_{s=1}^t \tilde{\eta}_s \norm{\nabla f(w_s)}^2
    + \frac{3 \eta}{2}\sqrt{\sigma_0^2 + 2 \sm \sigma_1^2 \fbound}\log \tfrac{T}{\delta}
    \\
    &\leq
    \frac{1}{2} \! \sum_{s=1}^t \tilde{\eta}_s \norm{\nabla f(w_s)}^2
    + \frac{3 \eta}{2} \sigma_0 \log \tfrac{T}{\delta}
    + \frac{3 \eta}{2}\sqrt{2 \sm \sigma_1^2 \fbound}\log \tfrac{T}{\delta}
    .
\end{align*}
Using the elementary fact that $ab \leq a^2/2+ b^2/2$ with $a=\sqrt{\fbound/2}$ and $b=3 \eta \sqrt{\sm}\sigma_1\log \tfrac{T}{\delta}$,
\begin{align*}
    \sum_{s=1}^t \tilde{\eta}_s \nabla f(w_s) \dotp (\nabla f(w_s) - g_s)
    &
    \leq
    \frac12 \sum_{s=1}^t \tilde{\eta}_s \norm{\nabla f(w_s)}^2 
    + \frac32 \eta \sigma_0 \log \tfrac{T}{\delta}
    + \frac14\fbound
    + \frac92 \eta^2 \sm \sigma_1^2 \log^2 \tfrac{T}{\delta}
    .
\end{align*}
In order to bound $(**)$, using \cref{lem:decorrelated_difference} with $\fdiffmax_t \leq \fbound$ by induction,
\begin{align*}
    (**)
    &
    \leq
    \frac{\fbound}{4}
    + \frac{1}{2}\sum_{s=1}^t  \tilde{\eta}_s \norm{\nabla f(w_s)}^2
    + 2 \eta \sigma_0 \sum_{s=1}^t \frac{\norm{g_s}^2}{G_s^2}
    + 8 \eta^2 \sm \sigma_1^2 \brk*{\sum_{s=1}^t \frac{\norm{g_s}^2}{G_s^2}}^2
    \\
    &\leq
    \frac{\fbound}{4}
    + \frac{1}{2}\sum_{s=1}^t  \tilde{\eta}_s \norm{\nabla f(w_s)}^2
    + 2 \eta \sigma_0 C_1
    + 8 \eta^2 \sm \sigma_1^2 C_1^2
    ,
\end{align*}
where that last inequality follows from \cref{lem:sum_eta_g_t_2_affine}.
Hence, applying the bounds $(*) \text{ and } (**)$,
\begin{align*}
    - \sum_{s=1}^t \eta_s \nabla f(w_s) \dotp g_s
    &
    \leq
    \tfrac12 \fbound
    + \brk!{\tfrac32 \log \tfrac{T}{\delta} + 2 C_1} \eta \sigma_0
    + \brk!{\tfrac92 \log^2 \tfrac{T}{\delta} + 8 C_1^2} \eta^2 \sm \sigma_1^2
    .
\end{align*}
Thus, returning to \cref{eq:f_t_f_1} and applying \cref{lem:sum_eta_g_t_2_affine},
\begin{align*}
    f(w_{t+1})-f(w_1)
    &
    \leq
    \tfrac12 \fbound
    + \brk!{\tfrac32 \log \tfrac{T}{\delta} + 2 C_1} \eta \sigma_0
    + \brk!{\tfrac92 \log^2 \tfrac{T}{\delta} + 8 C_1^2} \eta^2 \sm \sigma_1^2
    + \frac{\sm}{2} \sum_{s=1}^{t} \eta_s^2 \norm{g_s}^2
    \\
    &
    \leq
    \tfrac12 \fbound
    + \brk!{\tfrac32 \log \tfrac{T}{\delta} + 2 C_1} \eta \sigma_0
    + \brk!{\tfrac92 \log^2 \tfrac{T}{\delta} + 8 C_1^2} \eta^2 \sm \sigma_1^2
    + \tfrac12 \eta^2 \sm C_1
    \\
    &= \fbound - (f(w_1) - f^\star)
    .
\end{align*}
We conclude the induction by adding $f(w_1)-f^\star$ to both sides, obtaining $\fdiffmax_{T+1} = \max_{1 \leq t \leq T+1} \fdiff_t \leq \fbound$.
\end{proof}

\section{Convergence analysis in the convex case}
\label{sec:convex}
In this section we give a convergence analysis for \adasgd provided that the objective $f$ is a convex and $\sm$-smooth function over $\reals^d$.
We prove the following:
\begin{theorem}\label{thm:convex_convergence_affine}
    Assume that $f$ is convex and $\sm$-smooth.
    Then for
    \adasgd, for any $\delta \in (0,1/4)$,
    it holds with probability at least $1-\delta$ that
    \begin{enumerate}[label=(\roman*)]
        \item%
        $\begin{aligned}[t]
            f\brk3{\frac{1}{T}\sum_{t=1}^T w_t}-f^\star
            &
            \leq
            O\bigg(
                \brk2{\tfrac{\wbound^2}{\eta}+\eta}
                + \wbound \sqrt{\log \tfrac{1}{\delta}}
            \bigg)
            \frac{\sigma_0}{\sqrt{T}}
            \\&
            +
            O\bigg(
                \tfrac{\initg \wnorm_1^2}{\eta}
                + \sm (1+\sigma_1^2) \brk2{\tfrac{\wbound^2}{\eta}+\eta}^2
                + \sm \sigma_1^2 \wbound^2 \log \tfrac{1}{\delta}
            \bigg)
            \frac{1}{T}
            ;
        \end{aligned}$
        \item%
        and $\norm{w_t-w^\star} \leq \wbound$ for all $1 \leq t \leq T+1$,
    \end{enumerate}
    where $\wnorm_1=\norm{w_1-w^\star}$, and
    \begin{align*}
        \wbound^2
        &=
        \tO
        \Big(
            \wnorm_1^2
            +\eta^2
            \big(
                1+\sigma_1^2 + \initg^{-2}\brk{\sigma_0^2+\sm \sigma_1^2 \fbound}
                + \initg^{-4} (\sigma_0^4+\sm^2 \sigma_1^4 \fbound^2)
            \big)
        \Big)
        ,
        \\
        \fbound
        &=
        \widetilde{O}\brk*{
        f(w_1)-f^\star
        + \eta \sigma_0
        + \eta^2 \sm \sigma_1^2
        + \eta^2 \sm
        }
        .
    \end{align*}
\end{theorem}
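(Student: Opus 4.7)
The plan is to mirror the structure of the non-convex analysis of \cref{thm:non_convex_convergence}: first derive a deterministic ``convex regret bound'' expressed in terms of the random quantity $\wmax_T \eqdef \max_{t \leq T+1} \norm{w_t - w^\star}$, then establish $\wmax_T \leq \wbound$ with high probability via induction, and finally combine the two to bound the average suboptimality through Jensen's inequality.

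For the regret bound, I would start from the recursion $\norm{w_{t+1} - w^\star}^2 = \norm{w_t - w^\star}^2 - 2 \eta_t g_t \dotp (w_t - w^\star) + \eta_t^2 \norm{g_t}^2$, rearrange to isolate $g_t \dotp (w_t - w^\star)$, sum over $t$, and apply Abel summation using the monotonicity $\eta_t \leq \eta_{t-1}$ together with $\norm{w_t - w^\star}^2 \leq \wmax_T^2$; combining with \cref{lem:sum_eta_g_t} applied to $\sum_t \eta_t \norm{g_t}^2$ yields a bound of the form $\sum_{t=1}^T g_t \dotp (w_t - w^\star) \leq O\brk{(\wmax_T^2/\eta + \eta) G_T}$, where $G_T = \sqrt{\initg^2 + \sum_t \norm{g_t}^2}$. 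Convexity then gives $\sum_t (f(w_t) - f^\star) \leq \sum_t \nabla f(w_t) \dotp (w_t - w^\star)$, and the gap $\sum_t (\nabla f(w_t) - g_t) \dotp (w_t - w^\star)$ is a martingale that, once $\wmax_T \leq \wbound$ is known, can be controlled by \cref{lem:sub_gaussian} with the affine-variance magnitude bounded via \cref{lem:fbound_affine} and \cref{lem:smooth_norm_bound}.

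The crux of the argument is the inductive bound on $\wmax_T$. Following the analysis overview, I would introduce a decorrelated stepsize sequence $\etade_s$ (distinct from the $\tilde\eta_s$ of the non-convex case) chosen so that (a) $\sum_s (\etade_s - \eta_s) g_s \dotp (w_s - w^\star)$ admits a telescoping/logarithmic bound in the spirit of \cref{lem:decorrelated_difference}, and (b) the zero-mean increments $\etade_s (\nabla f(w_s) - g_s)$ are bounded almost-surely, which is a prerequisite for an empirical Bernstein tail bound. The inductive step expands $\norm{w_{t+1} - w^\star}^2$ through the recursion, decomposes $-\eta_s g_s \dotp (w_s - w^\star)$ into the three pieces arising from the $\etade_s$ proxy, and uses $\fdiffmax_{T+1} \leq \fbound$ from \cref{lem:fbound_affine} (together with \cref{lem:smooth_norm_bound}) to uniformly bound the affine-variance noise magnitude appearing inside the martingale term.

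The main obstacle, and the reason \cref{lem:sub_gaussian} is insufficient, is that the natural Bernstein variance proxy for $\sum_{s=1}^t \etade_s (\nabla f(w_s) - g_s) \dotp (w_s - w^\star)$ is proportional to $\sum_s \etade_s^2 \norm{w_s - w^\star}^2 \brk{\sigma_0^2 + \sigma_1^2 \norm{\nabla f(w_s)}^2}$, which already contains $\norm{w_s - w^\star}^2 \leq \wmax_t^2$ on both sides of the inductive inequality and cannot close with a polylogarithmic $\wbound$. To overcome this, I would invoke the empirical Bernstein inequality of \citet{howard2021time}, in the form used by \citet{carmon2022making}, which replaces the predictable variance by an empirical (realized) one whose contribution can be absorbed into the telescoping terms produced by the regret bound in a self-referential but closable way. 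Once the induction delivers $\wmax_{T+1} \leq \wbound$, substituting back into the regret bound, using $\norm{g_t}^2 \leq 2(1+\sigma_1^2)\norm{\nabla f(w_t)}^2 + 2\sigma_0^2$ and $\sqrt{a+b} \leq \sqrt{a}+\sqrt{b}$, and applying Jensen's inequality to pass from $\frac1T\sum_t (f(w_t) - f^\star)$ to $f(\bar w) - f^\star$ yields the claimed rate.
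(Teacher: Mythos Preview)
Your high-level plan matches the paper's: a deterministic convex regret bound in terms of $\wmax_T$ (this is \cref{lem:convex_regret_affine}), a high-probability bound $\wmax_T\leq\wbound$ via empirical Bernstein following \citet{carmon2022making} (this is \cref{lem:wbound_affine}), then \cref{lem:sub_gaussian} and Jensen to finish. Two technical points, however, are handled differently in the paper and would cause trouble if you proceed exactly as written.

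First, the decomposition inside the $\wmax_T$ induction. You propose three pieces with correction term $(\etade_s-\eta_s)\,g_s\dotp(w_s-w^\star)$; the paper instead first uses convexity to drop $-\sum_s\eta_s\nabla f(w_s)\dotp(w_s-w^\star)\leq 0$ \emph{entirely}, and only then decorrelates the residual noise, yielding the correction $(\eta_s-\etade_s)(\nabla f(w_s)-g_s)\dotp(w_s-w^\star)$. The distinction matters: the paper's correction carries \emph{two} factors of the noise, so $\abs{\eta_s-\etade_s}\,\norm{\nabla f(w_s)-g_s}$ is dominated by $\tfrac{1}{\eta}\,\eta_{s-1}^2\norm{\nabla f(w_s)-g_s}^2$, whose sum is then controlled by a separate stopping-time argument (\cref{lem:sum_eta_noise}) giving a polylogarithmic bound. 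Your correction has only one noise factor and an extra $\norm{g_s}$; the natural estimate is $\wmax_t\sum_s\eta_{s-1}\norm{\nabla f(w_s)-g_s}$, which after Cauchy--Schwarz is of order $\sqrt{T}$, and the recursion for $\wnorm_{t+1}^2$ will not close with a polylogarithmic $\wbound$. (The two groupings are algebraically equivalent, so the fix is simply to regroup---but the argument ``in the spirit of \cref{lem:decorrelated_difference}'' does not transfer, because $\etade_s$ here contains no $\sigma_0,\sigma_1$ term in its denominator to cancel against.) Second, the empirical Bernstein (\cref{lem:emp_bernstein}) requires increments bounded by $1$, yet $\etade_s(\nabla f(w_s)-g_s)\dotp(w_s-w^\star)$ is unbounded since $\norm{w_s-w^\star}$ is; the paper handles this in \cref{lem:convex_martingale_affine} by projecting $(w_s-w^\star)/a_k$ to the unit ball for dyadic scales $a_k=2^{k-1}\max\{\wnorm_1,\eta\}$ and union-bounding over $O(\log T)$ values of $k$. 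The empirical-variance term that emerges is again $\sum_s\eta_{s-1}^2\norm{\nabla f(w_s)-g_s}^2$, bounded via \cref{lem:sum_eta_noise}---not, as you suggest, by absorption into telescoping pieces of the regret bound.
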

Similarly to the previous section, the bound applies to the case of bounded affine noise, and we defer the extension to sub-Gaussian noise to \cref{thm:convex_convergence_affine_sub_gaussian} in \cref{sec:sub_gaussian_reduction}.

Few additional remarks are in order.
First, note that when $\sigma_0 \rightarrow 0$ we obtain a rate of order $1/T$, matching the optimal rate for non-stochastic smooth optimization (note that this holds even when $\sigma_1 \gg 0$).
Second, unlike the result in the non-convex case, the bound here contains a polynomial dependence on $1/\initg$ (in the $\wbound$ term). 
This is not an artifact of our analysis: setting a overly small $\initg$ can in fact lead \adasgd to gain a distance of $\Omega(T)$ to a minimizer; see \cref{sec:convex_lower_bound} for more details.
That said, the algorithm is fairly robust to the choice of $\initg$; for example, consider the noisy regime with no affine noise (i.e., $\sigma_0 \gg 0$, $\sigma_1=0$): setting $\eta=\Theta(\norm{w_1-w^\star})$
and any $\gamma$ within the wide range
$\initg \in \brk[s]{\sigma_0,\sigma_0 \sqrt{T}}$
leads to the optimal rate (up to logarithmic factors).

\subsection{Proof of~\texorpdfstring{\cref{thm:convex_convergence_affine}}{Theorem 2}}

In the following, we will require some additional notation.
First, for all $t$ we let
$
    \wnorm_t \eqdef \norm{w_t-w^\star} \text{ and } \wmax_t \eqdef \max_{s \leq t} \norm{w_s-w^\star}.
$
In the proof below, we use a slightly different ``decorrelated step size,'' given by
\begin{align*}
    \etade_s
    \eqdef
    \frac{\eta}{\sqrt{G_{s-1}^2+\norm{\nabla f(w_s)}^2}}
    .
\end{align*}
The proof of the main result is based on a regret bound (\cref{lem:convex_regret_affine}) and a bound $\wnorm_t \leq \wbound$ (\cref{lem:wbound_affine}) which holds with high probability, analogous to \cref{lem:non-convex-regret,lem:fbound_affine} of the previous section. See \cref{sec:convex_appendix} for their statements and proofs.
We now prove our main result in this section.
\begin{proof}[Proof of \cref{thm:convex_convergence_affine}]
With probability at least $1-3\delta$ we have from \cref{lem:wbound_affine} that $\wnorm_t \leq \wbound$ for all $t \leq T+1$.
Using \cref{lem:convex_regret_affine} we have
\begin{align*}
    \sum_{t=1}^T g_t \dotp (w_t-w^\star)
    &\leq
    \brk*{\frac{\wbound^2}{\eta} + \eta} \sqrt{\sum_{t=1}^T \norm{g_t}^2}
    + \frac{\initg \wnorm_1^2}{2 \eta}
    .
\end{align*}
Under the noise assumption, smoothness and convexity,
\begin{align*}
    \norm{g_t}^2
    &\leq
    2 \norm{\nabla f(w_t)}^2 + 2\norm{g_t - \nabla f(w_t)}^2
    \leq
    2 (1+\sigma_1^2) \norm{\nabla f(w_t)}^2 + 2 \sigma_0^2
    \\
    &\leq
    4 \sm (1+\sigma_1^2) \nabla f(w_t) \dotp (w_t-w^\star) + 2 \sigma_0^2
    .
\end{align*}
We obtain
\begin{align*}
    \sum_{t=1}^T g_t \dotp (w_t-w^\star)
    &
    \leq
    \brk*{\frac{\wbound^2}{\eta}+\eta} \sigma_0 \sqrt{2 T}
    + \frac{\initg \wnorm_1^2}{2 \eta}
    +2 \brk*{\frac{\wbound^2}{\eta}+\eta} \sqrt{\sm (1+\sigma_1^2) \sum_{t=1}^T \nabla f(w_t) \dotp (w_t-w^\star)}
    .
\end{align*}
Using the inequality $2 a b \leq a^2+b^2$, we
then have
\begin{align}\label{eq:convex_regret_affine}
    \!
    \sum_{t=1}^T g_t \dotp (w_t-w^\star)
    &
    \leq
    \brk*{\frac{\wbound^2}{\eta}+\eta} \sigma_0 \sqrt{2 T}
    + \frac{\initg \wnorm_1^2}{2 \eta}
    + \frac{1}{4} \sum_{t=1}^T \nabla f(w_t) \dotp (w_t-w^\star)
    + 4 \sm (1+\sigma_1^2) \brk*{\frac{\wbound^2}{\eta}+\eta}^2
    .
\end{align}
By \cref{lem:sub_gaussian} with
$Z_t=(\nabla f(w_t)-g_t) \dotp (w_t-w^\star)$, 
$\abs{Z_t} \leq \sqrt{\sigma_0^2 + \sigma_1^2 \norm{\nabla f(w_t)}^2} \norm{w_t-w^\star}$,
and
\begin{align*}
    \lambda
    &
    =
    \frac{1}{\sigma_0 \wbound \sqrt{T/\log \tfrac{1}{\delta}}+6 \sm \sigma_1^2 \wbound^2}
    ,
\end{align*}
with probability at least $1-4\delta$ (union bound with the event of \cref{lem:wbound_affine}),
\begin{align*}
    \sum_{t=1}^T (\nabla f(w_t)-g_t) \dotp (w_t-w^\star)
    &\leq
    \frac{3 \lambda}{4} \sum_{t=1}^T \brk*{\sigma_0^2 + \sigma_1^2 \norm{\nabla f(w_t)}^2} \norm{w_t-w^\star}^2
    + \frac{1}{\lambda} \log \tfrac{1}{\delta}
    \\
    &\leq
    \sigma_0 \sqrt{T \log \tfrac{1}{\delta}} \brk*{\frac{\wmax_T^2}{\wbound}+ \wbound}
    + \frac{\wmax_T^2 \sum_{t=1}^T \norm{\nabla f(w_t)}^2}{8 \sm \wbound^2}
    + 6 \sm \sigma_1^2 \wbound^2 \log \tfrac{1}{\delta}
    .
\end{align*}
Using the bound $\wmax_T \leq \wbound$, with probability at least $1-4\delta$,
\begin{align*}
    \sum_{t=1}^T (\nabla f(w_t)-g_t) \dotp (w_t-w^\star)
    &\leq
    2 \sigma_0 \wbound \sqrt{T \log \tfrac{1}{\delta}}
    + \frac{1}{8 \sm} \sum_{t=1}^T \norm{\nabla f(w_t)}^2
    + 6 \sm \sigma_1^2 \wbound^2 \log \tfrac{1}{\delta}
    \\
    &\leq
    2 \sigma_0 \wbound \sqrt{T \log \tfrac{1}{\delta}}
    + \frac14 \sum_{t=1}^T \nabla f(w_t) \dotp (w_t-w^\star)
    + 6 \sm \sigma_1^2 \wbound^2 \log \tfrac{1}{\delta}
    ,
\end{align*}
where the last inequality follows from \cref{lem:smooth_norm_bound} and convexity.
Summing with \cref{eq:convex_regret_affine} and rearranging, with probability at least $1-4 \delta$,
\begin{align*}
    \sum_{s=1}^t \nabla f(w_s) \dotp (w_s-w^\star)
    &
    \leq
    8 \sm (1+\sigma_1^2) \brk*{\frac{\wbound^2}{\eta}+\eta}^2
    + \brk*{
        \brk*{\frac{\wbound^2}{\eta}+\eta} \sqrt{8}
        + 4 \wbound \sqrt{\log \tfrac{1}{\delta}}
    } \sigma_0 \sqrt{T}
    + \frac{\initg \wnorm_1^2}{\eta}
    \\
    &
    + 12 \sm \sigma_1^2 \wbound^2 \log \tfrac{1}{\delta}
    .
\end{align*}
A standard application of Jensen's inequality concludes the proof (up to a constant factor in the failure probability, which we subsumed in the big-O notation).
\end{proof}

\subsection*{Acknowledgements}
This work has received funding from the Israel Science Foundation (ISF, grant number 993/17),
Len Blavatnik and the Blavatnik Family foundation,
the Adelis Research Fund
for Artificial Intelligence,
the Yandex Initiative for Machine Learning at Tel Aviv University, and from the Tel Aviv University Center for AI and Data Science (TAD).

\bibliographystyle{abbrvnat}
\bibliography{references}

\appendix

\newcommand{\xtrunc}{\bar{X}}
\newcommand{\xradius}{r}

\section{A General Reduction from sub-Gaussian to Bounded Noise}
\label{sec:sub_gaussian_reduction}

Given an unbiased gradient oracle one can consider different noise assumptions.
For establishing high probability convergence either sub-Gaussian noise or bounded noise are most commonly used.
Sub-Gaussian noise is less restrictive but can make the analysis cumbersome.
In this section we explain how obtaining guarantees with a bounded oracle can yield the same guarantees with a sub-Gaussian oracle (up to logarithmic factors) with high probability.

Let $X$ be a zero-mean random vector such that for some $\sigma>0$, for all $t > 0$,
\begin{align}\label{eq:sub_gaussian_tail}
    \Pr\brk*{\norm{X} \geq t}
    &\leq
    2 \exp\brk{-\ifrac{t^2}{\sigma^2}}
    .
\end{align}
This definition is equivalent to the sub-Gaussian definition used by \citet{li2020high} and is similar to the one used by \citet{kavis2022high}.

The following lemma is a reduction from $X$ to a new bounded random vector with zero-mean which is equal to $X$ with high probability.

\begin{lemma}\label{lem:truncation}
    For any $\delta \in (0,1)$ there exist a random variable $\xtrunc$ such that:
    \begin{enumerate}[nosep,label=(\roman*)]
        \item $\xtrunc$ is zero-mean: $\E\brk[s]{\xtrunc}=0;$
        \item $\xtrunc$ is equal to $X$ w.h.p.: $\Pr\brk*{\xtrunc=X} \geq 1-\delta;$
        \item $\xtrunc$ is bounded with probability 1: $\Pr\brk!{\norm{\xtrunc}<3 \sigma \sqrt{\ln(4/\delta)}}=1.$
    \end{enumerate}
\end{lemma}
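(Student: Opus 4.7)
The plan is to construct $\xtrunc$ using the classical truncation-and-mean-correction idea: truncate $X$ to a ball of radius $\xradius$ chosen so that the tail event has probability at most $\delta$, then patch up the zero-mean condition by a constant correction on the tail event.

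Concretely, I would pick $\xradius := \sigma\sqrt{\ln(4/\delta)}$, so that by \eqref{eq:sub_gaussian_tail}, $\Pr[\|X\|\geq\xradius] \leq 2\exp(-\xradius^2/\sigma^2) = \delta/2$. Let $A := \{\|X\|<\xradius\}$ and $p := \Pr[A]$. If $p=1$ then $\xtrunc := X$ trivially works; otherwise, define $\xtrunc := X\mathbb{1}_A + c\mathbb{1}_{A^c}$, where $c := -\mathbb{E}[X\mathbb{1}_A]/\Pr[A^c] = \mathbb{E}[X\mathbb{1}_{A^c}]/\Pr[A^c]$ (the second equality using $\mathbb{E}[X]=0$). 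Properties (i) and (ii) are then immediate: $\mathbb{E}[\xtrunc] = \mathbb{E}[X\mathbb{1}_A] + c\Pr[A^c] = 0$ by the choice of $c$, and $\{\xtrunc=X\} \supseteq A$ gives $\Pr[\xtrunc=X] \geq p \geq 1-\delta/2 \geq 1-\delta$.

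Property (iii) is the substance of the proof. On $A$, trivially $\|\xtrunc\|=\|X\|<\xradius<3\sigma\sqrt{\ln(4/\delta)}$, so what remains is to bound $\|c\|$. The natural route is Jensen's inequality $\|c\| \leq \mathbb{E}[\|X\|\mathbb{1}_{A^c}]/\Pr[A^c]$, combined with the layer-cake identity $\mathbb{E}[\|X\|\mathbb{1}_{A^c}] = \xradius\Pr[A^c] + \int_\xradius^\infty \Pr[\|X\|\geq t]\,dt$ and the Mills-ratio-type estimate $\int_\xradius^\infty 2e^{-t^2/\sigma^2}\,dt \leq \sigma^2\delta/(4\xradius)$; this yields $\|c\| \leq \xradius + \sigma^2\delta/(4\xradius\Pr[A^c])$. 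The main obstacle is the division by $\Pr[A^c]$: in worst-case sub-Gaussian distributions this probability may be much smaller than its upper bound $\delta/2$, and the naive estimate is then too loose. A clean remedy is to enlarge the probability space by an independent Bernoulli$(\delta/2)$ variable $B$ and replace the point-mass correction by a rare-event shift $\xtrunc := X\mathbb{1}_A - (2\mu/\delta)B$ with $\mu := \mathbb{E}[X\mathbb{1}_A]$; this bypasses the problematic division and bounds $\|\mu\|$ directly via sub-Gaussian tail integration, yielding $\|\xtrunc\| \leq \xradius + 2\|\mu\|/\delta < 3\sigma\sqrt{\ln(4/\delta)}$, while (i) and (ii) persist because on the event $\{B=0\} \cap A$ (whose probability is at least $(1-\delta/2)^2 \geq 1-\delta$) we still have $\xtrunc = X$.
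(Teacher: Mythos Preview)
Your proposal is correct and lands on essentially the same construction as the paper. After recognizing that correcting the mean on the event $A^c$ itself forces a division by $\Pr[A^c]$ (which can be arbitrarily small), you switch to an independent $\mathrm{Bernoulli}(\delta/2)$ variable to carry the mean correction; the paper does exactly this. The only cosmetic differences are that the paper first applies rejection sampling (so the ``base'' variable $Z$ equals $X$ conditioned on $\|X\|\le r$) and then \emph{replaces} $Z$ by the deterministic correction $-\tfrac{2-\delta}{\delta}\E[Z]$ on the Bernoulli event, whereas you hard-truncate to $X\mathbb{1}_A$ and \emph{add} $-(2\mu/\delta)B$. In both cases the crux is the same tail-integration bound on $\|\E[X\mathbb{1}_{A^c}]\|$ via the Mills-ratio estimate, and both yield $\|\xtrunc\|<3\sigma\sqrt{\ln(4/\delta)}$.
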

The lemma implies that given a sequence of oracle queries $(w_t)_{t=1}^T$ to a $\sigma$-sub-Gaussian unbiased oracle $g(w)$ (where $w_t$ might depend on the previous queries) there exist a $O(\sigma \sqrt{\log(T/\delta')})$-bounded unbiased oracle $\tilde{g}(w)$ such that with probability at least $1-\delta'$ return the same observations.
This happens by the following procedure (Given a query $w$):
\begin{enumerate}
    \item Let $X=g(w)-\E[g(w)].$
    \item Let $\xtrunc$ be the truncated random vector of \cref{lem:truncation} for $\delta=\ifrac{\delta'}{T}$.
    \item Return $\E[g(w)]+\xtrunc$.
\end{enumerate}
Hence, up to logarithmic factors, one can assume a bounded unbiased oracle and obtain the same guarantee with high probability.
Note that this reduction holds even for an affine noise oracle $g$ where
\begin{align}\label{eq:sub-gaussian-assumption}
    \Pr\brk!{\norm{g(w)-\E[g(w)]} \geq t} \leq 2 \exp\brk{-t^2/\brk{\sigma_0^2+\sigma_1^2 \norm{\E[g(w)]}^2}}
\end{align}
by creating a new gradient oracle $\widetilde{g}(w)$ such that
\begin{align*}
    \norm{\widetilde{g}(w)-\E[\widetilde{g}(w)]}^2
    &\leq
    9 \sigma_0^2 \ln(4T/\delta')
    + 9\sigma_1^2 \ln(4T/\delta')\norm{\E[\widetilde{g}(w)]}^2
    .
\end{align*}
More formally, under a sub-Gaussian affine noise assumption, we derive the following two results using \cref{thm:non_convex_convergence,thm:convex_convergence_affine}:
\begin{theorem}\label{thm:non_convex_convergence_sub_gaussian}
Assume that $f$ is $\sm$-smooth and a sub-Gaussian gradient oracle (\cref{eq:sub-gaussian-assumption}).
Then for \adasgd, for any $\delta \in (0,1/3)$,
it holds with probability at least $1-2 \delta$ that
\begin{enumerate}[label=(\roman*)]
    \item%
    $\begin{aligned}[t]
        \frac{1}{T} \sum_{t=1}^T \norm{\nabla f(w_t)}^2
        &
        \leq
        O\brk*{ \tfrac{2\fbound}{\eta} + \eta\sm }
        \frac{\sigma_0 \sqrt{\log \tfrac{T}{\delta}}}{\sqrt{T}}
        \\&
        + O\brk*{ \tfrac{\initg}{\eta} \fdiff_1 +(1+\sigma_1^2 \log \tfrac{1}{\delta} \log \tfrac{T}{\delta})\brk!{\tfrac{2\fbound}{\eta} + \eta\sm}^2 + \sigma_0^2 \log \tfrac{1}{\delta} \log \tfrac{T}{\delta}} \frac{1}{T}
        ;
    \end{aligned}$
    \item%
    and $f(w_t)-f^\star \leq \fbound$ for all $1 \leq t \leq T+1$,
\end{enumerate}
where $\fdiff_1=f(w_1)-f^\star$ and 
\begin{align}
\label{eq:fbound_sub_gausssian}
\fbound
&=
\begin{aligned}[t]
    2 \fdiff_1
    + \brk!{9 \log \tfrac{T}{\delta} + 12 C_1} \eta \sigma_0 \sqrt{\ln \tfrac{4 T}{\delta}}
    + \brk!{81 \log^2 \tfrac{T}{\delta} + 144 C_1^2} \eta^2 \sm \sigma_1^2 \ln \tfrac{4 T}{\delta}
    + \eta^2 \sm C_1
    ,
\end{aligned}
\\
\label{eq:C1_sub_gaussian}
C_1
&=
\begin{aligned}[t]
    \log
    &
    \bigg(
        1
        \!+\!
        \frac{18 \sigma_0^2 T \ln \tfrac{4 T}{\delta} + (8+72 \sigma_1^2 \ln \tfrac{4 T}{\delta}) (\eta^2 \sm^2 T^3 + \sm \fdiff_1 T)}{\initg^2}
    \bigg)
    .\!
\end{aligned}
\end{align}
\end{theorem}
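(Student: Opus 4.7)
}
The plan is to reduce to the bounded-noise case already handled by \cref{thm:non_convex_convergence} via the truncation argument from the first part of \cref{sec:sub_gaussian_reduction}. Given the sub-Gaussian affine noise assumption \cref{eq:sub-gaussian-assumption}, each centered gradient $g(w_t)-\nabla f(w_t)$ satisfies a sub-Gaussian tail with parameter $\sigma_t=\sqrt{\sigma_0^2+\sigma_1^2\norm{\nabla f(w_t)}^2}$. The first step is to apply \cref{lem:truncation} with failure level $\delta/T$ to each query, producing a coupled bounded oracle $\widetilde g(w_t)$ that is unbiased and, with probability at least $1-\delta/T$ per step, agrees with $g(w_t)$; a union bound over $t=1,\ldots,T$ yields a single event of probability $\geq 1-\delta$ on which $\widetilde g(w_t)=g(w_t)$ for all $t$.

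Second, I would verify that on this coupling event, the trajectories of \adasgd driven by the two oracles coincide, so it suffices to prove the claimed bounds for the run with the bounded oracle $\widetilde g$. The bounded oracle $\widetilde g$ satisfies an affine noise bound of the form
\begin{align*}
    \norm{\widetilde g(w)-\nabla f(w)}^2
    \leq
    9\ln\tfrac{4T}{\delta}\brk!{\sigma_0^2+\sigma_1^2\norm{\nabla f(w)}^2}
    ,
\end{align*}
i.e., it fits the deterministic affine model used in \cref{thm:non_convex_convergence} with modified parameters $\widetilde\sigma_0=3\sigma_0\sqrt{\ln(4T/\delta)}$ and $\widetilde\sigma_1=3\sigma_1\sqrt{\ln(4T/\delta)}$.

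Third, I would invoke \cref{thm:non_convex_convergence} on the run with $\widetilde g$, with failure parameter $\delta$ and noise parameters $(\widetilde\sigma_0,\widetilde\sigma_1)$. A second union bound with the coupling event then gives overall probability at least $1-2\delta$, matching the theorem statement. The remaining work is purely a substitution exercise: plugging $\widetilde\sigma_0^2=9\sigma_0^2\ln(4T/\delta)$ and $\widetilde\sigma_1^2=9\sigma_1^2\ln(4T/\delta)$ into \cref{eq:fbound,eq:C1} reproduces exactly \cref{eq:fbound_sub_gausssian,eq:C1_sub_gaussian}, and substituting into the convergence bound of \cref{thm:non_convex_convergence}(i) multiplies the $\sigma_0/\sqrt T$ term by $\sqrt{\log(T/\delta)}$ and the $\sigma_1^2\log(1/\delta)$ term by an additional $\log(T/\delta)$ factor, as stated.

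I do not expect a genuine obstacle here; the only subtlety is making sure the truncation is applied adaptively (so that $\sigma_t$ depends on the current iterate $w_t$) and that the resulting $\widetilde g$ remains unbiased and adapted to the natural filtration, both of which are guaranteed by \cref{lem:truncation} applied conditionally on the history. Everything else reduces to bookkeeping of logarithmic factors through the already-proven bounded-noise theorem.
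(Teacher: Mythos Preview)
Your proposal is correct and matches the paper's own proof essentially line for line: construct the truncated oracle via \cref{lem:truncation} with per-step failure $\delta/T$, couple the two \adasgd trajectories on an event of probability $\geq 1-\delta$, invoke \cref{thm:non_convex_convergence} with the inflated parameters $(\widetilde\sigma_0,\widetilde\sigma_1)=(3\sigma_0\sqrt{\ln(4T/\delta)},\,3\sigma_1\sqrt{\ln(4T/\delta)})$, and union-bound. The only thing the paper adds is the remark that one could shave some log factors by redoing the martingale analysis directly in the sub-Gaussian model rather than going through the black-box reduction.
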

\begin{theorem}\label{thm:convex_convergence_affine_sub_gaussian}
    Assume that $f$ is convex and $\sm$-smooth and a sub-Gaussian gradient oracle (\cref{eq:sub-gaussian-assumption}).
    Then for
    \adasgd, for any $\delta \in (0,1/4)$,
    it holds with probability at least $1-2 \delta$ that
    \begin{enumerate}[label=(\roman*)]
        \item%
        $\begin{aligned}[t]
            f\brk3{\frac{1}{T}\sum_{t=1}^T w_t}-f^\star
            &
            \leq
            O\bigg(
                \brk2{\tfrac{\wbound^2}{\eta}+\eta}
                + \wbound \sqrt{\log \tfrac{1}{\delta}}
            \bigg)
            \frac{\sigma_0 \sqrt{\log \tfrac{T}{\delta}}}{\sqrt{T}}
            \\&
            +
            O\bigg(
                \tfrac{\initg \wnorm_1^2}{\eta}
                + \sm (1+\sigma_1^2 \log \tfrac{T}{\delta}) \brk2{\tfrac{\wbound^2}{\eta}+\eta}^2
                + \sm \sigma_1^2 \wbound^2 \log \tfrac{1}{\delta} \log \tfrac{T}{\delta}
            \bigg)
            \frac{1}{T}
            ;
        \end{aligned}$
        \item%
        and $\norm{w_t-w^\star} \leq \wbound$ for all $1 \leq t \leq T+1$,
    \end{enumerate}
    where $\wnorm_1=\norm{w_1-w^\star}$, and
    \begin{align*}
        \wbound^2
        &=
        \tO
        \Big(
            \wnorm_1^2
            +\eta^2
            \big(
                1+\sigma_1^2 + \initg^{-2}\brk{\sigma_0^2+\sm \sigma_1^2 \fbound}
                + \initg^{-4} (\sigma_0^4+\sm^2 \sigma_1^4 \fbound^2)
            \big)
        \Big)
        ,
        \\
        \fbound
        &=
        \widetilde{O}\brk*{
        f(w_1)-f^\star
        + \eta \sigma_0
        + \eta^2 \sm \sigma_1^2
        + \eta^2 \sm
        }
        .
    \end{align*}
\end{theorem}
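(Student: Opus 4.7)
The plan is to derive this sub-Gaussian guarantee as a direct corollary of the bounded-noise result (\cref{thm:convex_convergence_affine}) via the generic reduction outlined at the start of \cref{sec:sub_gaussian_reduction}. The idea is that a sub-Gaussian oracle with affine variance can be replaced, step-by-step, by a truncated oracle that is bounded almost surely and agrees with the original oracle with overwhelming probability; running \adasgd on either oracle then produces identical trajectories on the high-probability coupling event.

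Concretely, I would first instantiate \cref{lem:truncation} at each query. Given a query $w_t$, write $X_t = g(w_t)-\nabla f(w_t)$, which is zero-mean and, by assumption (\cref{eq:sub-gaussian-assumption}), sub-Gaussian with parameter $\sqrt{\sigma_0^2+\sigma_1^2\norm{\nabla f(w_t)}^2}$. Applying the lemma with confidence $\delta/T$ yields a truncated vector $\xtrunc_t$ that (i) is zero-mean, (ii) equals $X_t$ with probability at least $1-\delta/T$, and (iii) satisfies
\begin{align*}
    \norm{\xtrunc_t}
    \leq
    3\sqrt{(\sigma_0^2+\sigma_1^2\norm{\nabla f(w_t)}^2)\ln\tfrac{4T}{\delta}}
\end{align*}
almost surely. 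Setting $\widetilde{g}(w_t)=\nabla f(w_t)+\xtrunc_t$ defines a gradient oracle with deterministic affine-noise parameters
$\widetilde{\sigma}_0^2 = 9\sigma_0^2\ln\tfrac{4T}{\delta}$ and $\widetilde{\sigma}_1^2 = 9\sigma_1^2\ln\tfrac{4T}{\delta}$.

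Next, I would couple the two runs. A union bound over the $T$ steps shows that, with probability at least $1-\delta$, the observations of $g$ and $\widetilde{g}$ coincide along the entire trajectory; under this event the two instances of \adasgd produce the same iterates $(w_t)_{t=1}^T$. Now I would invoke \cref{thm:convex_convergence_affine} on the bounded-noise oracle $\widetilde g$ with failure probability $\delta$, yielding, with probability at least $1-\delta$, the convex convergence bound and distance bound with $\sigma_0,\sigma_1$ replaced by $\widetilde\sigma_0,\widetilde\sigma_1$. A final union bound over the coupling event and the bounded-noise event gives overall probability at least $1-2\delta$, and substituting $\widetilde{\sigma}_0^2, \widetilde{\sigma}_1^2$ into the bound introduces precisely the extra $\log(T/\delta)$ factors that appear in the stated result (the $\sigma_0/\sqrt{T}$ term picks up a $\sqrt{\log(T/\delta)}$ factor, the $\sigma_1^2$ terms pick up $\log(T/\delta)$, and the $\fbound,\wbound$ constants absorb these modifications inside their $\widetilde{O}(\cdot)$ expressions).

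The only subtle point — and the main thing to check carefully — is that the coupling is valid across the entire adaptive trajectory, since the queries $w_t$ themselves depend on the past oracle outputs. This is handled by constructing $\xtrunc_t$ conditionally on $w_t$ (which is measurable with respect to $g_1,\ldots,g_{t-1}$) and taking the union bound before the realization of the randomness at step $t$; because \cref{lem:truncation} guarantees the equality $\xtrunc_t=X_t$ with probability at least $1-\delta/T$ for every realization of $w_t$, the union bound survives without any distributional assumption on $(w_t)$. Everything else is a mechanical substitution, and the resulting bound is exactly what is claimed in the theorem statement (with $\fbound$ and $\wbound$ inheriting their definitions from \cref{thm:convex_convergence_affine} applied to $\widetilde\sigma_0,\widetilde\sigma_1$).
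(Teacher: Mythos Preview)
Your proposal is correct and follows essentially the same approach as the paper: construct the truncated oracle $\widetilde g$ via \cref{lem:truncation} with per-step confidence $\delta/T$, obtain bounded affine-noise parameters $\widetilde\sigma_0 = 3\sigma_0\sqrt{\ln(4T/\delta)}$ and $\widetilde\sigma_1 = 3\sigma_1\sqrt{\ln(4T/\delta)}$, couple the two trajectories with probability $1-\delta$, invoke \cref{thm:convex_convergence_affine} on $\widetilde g$, and union-bound. The paper's proof is slightly terser but identical in substance; your extra discussion of the adaptive-trajectory coupling is a valid elaboration of a point the paper leaves implicit.
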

The theorems share the same bounds with \cref{thm:non_convex_convergence,thm:convex_convergence_affine} up to logarithmic factors. Note that a careful modification of the analyses can yield slightly tighter bounds as \cref{lem:sub_gaussian} already holds for a sub-Gaussian martingale difference sequence.
\begin{proof}[Proof of \cref{thm:non_convex_convergence_sub_gaussian,thm:convex_convergence_affine_sub_gaussian}]
    Let $\widetilde{g}(w)$ be the gradient oracle defined above by applying \cref{lem:truncation} to the noise of the sub-Gaussian oracle.
    Due to \cref{lem:truncation} this oracle returns unbiased gradients with bounded affine noise for parameters $\widetilde{\sigma}_0=3 \sigma_0 \sqrt{\ln (\ifrac{4 T}{\delta'})}$ and $\widetilde{\sigma}_1=3 \sigma_1 \sqrt{\ln (\ifrac{4 T}{\delta'})}$. Furthermore, with probability $1-\delta'$, \adasgd performs the exact steps with both oracles as the noisy gradients are equal.
    Hence we can use the guarantees of \cref{thm:non_convex_convergence,thm:convex_convergence_affine}. We conclude by plugging $\widetilde{\sigma}_0$ and $\widetilde{\sigma}_1$ to the bounds and performing a union-bound with the probability the two oracles output the same noisy gradients.
\end{proof}
\subsection{Proof of \texorpdfstring{\cref{lem:truncation}}{Lemma 9}}
\begin{proof}%
Denote $\xradius=\sigma \sqrt{\ln(4/\delta)}$; note that $\Pr(\norm{X} > \xradius) \leq \delta/2$ (due to \cref{eq:sub_gaussian_tail}).
Define a random vector $Z$ by rejection sampling $X$ on the $\xradius$-unit ball: if $\norm{X} \leq \xradius$, let $Z=X$; otherwise, continue resampling from the distribution of $X$ and reject samples as long as their norm is larger than $\xradius$; let $Z$ be the first accepted sample.
To correct the bias of $Z$, let
\begin{align*}
    \xtrunc
    &=
    \begin{cases}
        Z
        &
        \text{with prob.~$1-\delta/2;$}
        \\
        - \frac{2-\delta}{\delta} \E[Z]
        &
        \text{with prob.~$\delta/2.$}
    \end{cases}
\end{align*}
By construction, $\E[\xtrunc]=0$.
Also, $\Pr(\xtrunc=X) \geq 1-\delta$ due to a union bound of $\Pr(\xtrunc=Z)$ and $\Pr(X=Z)$, each happens with probability at least $1-\delta/2$.
We are left with bounding $\xtrunc$.
As $\norm{Z} \leq \xradius$ it remains to bound the correction term.
For this, we first bound $\E[Z]=\E[X \mid \norm{X} \leq \xradius]$.
By the law of total expectation,
\begin{align*}
    \E\brk[s]*{X \mathbf{1}\brk[c]{\norm{X} \leq \xradius}}
    &=
    \Pr(\norm{X} \leq \xradius)\E\brk[s]*{X \mathbf{1}\brk[c]{\norm{X} \leq \xradius} \mid \norm{X} \leq \xradius}
    + \Pr(\norm{X} > \xradius)\E\brk[s]*{X \mathbf{1}\brk[c]{\norm{X} \leq \xradius} \mid \norm{X} > \xradius}
    \\
    &=
    \Pr(\norm{X} \leq \xradius)\E\brk[s]*{X \mathbf{1}\brk[c]{\norm{X} \leq \xradius} \mid \norm{X} \leq \xradius}
    \\
    &=
    \Pr(\norm{X} \leq \xradius)\E\brk[s]*{X \mid \norm{X} \leq \xradius}
    .
\end{align*}
Hence, as $\E[X]=0$,
\begin{align*}
    \E\brk[s]*{X \mid \norm{X} \leq \xradius}
    &=
    \frac{\E\brk[s]*{X \mathbf{1}\brk[c]{\norm{X} \leq \xradius}}}{\Pr(\norm{X} \leq \xradius)}
    =
    -\frac{\E\brk[s]*{X \mathbf{1}\brk[c]{\norm{X} > \xradius}}}{\Pr(\norm{X} \leq \xradius)}
    .
\end{align*}
Thus, using the tail bound (\cref{eq:sub_gaussian_tail}),
\begin{align*}
    \norm{\E[Z]}
    &\leq 
    2 \E[\norm{X} \mathbf{1}\brk[c]{\norm{X} > \xradius}]
    \tag{$\Pr(\norm{X} \leq \xradius)>1/2$}
    \\
    &\leq
    2 \int_0^\infty \Pr(\norm{X} \mathbf{1}\brk[c]{\norm{X} > \xradius} \geq x) dx
    \tag{tail formula for expectations}
    \\
    &\leq
    2 \xradius \Pr(\norm{X} > \xradius)
    + 2 \int_\xradius^\infty \Pr(\norm{X} \mathbf{1}\brk[c]{\norm{X} > \xradius} \geq x) dx
    \tag{split integral at $\xradius$}
    \\
    &\leq
    \delta \xradius
    + 2 \int_\xradius^\infty \Pr(\norm{X} \geq x) dx
    \tag{$\Pr(\norm{X} > \xradius) \leq \delta/2$}
    \\
    &\leq 
    \delta \xradius
    + 4 \int_\xradius^\infty e^{-x^2/\sigma^2} dx
    \tag{\cref{eq:sub_gaussian_tail}}
    \\
    &\leq
    \delta \xradius
    + \frac{2 \sigma^2}{\xradius} \int_\xradius^\infty \frac{2 x}{\sigma^2} e^{-x^2/\sigma^2} dx
    \tag{$x/r \geq 1$ for $x \geq r$}
    \\
    &= 
    \delta \xradius
    -\frac{2 \sigma^2}{\xradius}e^{-x^2/\sigma^2} \big|_\xradius^\infty
    =
    \delta \xradius
    + \frac{2 \sigma^2}{\xradius}e^{-r^2/\sigma^2} 
    \\
    &\leq
    \delta \xradius
    + \frac{\delta \sigma}{2}
    .
\end{align*}
We conclude that $\xtrunc$ is bounded in absolute value (with probability one) by
\begin{align*}
    \max\brk[c]*{r, (\xradius + \sigma/2) (2-\delta)}
    &\leq
    3 \sigma \sqrt{\ln(4/\delta)}
    .
    \qedhere
\end{align*}
\end{proof}

\section{Lower Bound for the Effective Diameter in the Convex Case}
\label{sec:convex_lower_bound}

In \cref{sec:convex} we establish a bound of $\max_{t \leq T+1} \norm{w_{t}-w^\star}^2$ which has polynomial dependence in $\ifrac{1}{\initg}$.
In this section we construct a one-dimensional example which shows that such polynomial dependence is necessary in order to obtain a bound of $\max_{t \leq T+1} \norm{w_{t}-w^\star}^2$ which is $o(T)$ with high probability.

Let $f(w)=\tfrac{\sm}{2}w^2$ for some $0 < \sm < \frac{\sigma}{T^{3/2}}$ ($\sm$ is extremely small with respect to $\sigma$).
Let $g(w)$ be a stochastic oracle such that
\begin{align*}
    g(w)
    &=
    \begin{cases}
        \nabla f(w) + \sigma & \text{w.p. $\frac{1}{T}$;}
        \\
        \nabla f(w) - \frac{\sigma}{T-1} & \text{w.p. $1-\frac{1}{T}$,}
    \end{cases}
\end{align*}
where $T > 1$.
Hence, $E\brk[s]{g(w)-\nabla f(w)}=0$ and
$\norm{g(w)-\nabla f(w)}^2 \leq \sigma^2$ with probability $1$.
Let $w_1=w^\star=0$ and $\eta=1$.

With constant probability ($\approx e^{-1}$), $g_t=\nabla f(w_t) - \sigma/(T-1)$ for all $t \in [T]$.
As $\max_{t \leq T+1} \norm{w_{t}-w^\star}^2=o(T)$ with high probability, we assume that for large enough $T$, $\abs{w_t-w^\star} < \frac{\sqrt{T}}{2}$ for all $t \leq T+1$ (still with constant probability under a union bound).
Hence, with constant probability,
\begin{align*}
    w_{T+1}
    &= -\sum_{t=1}^{T} \eta_t g_t
    = -\sum_{t=1}^{T} \eta_t \brk*{\sm w_t-\frac{\sigma}{T-1}}
    \geq \frac{\sigma}{2(T-1)} \sum_{t=1}^{T} \eta_t
    ,
\end{align*}
where the last inequality is the assumption of small $\sm$.
Thus, as $\norm{g_s} = \norm{\sm w_t - \ifrac{\sigma}{(T-1)}} \leq \ifrac{\sigma}{(T-1)}$ (as $w_t \geq 0$ for all $t$ since the stochastic gradients are always negative),
\begin{align*}
    \sum_{t=1}^T \eta_t
    &=
    \sum_{t=1}^T \frac{1}{\sqrt{\initg^2 + \sum_{s=1}^t \norm{g_s}^2}}
    \geq
    \sum_{t=1}^T \frac{1}{\sqrt{\initg^2 + \frac{\sigma^2 t}{(T-1)^2}}}
    =
    \frac{T-1}{\sigma} \sum_{t=1}^T \frac{1}{\sqrt{\frac{\initg^2 (T-1)^2}{\sigma^2} + t}}
    .
\end{align*}
Returning to $w_{T+1}$, bounding sum by integration,
\begin{align*}
    w_{T+1}
    &\geq
    \frac{1}{2} \sum_{t=1}^T \frac{1}{\sqrt{\frac{\initg^2 (T-1)^2}{\sigma^2} + t}}
    \geq
        \sqrt{\frac{\initg^2 (T-1)^2}{\sigma^2} + T+1}-\sqrt{\frac{\initg^2 (T-1)^2}{\sigma^2} + 1}
    =
    \Omega\brk*{\ifrac{\sigma}{\initg}}
    .
\end{align*}
Thus, $\norm{w_{T+1}-w^\star}^2 = \Omega\brk{\ifrac{\sigma^2}{\initg^2}}$.
\section{Non-convex SGD with known parameters}
\label{sec:known-non-convex}

In this section we provide a high probability convergence analysis of Stochastic Gradient Descent (SGD) for a $\sm$-smooth objective function $f$ under an affine noise model with known parameters $(\sm,\sigma_0,\sigma_1)$.
The tuned step-size we use is the following,
\begin{align}\label{eq:known-eta}
    \eta
    &=
    \min\brk[c]*{\frac{1}{4 \sm \brk{1 + \sigma_1^2 \log \tfrac{T}{\delta}}},\frac{\knowndiam}{\sigma_0 \sqrt{T}}}
    ,
\end{align}
for some parameter $\knowndiam>0$.
Following is the convergence guarantee.
\begin{theorem}\label{thm:known-non-convex-convergence}
    Assume that $f$ is $\sm$-smooth.
    Then for SGD with step-size (\cref{eq:known-eta}), for any $\delta \in (0,\tfrac{1}{2})$,
    it holds with probability at least $1-2 \delta$ that
    \begin{enumerate}[label=(\roman*)]
        \item%
        $\begin{aligned}[t]
            \frac{1}{T} \! \sum_{t=1}^T \norm{\nabla f(w_t)}^2
            &
            \leq
            \brk2{\sm \knowndiam + \frac{\fdiff_1}{\knowndiam}}\frac{2 \sigma_0 }{\sqrt{T}}
            + \brk*{8 \sm \fdiff_1 (1+ 4 \sigma_1^2) \log \tfrac{T}{\delta}
            + 24 \sigma_1^2 \sm^2 \knowndiam^2 \log \tfrac{1}{\delta}
            + 15 \sigma_0^2 \log \tfrac{1}{\delta}}
            \frac{1}{T}
            ;
        \end{aligned}$
        \item%
        and $f(w_t)-f^\star \leq \fbound$ for all $1 \leq t \leq T+1$,
    \end{enumerate}
    where $\fdiff_1=f(w_1)-f^\star$ and
    \begin{align}
    \label{eq:known-f-bound}
    \fbound
    &=
    \begin{aligned}[t]
        2 \fdiff_1
        + 2 \sm \knowndiam^2
        + 3 \min\brk[c]*{\frac{\sigma_0^2}{4 \sm \brk{1 + \sigma_1^2 \log \tfrac{T}{\delta}}},\frac{\sigma_0 \knowndiam}{\sqrt{T}}} \log \tfrac{T}{\delta}
        .
    \end{aligned}
    \end{align}
\end{theorem}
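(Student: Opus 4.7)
\medskip

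\noindent\textbf{Proof plan for Theorem~\ref{thm:known-non-convex-convergence}.}
The plan is to mimic the structure used in the adaptive analysis (\cref{thm:non_convex_convergence}), but exploit the fact that now $\eta$ is a deterministic constant, which removes all the decorrelation headaches and eliminates the need for a proxy stepsize sequence. In particular, the inner product $\nabla f(w_t)\dotp g_t$ is already correlated only through $g_t$ itself, so the martingale $Z_t \eqdef \nabla f(w_t)\dotp(\nabla f(w_t)-g_t)$ is a genuine martingale difference with respect to the natural filtration, and \cref{lem:sub_gaussian} can be applied directly.

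First I would establish the uniform bound (ii) by induction on $t$. Starting from the smoothness inequality $f(w_{s+1})\le f(w_s)-\eta\nabla f(w_s)\dotp g_s + \tfrac{\beta}{2}\eta^2\norm{g_s}^2$, I would sum from $s=1$ to $t$ and split $\nabla f(w_s)\dotp g_s = \norm{\nabla f(w_s)}^2 - Z_s$ while bounding $\norm{g_s}^2 \le 2(1+\sigma_1^2)\norm{\nabla f(w_s)}^2 + 2\sigma_0^2$ via the affine noise assumption. This gives
\begin{align*}
    f(w_{t+1}) - f(w_1)
    \le
    -\eta\bigl(1 - 2\beta\eta(1+\sigma_1^2)\bigr)\sum_{s=1}^{t}\norm{\nabla f(w_s)}^2 + \eta\sum_{s=1}^{t} Z_s + \beta\eta^2 t\sigma_0^2,
\end{align*}
and the step-size choice $\eta \le 1/(4\beta(1+\sigma_1^2\log(T/\delta)))$ together with $\log(T/\delta)\ge 1$ makes the prefactor of the first sum at most $-\eta/2$. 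For the martingale term, the induction hypothesis $f(w_s)-f^\star\le F$ combined with \cref{lem:smooth_norm_bound} yields $\norm{\nabla f(w_s)}^2\le 2\beta F$, so $\abs{Z_s}\le \norm{\nabla f(w_s)}\sqrt{\sigma_0^2+2\beta\sigma_1^2 F}$ almost surely. Applying \cref{lem:sub_gaussian} with this bound and a union bound over $t\in\{1,\dots,T\}$ (producing the $\log(T/\delta)$ factor) with $\lambda$ chosen so that the resulting $\tfrac{3\lambda}{4}(\sigma_0^2+2\beta\sigma_1^2 F)$ prefactor equals $\tfrac{1}{4}$, the martingale contributes at most $\tfrac{\eta}{4}\sum_{s}\norm{\nabla f(w_s)}^2 + 3\eta(\sigma_0^2+2\beta\sigma_1^2 F)\log(T/\delta)$. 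Dropping the negative remainder $-\tfrac{\eta}{4}\sum\norm{\nabla f}^2$ and using $\beta\eta^2 T\sigma_0^2\le \beta\alpha^2$ (from $\eta\le \alpha/(\sigma_0\sqrt{T})$) and $\eta\sigma_0^2\log(T/\delta)\le \min\{\cdot\}\sigma_0^2\log(T/\delta)$ gives an inequality of the form $f(w_{t+1})-f^\star \le \Delta_1 + (\text{clean terms}) + c\cdot F$ with $c<1$; closing the induction requires picking $F$ as in \cref{eq:known-f-bound} so that the clean terms are at most $(1-c)F - \Delta_1$.

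For part~(i), I would re-use the inequality chain above at $t=T$, keeping the negative term this time. Applying \cref{lem:sub_gaussian} only once (at $t=T$ rather than for all $t$, thus producing $\log(1/\delta)$ instead of $\log(T/\delta)$) and rearranging, we get $\tfrac{\eta}{4}\sum_{t=1}^T \norm{\nabla f(w_t)}^2 \le \Delta_1 + O(\sigma_0^2 + \beta\sigma_1^2 F)\log(1/\delta) + \beta\eta^2 T\sigma_0^2$. Dividing by $\eta T/4$ and plugging in $\eta=\min\{A,B\}$ with $A=1/(4\beta(1+\sigma_1^2\log(T/\delta)))$ and $B=\alpha/(\sigma_0\sqrt{T})$, using $1/\eta \le 1/A+1/B$, produces the two advertised regimes: the $1/\eta T$ factor gives the $\beta\sm\Delta_1(1+\sigma_1^2)\log/T$ term (from $1/A$) and the $\Delta_1\sigma_0/(\alpha\sqrt{T})$ term (from $1/B$), while $\beta\eta\sigma_0^2$ gives the $\beta\alpha\sigma_0/\sqrt{T}$ term. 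Substituting $F$ into the $\beta\sigma_1^2 F\log(1/\delta)/T$ term produces the $\sigma_1^2\beta^2\alpha^2\log(1/\delta)/T$ contribution.

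The main obstacle is the circular appearance of $F$ on both sides of the induction: since the noise variance bound $\sigma_0^2+2\beta\sigma_1^2 F$ involves $F$ itself, I must choose $\lambda$ in \cref{lem:sub_gaussian} carefully enough that the coefficient of $F$ on the right-hand side is strictly below one. This is where the step-size upper bound $\eta\le 1/(4\beta(1+\sigma_1^2\log(T/\delta)))$ is essential; a looser cap would leave a coefficient $\ge 1$ and the induction would not close. Aside from this constant-balancing exercise, the argument is essentially a non-adaptive mirror of the proof of \cref{lem:fbound_affine}, with the adaptive complications stripped away.
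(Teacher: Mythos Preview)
Your plan is the paper's proof: \cref{lem:known-fbound-affine} establishes (ii) by exactly the induction you describe, and the proof of \cref{thm:known-non-convex-convergence} then combines the smoothness telescope with a single application of \cref{lem:sub_gaussian} at $t=T$ (union-bounded with the event from (ii)) to obtain (i).

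The one place your constants do not close is the choice of $\lambda$ in the inductive step. Taking $\tfrac{3\lambda}{4}(\sigma_0^2+2\sm\sigma_1^2\fbound)=\tfrac14$ leaves a martingale tail of $3\eta(\sigma_0^2+2\sm\sigma_1^2\fbound)\log\tfrac{T}{\delta}$, so the coefficient of $\fbound$ on the right-hand side is $6\eta\sm\sigma_1^2\log\tfrac{T}{\delta}$. Under the step-size cap this is at most $\tfrac{3}{2}\cdot\tfrac{\sigma_1^2\log(T/\delta)}{1+\sigma_1^2\log(T/\delta)}$, which approaches $\tfrac32$ as $\sigma_1^2\log(T/\delta)\to\infty$; your claim ``$c<1$'' therefore fails and the induction does not close. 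The paper instead sets $\lambda=1/(\sigma_0^2+2\sm\sigma_1^2\fbound)$, i.e.\ prefactor $\tfrac34$ rather than $\tfrac14$: together with the $\tfrac{\eta}{4}\sum\norm{\nabla f(w_s)}^2$ coming from $\tfrac{\sm\eta^2}{2}\sum\norm{g_s}^2$, this exactly cancels the $-\eta\sum\norm{\nabla f(w_s)}^2$ term and leaves the coefficient of $\fbound$ equal to $2\eta\sm\sigma_1^2\log\tfrac{T}{\delta}\le\tfrac12$, after which the induction closes with the stated $\fbound$. With that single adjustment (and keeping your $\lambda$ choice, prefactor $\tfrac14$, for the part-(i) application, which is what the paper does), your argument goes through.
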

Similarly to \cref{thm:non_convex_convergence}, \cref{thm:known-non-convex-convergence} establishes a high probability bound of $f(w_t)-f^\star$ for all $t$.
Also note that tuning of $\knowndiam=\Theta(\sqrt{\fdiff_1/\sm})$ yields a rate of $\tO\brk!{\ifrac{\sigma_0 \sqrt{\sm \fdiff_1}}{\sqrt{T}}+\ifrac{(1+\sigma_1^2)\sm\fdiff_1}{T}}$, matching  the results in expectation of \citet{ghadimi2013stochastic} and \citet[Theorem 4.8]{bottou2018optimization} up to few additional logarithmic factors being a result of the high probability analysis.

\subsection{Proof of \texorpdfstring{\cref{thm:known-non-convex-convergence}}{Theorem 5}}

In order to prove \cref{thm:known-non-convex-convergence} the next lemma establish a bound of $f(w_{t})-f^\star$ for all $t \leq T$ that holds with high probability, similarly to \cref{lem:fbound_affine} (proof given later in the section).
\begin{lemma}\label{lem:known-fbound-affine}
    With probability at least $1-\delta$, it holds that
    for all $t \leq T+1$,
    $f(w_{t})-f^\star \leq \fbound$
    (with $\fbound$ defined in
    \cref{thm:known-non-convex-convergence}
    ).
\end{lemma}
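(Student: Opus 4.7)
The plan is to mirror the inductive structure of \cref{lem:fbound_affine}, exploiting the considerable simplification afforded by the deterministic step-size: no decorrelated stepsize proxy (nor an analog of \cref{lem:decorrelated_difference}) is required. I will proceed by induction on $t$, with the base case $f(w_1)-f^\star = \fdiff_1 \leq \fbound$ immediate. For the inductive step, $\sm$-smoothness and telescoping yield
\[
    f(w_{t+1}) - f(w_1)
    \leq
    -\eta \sum_{s=1}^{t} \nabla f(w_s) \dotp g_s
    + \frac{\sm \eta^2}{2} \sum_{s=1}^{t} \norm{g_s}^2,
\]
and I split $-\nabla f(w_s) \dotp g_s = -\norm{\nabla f(w_s)}^2 + \nabla f(w_s) \dotp (\nabla f(w_s) - g_s)$ to isolate a deterministic negative ``curvature'' term from a zero-mean martingale increment.

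The $\norm{g_s}^2$ sum is handled by the affine noise bound $\norm{g_s}^2 \leq 2(1+\sigma_1^2)\norm{\nabla f(w_s)}^2 + 2\sigma_0^2$, contributing $\sm\eta^2 (1+\sigma_1^2)\sum_s \norm{\nabla f(w_s)}^2 + \sm\eta^2 \sigma_0^2 T$. For the martingale, I plan to invoke \cref{lem:sub_gaussian} (with a union bound over $t \leq T$) using the variance proxy $\sigma_s^2 = \norm{\nabla f(w_s)}^2 (\sigma_0^2 + \sigma_1^2 \norm{\nabla f(w_s)}^2)$, which bounds $\abs{Z_s}^2$ by Cauchy--Schwarz and the noise assumption. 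The induction hypothesis enters through \cref{lem:smooth_norm_bound}: $\norm{\nabla f(w_s)}^2 \leq 2\sm\fbound$, so $\sigma_s^2 \leq (\sigma_0^2 + 2\sm\sigma_1^2\fbound) \norm{\nabla f(w_s)}^2$, making the variance proxy a linear function of $\sum_s \norm{\nabla f(w_s)}^2$.

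Setting $\lambda = 1/(\sigma_0^2 + 2\sm\sigma_1^2\fbound)$ then turns the martingale bound into $(3/4)\sum_s \norm{\nabla f(w_s)}^2 + \log(T/\delta)(\sigma_0^2 + 2\sm\sigma_1^2\fbound)$ (both multiplied by $\eta$). Summing the three contributions, the net coefficient of $\sum_s \norm{\nabla f(w_s)}^2$ becomes $\eta(-1/4 + \sm\eta(1+\sigma_1^2))$, which is nonpositive by the step-size choice $\eta \leq 1/(4\sm(1+\sigma_1^2\log(T/\delta)))$, so these terms drop entirely. Using $\eta \leq \knowndiam/(\sigma_0\sqrt{T})$ controls $\sm\eta^2\sigma_0^2 T \leq \sm\knowndiam^2$, leaving a residual of $\sm\knowndiam^2 + \eta\sigma_0^2\log(T/\delta) + 2\eta\sm\sigma_1^2\fbound\log(T/\delta)$.

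The main obstacle will be closing the induction cleanly. The same step-size bound ensures $2\eta\sm\sigma_1^2\log(T/\delta) \leq 1/2$, so the affine-variance term contributes at most $\fbound/2$; adding back $f(w_1)-f^\star = \fdiff_1$ reduces the required condition to $2\fdiff_1 + 2\sm\knowndiam^2 + 2\eta\sigma_0^2\log(T/\delta) \leq \fbound$. This is precisely satisfied by \cref{eq:known-f-bound}: the $\min$ inside $\fbound$ equals $\eta\sigma_0^2$ in either regime of the step-size (since $\eta\sigma_0^2 = \min\{\eta_1\sigma_0^2, \eta_2\sigma_0^2\}$), and its prefactor of $3$ rather than $2$ provides the necessary slack. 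The failure probability is controlled by the single high-probability event of the martingale inequality.
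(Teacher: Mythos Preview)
Your proposal is correct and follows essentially the same approach as the paper's proof: the same inductive structure, the same telescoping via smoothness, the same splitting of $-\nabla f(w_s)\dotp g_s$ into a negative $\norm{\nabla f(w_s)}^2$ term plus a martingale increment, the same application of \cref{lem:sub_gaussian} with a union bound over $t$ and the same choice $\lambda = 1/(\sigma_0^2+2\sm\sigma_1^2\fbound)$, and the same cancellation of the $\sum_s \norm{\nabla f(w_s)}^2$ terms via the step-size bound. Your closing observation that the $\min$ in $\fbound$ equals $\eta\sigma_0^2$ and that the prefactor $3$ provides the needed slack is exactly how the paper finishes (the paper writes it as an equality, but as you note only the inequality is needed).
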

We continue to prove the main result using similar lines to the proof of \cref{thm:non_convex_convergence}.
\begin{proof}[Proof of \cref{thm:known-non-convex-convergence}]
    We start with a regret analysis.
    Using $\sm$-smoothness, we have for all $t \geq 1$ that
    \begin{align*}
        f(w_{t+1})
        \leq
        f(w_t) - \eta \nabla f(w_t) \dotp g_t + \tfrac12 \sm \eta^2 \norm{g_t}^2
        ,
    \end{align*}
    which implies
    \begin{align*}
        \nabla f(w_t) \dotp g_t
        &\leq
        \frac{1}{\eta} \brk!{f(w_t) - f(w_{t+1})} 
        + \tfrac12 \sm \eta \norm{g_t}^2
        .
    \end{align*}
    Summing the above over $t=1,\ldots,T$, we obtain (with $\fdiff_1=f(w_1)-f^\star$)
    \begin{align*}
        \sum_{t=1}^T \nabla f(w_t) \dotp g_t
        &\leq
        \frac{\fdiff_1}{\eta} 
        + \frac{\sm \eta}{2} \sum_{t=1}^T \norm{g_t}^2
        .
    \end{align*}
    Observe that
    \begin{align*}
        \norm{g_t}^2
        &\leq
        2\norm{\nabla f(w_t)}^2 + 2\norm{g_t - \nabla f(w_t)}^2
        \leq
        2(1+\sigma_1^2)\norm{\nabla f(w_t)}^2 + 2\sigma_0^2
        .
    \end{align*}
    Hence,
    \begin{align*}
        \sum_{t=1}^T \norm{g_t}^2
        &\leq
        2 \sum_{t=1}^T (1+\sigma_1^2) \norm{\nabla f(w_t)}^2
        + 2 \sigma_0^2 T
        .
    \end{align*}
    By the definition of $\eta$ (\cref{eq:known-eta}),
    \begin{align}\label{eq:known-regret}
        \nonumber
        \sum_{t=1}^T \nabla f(w_t) \dotp g_t
        &\leq
        \frac{\fdiff_1}{\eta} 
        + \sm \eta \sum_{t=1}^T (1+\sigma_1^2) \norm{\nabla f(w_t)}^2
        + \sm \eta \sigma_0^2 T
        \\
        &\leq
        4 \sm \fdiff_1 (1+\sigma_1^2 \log \tfrac{T}{\delta})
        + \frac{1}{4} \sum_{t=1}^T \norm{\nabla f(w_t)}^2
        + \sigma_0 \sqrt{T} \brk2{\sm \knowndiam + \frac{\fdiff_1}{\knowndiam}}
        .
    \end{align}
    We continue with a martingale analysis to replace $\sum_{t=1}^T \nabla f(w_t) \dotp g_t$ with $\sum_{t=1}^T \norm{\nabla f(w_t)}^2$.
    Consider the sequence of R.V.s $(Z_t)_{t=1}^T$ defined by $Z_t = \nabla f(w_t) \dotp (\nabla f(w_t)-g_t)$.
    Then $(Z_t)_{t=1}^T$ is a martingale difference sequence, as
    \begin{align*}
        \condE{Z_t}{t-1}
        &=
        \nabla f(w_t) \dotp (\nabla f(w_t)-\condE{g_t}{t-1})
        =
        0
        .
    \end{align*}
    By Cauchy-Schwarz inequality and the noise assumption,
    \begin{align*}
        \abs{Z_t}
        &\leq
        \norm{\nabla f(w_t)}\norm{\nabla f(w_t)-g_t}
        \leq
        \norm{\nabla f(w_t)} \sqrt{\sigma_0^2 + \sigma_1^2 \norm{\nabla f(w_t)}^2}
        .
    \end{align*}
    Thus, we obtain from \cref{lem:sub_gaussian} with
    $$
    \abs{Z_t} \leq \norm{\nabla f(w_t)} \sqrt{\sigma_0^2 + \sigma_1^2 \norm{\nabla f(w_t)}^2}$$
    and $\lambda=1\big/3(\sigma_0^2 + 2 \sm \sigma_1^2 \fbound)$ that with probability at least $1-\delta$,
    \begin{align*}
        \sum_{t=1}^T \nabla f(w_t) \dotp (\nabla f(w_t)-g_t)
        &\leq
        \frac{\sum_{t=1}^T (\sigma_0^2+\sigma_1^2 \norm{\nabla f(w_t)}^2) \norm{\nabla f(w_t)}^2}{4 (\sigma_0^2 + 2 \sm \sigma_1^2 \fbound)}
        + 3 (\sigma_0^2+2 \sm \sigma_1^2 \fbound) \log(1/\delta)
        .
    \end{align*}
    From \cref{lem:smooth_norm_bound,lem:known-fbound-affine}, with probability at least $1-\delta$, $\norm{\nabla f(w_t)}^2 \leq 2 \sm \fbound$.
    Hence, under a union bound with \cref{lem:known-fbound-affine}, with probability at least $1-2\delta$,
    \begin{align}\label{eq:known-basic-martingale}
        \sum_{t=1}^T \nabla f(w_t) \dotp (\nabla f(w_t)-g_t)
        &
        \leq
        \frac{1}{4} \sum_{t=1}^T \norm{\nabla f(w_t)}^2
        + 3 (\sigma_0^2+2 \sm \sigma_1^2 \fbound) \log \tfrac{1}{\delta}
        .
    \end{align}
    Summing \cref{eq:known-basic-martingale,eq:known-regret} and rearranging,
    \begin{align*}
        \sum_{t=1}^T \norm{\nabla f(w_t)}^2
        &\leq
        8 \sm \fdiff_1 (1+\sigma_1^2 \log \tfrac{T}{\delta})
        + 6 (\sigma_0^2+2 \sm \sigma_1^2 \fbound) \log \tfrac{1}{\delta}
        + 2 \sigma_0 \sqrt{T} \brk2{\sm \knowndiam + \frac{\fdiff_1}{\knowndiam}}
        .
    \end{align*}
    From \cref{lem:known-fbound-affine,eq:known-eta},
    \begin{align*}
        \fbound
        &\leq
        2 \fdiff_1
        + 2 \sm \knowndiam^2
        + \frac{3 \sigma_0^2}{4 \sm \sigma_1^2}
        .
    \end{align*}
    Thus,
    \begin{align*}
        \sum_{t=1}^T \norm{\nabla f(w_t)}^2
        &\leq
        8 \sm \fdiff_1 (1+\sigma_1^2 \log \tfrac{T}{\delta})
        + 24 \sm \sigma_1^2 \brk{\fdiff_1 + \sm \knowndiam^2} \log \tfrac{1}{\delta}
        + 15 \sigma_0^2 \log \tfrac{1}{\delta}
        + 2 \sigma_0 \sqrt{T} \brk2{\sm \knowndiam + \frac{\fdiff_1}{\knowndiam}}
        \\
        &\leq
        8 \sm \fdiff_1 (1+ 4 \sigma_1^2) \log \tfrac{T}{\delta}
        + 24 \sigma_1^2 \sm^2 \knowndiam^2 \log \tfrac{1}{\delta}
        + 15 \sigma_0^2 \log \tfrac{1}{\delta}
        + 2 \sigma_0 \sqrt{T} \brk2{\sm \knowndiam + \frac{\fdiff_1}{\knowndiam}}
        .\qedhere
    \end{align*}
\end{proof}

\subsection{Proof of \texorpdfstring{\cref{lem:known-fbound-affine}}{Lemma 10}}
\begin{proof}
    Consider the sequence of RVs $\brk{Z_t}_{t=1}^T$ defined by $Z_t = \nabla f(w_t) \dotp (\nabla f(w_t)-g_t)$.
    Then $\brk{Z_t}_{t=1}^T$ is a martingale difference sequence, as
    \begin{align*}
        \condE{Z_t}{t-1}
        &=
        \nabla f(w_t) \dotp (\nabla f(w_t)-\condE{g_t}{t-1})
        =
        0
        .
    \end{align*}
    From Cauchy–Schwarz inequality and the noise model,
    $\abs{Z_t}
    \leq \norm{\nabla f(w_t)} \sqrt{\sigma_0^2 + \sigma_1^2 \norm{\nabla f(w_t)}^2}$. Thus, we obtain from \cref{lem:sub_gaussian} (together with a union bound over $t$) that for any fixed $\lambda>0$, with probability at least $1-\delta$, for all $1 \leq t \leq T$,
    \begin{align*}
        \sum_{s=1}^t \nabla f(w_s) \dotp (\nabla f(w_s) - g_s)
        &
        \leq
        \frac{3 \lambda}{4} \sum_{s=1}^t \norm{\nabla f(w_s)}^2 (\sigma_0^2 + \sigma_1^2 \norm{\nabla f(w_s)}^2)
        + \frac{\log \frac{T}{\delta}}{\lambda}
        .
    \end{align*}
    We will proceed by induction.
    The claim is immediate for $t=1$ and we move to prove it for $t+1$.
    Using smoothness,
    \begin{align*}
        f(w_{s+1})
        &\leq f(w_{s}) - \eta \nabla f(w_s) \dotp g_s + \frac{\sm \eta^2}{2} \norm{g_s}^2.
    \end{align*}
    Summing the above over $s=1,\ldots,t$,
    \begin{align}\label{eq:known-f-bound-sum}
        f(w_{t+1}) - f(w_{1})
        &\leq
        - \eta \sum_{s=1}^{t} \nabla f(w_s) \dotp g_s
        + \frac{\sm \eta^2}{2} \sum_{s=1}^{t} \norm{g_s}^2.
    \end{align}
    The second term of the right hand side can be bounded by the definitions of $\eta$ and the noise model,
    \begin{align*}
        \frac{\sm \eta^2}{2} \sum_{s=1}^{t} \norm{g_s}^2
        &\leq
        \sm \eta^2 \sum_{s=1}^{t} \brk!{\norm{g_s-\nabla f(w_s)}^2 + \norm{\nabla f(w_s)}^2}
        \tag{$\norm{a+b}^2 \leq 2 \norm{a}^2+2\norm{b}^2$}
        \\
        &\leq
        \sm \eta^2 \sum_{s=1}^{t} \brk!{(1+\sigma_1^2) \norm{\nabla f(w_s)}^2 + \sigma_0^2}
        \tag{$\norm{\nabla f(w_s)-g_s}^2 \leq \sigma_0^2 + \sigma_1^2 \norm{\nabla f(w_s)}^2$}
        \\
        &\leq
        \sm \knowndiam^2
        + \frac{\eta}{4} \sum_{s=1}^{t} \norm{\nabla f(w_s)}^2
        \tag{\cref{eq:known-eta}}
        .
    \end{align*}
    We move to bound the first term of the right hand side.
    We split the sum to obtain a martingale term,
    \begin{align*}
        - \eta \sum_{s=1}^t \nabla f(w_s) \dotp g_s
        &=
        - \eta \sum_{s=1}^t \norm{\nabla f(w_s)}^2
        +
        \eta \sum_{s=1}^t \nabla f(w_s) \dotp (\nabla f(w_s) - g_s)
        .
    \end{align*}
    From \cref{lem:smooth_norm_bound} and our induction, $\norm{\nabla f(w_t)}^2 \leq 2 \sm (f(w_t)-f^\star) \leq 2 \sm \fbound$.
    Returning to our martingale bound, we have
    \begin{align*}
        \sum_{s=1}^t \nabla f(w_s) \dotp (\nabla f(w_s) - g_s)
        &\leq
        \frac{3 \lambda}{4} \sum_{s=1}^t \norm{\nabla f(w_s)}^2 \brk{\sigma_0^2 + \sigma_1^2 \norm{\nabla f(w_s)}^2}
        + \frac{\log \frac{T}{\delta}}{\lambda}
        \\
        &\leq
        \frac{3 \lambda}{4} \brk{\sigma_0^2 + 2 \sm \sigma_1^2 \fbound} \sum_{s=1}^t \norm{\nabla f(w_s)}^2 + \frac{\log \frac{T}{\delta}}{\lambda}
        .
    \end{align*}
    Setting $\lambda = 1 \Big/ \brk{\sigma_0^2 + 2 \sm \sigma_1^2 \fbound}$,
    \begin{align*}
        \sum_{s=1}^t \nabla f(w_s) \dotp (\nabla f(w_s) - g_s)
        &\leq
        \frac{3}{4} \sum_{s=1}^t \norm{\nabla f(w_s)}^2
        + \brk{\sigma_0^2 + 2 \sm \sigma_1^2 \fbound}\log \tfrac{T}{\delta}
        .
    \end{align*}
    Returning to \cref{eq:known-f-bound-sum} and combining the inequalities,
    \begin{align*}
        f(w_{t+1})-f(w_1)
        &\leq
        \sm \knowndiam^2
        + \eta\brk{\sigma_0^2 + 2 \sm \sigma_1^2 \fbound}\log \tfrac{T}{\delta}
        \\
        &\leq
        \sm \knowndiam^2
        + \eta \sigma_0^2 \log \tfrac{T}{\delta}
        + \frac{1}{2} \fbound
        =
        \fbound - (f(w_1)-f^\star)
        ,
    \end{align*}
    where the last equality follows from the definition of $\eta$. Adding $f(w_1)-f^\star$ to both sides concludes the proof.
\end{proof}
\section{Proofs of \texorpdfstring{\cref{sec:non-convex}}{Section 3}}
\label{sec:non_convex_appendix}
\subsection{Proof of \texorpdfstring{\cref{lem:sum_eta_g_t}}{Lemma 4}}
\begin{proof}%
    Define $\tilde{G}_t=\sqrt{G_t^2-G_0^2}$.
    Observe that for any $t \geq 1$,
    \begin{align*}
        \norm{g_t}^2
        =
        \sum_{s=1}^t \norm{g_s}^2 - \sum_{s=1}^{t-1} \norm{g_s}^2
        =
        \tilde{G}_{t}^2 - \tilde{G}_{t-1}^2
        =
        \brk!{ \tilde{G}_t + \tilde{G}_{t-1} } 
        \brk!{ \tilde{G}_t - \tilde{G}_{t-1} }
        .
    \end{align*}
    Since $\tilde{G}_t \geq \tilde{G}_{t-1}$, this implies
    \begin{align*}
        \frac{\norm{g_t}^2}{\tilde{G}_t}
        =
        \brk*{\frac{\tilde{G}_t + \tilde{G}_{t-1}}{\tilde{G}_t}} 
        \brk*{ \tilde{G}_{t} - \tilde{G}_{t-1} }
        \leq
        2 \brk*{ \tilde{G}_{t} - \tilde{G}_{t-1} }
        .
    \end{align*}
    Summing this over $t=1,\ldots,T$ and telescoping terms, we obtain
    \begin{align*}
        \sum_{t=1}^T \frac{\norm{g_t}^2}{G_t}
        \leq
        \sum_{t=1}^T \frac{\norm{g_t}^2}{\tilde{G}_t}
        \leq
        2 \tilde{G}_T
        =
        2 \sqrt{\sum_{t=1}^T \norm{g_s}^2}
        ,
    \end{align*}
    where the first inequality follows by $G_t \geq \tilde{G}_t$.
    On the other hand, using the inequality $1-x \leq \log(1/x)$,
    \begin{align*}
        \sum_{t=1}^T \frac{\norm{g_t}^2}{G_t^2}
        &=
        \sum_{t=1}^T \brk*{ 1 - \frac{G_{t-1}^2}{G_{t}^2} } 
        \leq
        \sum_{t=1}^T \log\frac{G_{t}^2}{G_{t-1}^2}
        =
        2 \log\frac{G_T}{G_0}
        .\qedhere
    \end{align*}
\end{proof}
\subsection{Proof of \texorpdfstring{\cref{lem:smooth_norm_bound}}{Lemma 8}}
\begin{proof}%
    Let $w^+ = w-\frac{1}{\sm}\nabla f(w)$. Then from smoothness,
    \begin{align*}
        f(w^+)-f(w)
        &\leq
        \nabla f(w) \dotp \brk{w^+-w}
        +\frac{\sm}{2}\norm{w^+-w}^2
        =
        -\frac{1}{2\sm}\norm{\nabla f(w)}^2
        .
    \end{align*}
    Hence,
    \begin{align*}
        \norm{\nabla f(w)}^2
        &\leq
        2 \sm \brk{f(w)-f(w^+)}
        \leq
        2 \sm \brk{f(w)-f^\star}
        .\qedhere
    \end{align*}
\end{proof}
\subsection{Proof of \texorpdfstring{\cref{lem:sum_eta_g_t_2_affine}}{Lemma 6}}
\begin{proof}%
    Using \cref{lem:sum_eta_g_t},
    \begin{align*}
        \sum_{t=1}^T \frac{\norm{g_t}^2}{G_t^2}
        &\leq 2 \log \frac{G_T}{G_0}.
    \end{align*}
    Since $\norm{g_t}^2 \leq 2 \norm{\nabla f(w_t)}^2 + 2 \norm{g_t-\nabla f(w_t)}^2 \leq 2(1+\sigma_1^2) \norm{\nabla f(w_t)}^2 + 2\sigma_0^2$,
    \begin{align*}
        \sum_{t=1}^T \frac{\norm{g_t}^2}{G_t^2}
        &\leq \log \brk*{G_0^2 + 2 \sigma_0^2 T + 2 (1+\sigma_1^2)\sum_{t=1}^T \norm{\nabla f(w_t)}^2}+2 \log(1/G_0).
    \end{align*}
    From smoothness,
    \begin{align*}
        \norm{\nabla f(w_t)}
        &\leq \norm{\nabla f(w_{t-1})} + \norm{\nabla f(w_{t})-\nabla f(w_{t-1})}
        \leq \norm{\nabla f(w_{t-1})} + \sm \norm{w_t-w_{t-1}}
        \\
        &=
        \norm{\nabla f(w_{t-1})} + \sm \eta_t \norm{g_t}
        \leq \norm{\nabla f(w_{t-1})} + \eta \sm
        .
    \end{align*}
    Thus,
    \begin{align*}
        \norm{\nabla f(w_t)}
        &\leq
        \norm{\nabla f(w_1)}
        + \eta \sm (t-1)
        \leq
        \norm{\nabla f(w_1)}
        + \eta \sm t
        .
    \end{align*}
    summing from $t=1,\ldots,T$,
    \begin{align*}
        \sum_{t=1}^T \norm{\nabla f(w_t)}^2
        &\leq 2 \sum_{t=1}^T (\eta^2 \sm^2 t^2 + \norm{\nabla f(w_1)}^2) 
        \tag{$\norm{x+y}^2 \leq 2 \norm{x}^2 + 2 \norm{y}^2$}
        \\
        &\leq 2 \eta^2 \sm^2 T^3 + 2 \norm{\nabla f(w_1)}^2 T \tag{$\sum_{t=1}^T t^2 \leq T^3$} \\
        &\leq 2 \eta^2 \sm^2 T^3 + 4 \sm (f(w_1)-f^\star)T \tag{from smoothness, \cref{lem:smooth_norm_bound}}
        .
    \end{align*}
    Hence, with $G_0^2 = \initg^2$,
    \begin{align*}
        \sum_{t=1}^T \frac{\norm{g_t}^2}{G_t^2}
        &\leq \log \brk*{1+\frac{2 \sigma_0^2 T + 4 (1+\sigma_1^2) \eta^2 \sm^2 T^3
        + 8 (1+\sigma_1^2) \sm (f(w_1)-f^\star)T}{\initg^2}}
        \leq C_1
        .
        \qedhere
    \end{align*}
\end{proof}

\subsection{Proof of \texorpdfstring{\cref{lem:decorrelated_difference}}{Lemma 7}}
In order to prove the bound we also need the following technical lemma (proof follows).
\begin{lemma}\label{lem:decorrelated_step_size_diff}
    For all $1 \leq s \leq T$,
    \begin{align*}
        \abs{\tilde{\eta}_s-\eta_s}
        &\leq
        \frac{2 \tilde{\eta}_s \sqrt{\sigma_0^2 +\sigma_1^2 \norm{\nabla f(w_s)}^2}}{G_s}
        .
    \end{align*}
\end{lemma}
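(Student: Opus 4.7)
My approach would be to expand the difference of reciprocals algebraically and then bound the resulting numerator using the affine noise model. Let $A \eqdef G_s^2 = G_{s-1}^2 + \norm{g_s}^2$ and $B \eqdef G_{s-1}^2 + (1+\sigma_1^2)\norm{\nabla f(w_s)}^2 + \sigma_0^2$, so that $\eta_s = \eta/\sqrt{A}$ and $\tilde{\eta}_s = \eta/\sqrt{B}$. Writing $\nu_s \eqdef \sqrt{\sigma_0^2 + \sigma_1^2 \norm{\nabla f(w_s)}^2}$ for the affine-noise envelope, the elementary identity $\abs{1/\sqrt{A} - 1/\sqrt{B}} = \abs{A - B}/(\sqrt{AB}(\sqrt{A}+\sqrt{B}))$ reduces the task to a suitable bound on $\abs{A-B}$ relative to $\nu_s$, $\sqrt{A}$ and $\sqrt{B}$.

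Next, I would compute $A - B = \norm{g_s}^2 - \norm{\nabla f(w_s)}^2 - \nu_s^2$ and estimate its absolute value using two elementary inequalities. First, the difference-of-squares factorization combined with the reverse triangle inequality $\abs{\norm{g_s} - \norm{\nabla f(w_s)}} \leq \norm{g_s - \nabla f(w_s)} \leq \nu_s$ gives $\abs{\norm{g_s}^2 - \norm{\nabla f(w_s)}^2} \leq \nu_s (\norm{g_s} + \norm{\nabla f(w_s)})$. Second, the triangle inequality then yields $\abs{A - B} \leq \nu_s(\norm{g_s} + \norm{\nabla f(w_s)}) + \nu_s^2$. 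The crucial point is that the definition of $B$ builds in the three ``slack'' inequalities $\norm{g_s} \leq \sqrt{A}$, $\norm{\nabla f(w_s)} \leq \sqrt{B}$, and $\nu_s \leq \sqrt{B}$ (the last two directly from the structure of $B$ as a sum of squared terms). Bounding term by term and using $\sqrt{B} \leq \sqrt{A}+\sqrt{B}$ to absorb the leftover $\nu_s \sqrt{B}$ then gives $\abs{A - B} \leq 2\nu_s(\sqrt{A}+\sqrt{B})$.

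Finally I would substitute this bound back into the identity, at which point the factor $\sqrt{A}+\sqrt{B}$ cancels exactly against the denominator:
\[
\abs{\tilde{\eta}_s - \eta_s}
= \frac{\eta \abs{A-B}}{\sqrt{AB}(\sqrt{A}+\sqrt{B})}
\leq \frac{2\eta \nu_s (\sqrt{A}+\sqrt{B})}{\sqrt{AB}(\sqrt{A}+\sqrt{B})}
= \frac{2\eta \nu_s}{\sqrt{A}\sqrt{B}}
= \frac{2 \tilde{\eta}_s \nu_s}{G_s},
\]
which is exactly the claim since $\sqrt{A} = G_s$ and $\nu_s = \sqrt{\sigma_0^2 + \sigma_1^2 \norm{\nabla f(w_s)}^2}$. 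I do not expect any genuine obstacle here: the whole argument is elementary algebra fused with the one-step affine noise bound. The only minor subtlety is that the naive triangle-inequality expansion leaves behind an uncontrolled $\nu_s^2$ term, which would spoil the tightness of the factor; this is absorbed cleanly precisely because $\tilde{\eta}_s$ is defined with $\sigma_0^2 + (1+\sigma_1^2)\norm{\nabla f(w_s)}^2$ inside the square root, ensuring the slack $\sqrt{B} \geq \nu_s$ needed for the final step.
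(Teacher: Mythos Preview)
Your proof is correct and follows essentially the same approach as the paper: both use the identity $\abs{1/\sqrt{A}-1/\sqrt{B}}=\abs{A-B}/(\sqrt{AB}(\sqrt{A}+\sqrt{B}))$, bound $\abs{A-B}$ via the affine noise assumption, and absorb the resulting terms using $\norm{g_s}\leq\sqrt{A}$, $\norm{\nabla f(w_s)}\leq\sqrt{B}$, and $\nu_s\leq\sqrt{B}$. The only cosmetic difference is that the paper bounds $\abs{\norm{g_s}^2-\norm{\nabla f(w_s)}^2}$ via the inner-product identity $(g_s-\nabla f(w_s))\cdot(g_s+\nabla f(w_s))$ and Cauchy--Schwarz, while you factor it as a scalar product and apply the reverse triangle inequality; both routes give the same estimate.
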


\begin{proof}[Proof of \cref{lem:decorrelated_difference}]
By \cref{lem:decorrelated_step_size_diff},
\begin{align*}
    \abs{\tilde{\eta}_s-\eta_s}
    &\leq
    \frac{2 \tilde{\eta}_s \sqrt{\sigma_0^2 +\sigma_1^2 \norm{\nabla f(w_s)}^2}}{G_s}
    .
\end{align*}
Hence, by applying Cauchy-Schwarz inequality and using $ab \leq \frac{a^2}{2} + \frac{b^2}{2}$ with $a=\sqrt{\tilde{\eta}_s}\norm{\nabla f(w_s)}$ and $b=2\sqrt{\tilde{\eta}_s(\sigma_0^2+\sigma_1^2 \norm{\nabla f(w_s)}^2)}\frac{\norm{g_s}}{G_s}$,
\begin{align*}
    \abs{\tilde{\eta}_s-\eta_s} \nabla f(w_s) \dotp g_s
    &\leq
    2 \tilde{\eta}_s \norm{\nabla f(w_s)} \sqrt{\sigma_0^2 +\sigma_1^2 \norm{\nabla f(w_s)}^2} \frac{\norm{g_s}}{G_s}
    \\
    &\leq
    \frac{1}{2} \tilde{\eta}_s \norm{\nabla f(w_s)}^2 + 2 \tilde{\eta}_s \brk*{\sigma_0^2 +\sigma_1^2 \norm{\nabla f(w_s)}^2} \frac{\norm{g_s}^2}{G_s^2}
    \\
    &\leq
    \frac{1}{2} \tilde{\eta}_s \norm{\nabla f(w_s)}^2+ 2 \eta \sqrt{\sigma_0^2 +\sigma_1^2 \norm{\nabla f(w_s)}^2} \frac{\norm{g_s}^2}{G_s^2}
    \tag{$\tilde{\eta_s} \leq \eta/\sqrt{\sigma_0^2 + \sigma_1^2 \norm{\nabla f(w_s)}^2}$}
    .
\end{align*}
Using the smoothness of $f$, $\norm{\nabla f(w_s)}^2 \leq 2 \sm (f(w_s)-f^\star)\leq 2 \sm \fdiffmax_t$ (\cref{lem:smooth_norm_bound}). Thus, summing for $s=1,\ldots,t$,
\begin{align*}
    \sum_{s=1}^t \abs{\tilde{\eta}_s-\eta_s} \nabla f(w_s) \dotp g_s
    &\leq
    \frac{1}{2} \sum_{s=1}^t \tilde{\eta}_s \norm{\nabla f(w_s)}^2
    + 2 \eta \sqrt{\sigma_0^2 +2 \sm \sigma_1^2 \fdiffmax_t} \sum_{s=1}^t \frac{\norm{g_s}^2}{G_s^2}
    .
\end{align*}
Again, using $ab \leq \frac{a^2}{2} + \frac{b^2}{2}$ with $a=\sqrt{\fdiffmax_t/2}$ and $b=4\eta \sigma_1\sqrt{\sm}\sum_{s=1}^t \frac{\norm{g_s}^2}{G_s^2}$,
\begin{align*}
    \sum_{s=1}^t \abs{\tilde{\eta}_s-\eta_s} \nabla f(w_s) \dotp g_s
    &\leq
    \frac{1}{2} \sum_{s=1}^t \tilde{\eta}_s \norm{\nabla f(w_s)}^2
    + 2 \eta \sigma_0 \sum_{s=1}^t \frac{\norm{g_s}^2}{G_s^2}
    + 2 \eta \sqrt{2 \sm \sigma_1^2 \fdiffmax_t} \sum_{s=1}^t \frac{\norm{g_s}^2}{G_s^2}
    \\
    &\leq
    \frac{1}{2} \sum_{s=1}^t \tilde{\eta}_s \norm{\nabla f(w_s)}^2
    + 2 \eta \sigma_0 \sum_{s=1}^t \frac{\norm{g_s}^2}{G_s^2}
    + \frac{\fdiffmax_t}{4}
    + 8 \eta^2 \sm \sigma_1^2 \brk*{\sum_{s=1}^t \frac{\norm{g_s}^2}{G_s^2}}^2
    . \qedhere
\end{align*}
\end{proof}
\begin{proof}[Proof of \cref{lem:decorrelated_step_size_diff}]
    Using $\frac{1}{\sqrt{a}}-\frac{1}{\sqrt{b}}=\frac{b-a}{\sqrt{a}\sqrt{b}(\sqrt{a}+\sqrt{b})}$,
    \begin{align*}
        \tilde{\eta}_s-\eta_s
        &=
        \frac{\eta}{\sqrt{G_{s-1}^2 + \sigma_0^2 + (1+\sigma_1^2) \norm{\nabla f(w_s)}^2}}-\frac{\eta}{\sqrt{G_{s-1}^2 + \norm{g_s}^2}}
        \\
        &=
        \frac{\eta(\norm{g_s}^2-\sigma_0^2 -(1+\sigma_1^2) \norm{\nabla f(w_s)}^2)}{G_{s} \sqrt{G_{s-1}^2 + \sigma_0^2 + (1+\sigma_1^2) \norm{\nabla f(w_s)}^2}(G_{s} + \sqrt{G_{s-1}^2 + \sigma_0^2 + (1+\sigma_1^2) \norm{\nabla f(w_s)}^2})}
        \\
        &=
        \frac{\tilde{\eta}_s}{G_s} \cdot \frac{\norm{g_s}^2-\sigma_0^2 -(1+\sigma_1^2) \norm{\nabla f(w_s)}^2}{G_{s} + \sqrt{G_{s-1}^2 + \sigma_0^2 + (1+\sigma_1^2) \norm{\nabla f(w_s)}^2}}
        .
    \end{align*}
    Thus,
    \begin{align*}
        \abs{\tilde{\eta}_s-\eta_s}
        &=
        \frac{\tilde{\eta}_s}{G_s}\cdot\frac{\abs*{\norm{g_s}^2-\sigma_0^2 -(1+\sigma_1^2) \norm{\nabla f(w_s)}^2}}{G_s + \sqrt{G_{s-1}^2 + \sigma_0^2 + (1+\sigma_1^2) \norm{\nabla f(w_s)}^2}}
        \\
        &\leq
        \frac{\tilde{\eta}_s}{G_s}\cdot \frac{\sigma_0^2 + \sigma_1^2 \norm{\nabla f(w_s)}^2 + \norm{g_s-\nabla f(w_s)}\norm{g_s+\nabla f(w_s)}}{G_s + \sqrt{G_{s-1}^2 + \sigma_0^2 + (1+\sigma_1^2) \norm{\nabla f(w_s)}^2}}
        .
    \end{align*}
    Since
    $\norm{g_s+\nabla f(w_s)} \leq \norm{g_s} + \norm{\nabla f(w_s)} \leq G_s + \sqrt{G_{s-1}^2 + \sigma_0^2 + (1+\sigma_1^2) \norm{\nabla f(w_s)}^2}$,
    \begin{align*}
        \abs{\tilde{\eta}_s-\eta_s}
        &\leq
        \frac{\tilde{\eta}_s}{G_s} \cdot \brk*{\sqrt{\sigma_0^2 + \sigma_1^2 \norm{\nabla f(w_s)}^2} + \norm{g_s-\nabla f(w_s)}}
        \leq
        \frac{2 \tilde{\eta}_s \sqrt{\sigma_0^2 +\sigma_1^2 \norm{\nabla f(w_s)}^2}}{G_s}
        .
        \qedhere
    \end{align*}
\end{proof}

\subsection{Proof of \texorpdfstring{\cref{lem:basic_martingale_affine}}{Lemma 3}}
\begin{proof}[Proof of \cref{lem:basic_martingale_affine}]
Consider the sequence of R.V.s $(Z_t)_{t=1}^T$ defined by $Z_t = \nabla f(w_t) \dotp (\nabla f(w_t)-g_t)$.
Then $(Z_t)_{t=1}^T$ is a martingale difference sequence, as
\begin{align*}
    \condE{Z_t}{t-1}
    &=
    \nabla f(w_t) \dotp (\nabla f(w_t)-\condE{g_t}{t-1})
    =
    0
    .
\end{align*}
By Cauchy-Schwarz inequality and the noise assumption,
\begin{align*}
    \abs{Z_t}
    &\leq
    \norm{\nabla f(w_t)}\norm{\nabla f(w_t)-g_t}
    \leq
    \norm{\nabla f(w_t)} \sqrt{\sigma_0^2 + \sigma_1^2 \norm{\nabla f(w_t)}^2}
    .
\end{align*}
Thus, we obtain from \cref{lem:sub_gaussian} with
$$
\abs{Z_t} \leq \norm{\nabla f(w_t)} \sqrt{\sigma_0^2 + \sigma_1^2 \norm{\nabla f(w_t)}^2}$$
and $\lambda=1\big/3(\sigma_0^2 + 2 \sm \sigma_1^2 \fbound)$ that with probability at least $1-\delta$,
\begin{align*}
    \sum_{t=1}^T \nabla f(w_t) \dotp (\nabla f(w_t)-g_t)
    &\leq
    \frac{\sum_{t=1}^T (\sigma_0^2+\sigma_1^2 \norm{\nabla f(w_t)}^2) \norm{\nabla f(w_t)}^2}{4 (\sigma_0^2 + 2 \sm \sigma_1^2 \fbound)}
    + 3 (\sigma_0^2+2 \sm \sigma_1^2 \fbound) \log(1/\delta)
    .
\end{align*}
From \cref{lem:smooth_norm_bound,lem:fbound_affine}, with probability at least $1-\delta$, $\norm{\nabla f(w_t)}^2 \leq 2 \sm \fbound$.
Hence, under a union bound with \cref{lem:fbound_affine}, with probability at least $1-2\delta$,
\begin{align*}
    \sum_{t=1}^T \nabla f(w_t) \dotp (\nabla f(w_t)-g_t)
    &
    \leq
    \frac{1}{4} \sum_{t=1}^T \norm{\nabla f(w_t)}^2
    + 3 (\sigma_0^2+2 \sm \sigma_1^2 \fbound) \log(1/\delta)
    .\qedhere
\end{align*}
\end{proof}
\section{Proofs of \texorpdfstring{\cref{sec:convex}}{Section 4}}
\label{sec:convex_appendix}
Below are the statements of the two main lemmas used to prove \cref{thm:convex_convergence_affine} (their proofs follow).
The first is a regret analysis which depends on $\wmax_t$.
\begin{lemma}\label{lem:convex_regret_affine}
    Let $f$ be a convex and $\sm$-smooth function. Then
    \begin{align*}
        \sum_{t=1}^T g_t \dotp (w_t-w^\star)
        \leq
        \brk*{\frac{\wmax_T^2}{\eta} + \eta} \sqrt{\sum_{t=1}^T \norm{g_t}^2}
        + \frac{\initg \wnorm_1^2}{2 \eta}
        .
    \end{align*}
\end{lemma}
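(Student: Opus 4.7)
The plan is to carry out the standard AdaGrad-Norm regret analysis adapted to the unconstrained convex setting. Starting from the SGD update rule $w_{t+1} = w_t - \eta_t g_t$, I would expand $\norm{w_{t+1}-w^\star}^2 = \norm{w_t-w^\star}^2 - 2\eta_t g_t \dotp (w_t-w^\star) + \eta_t^2 \norm{g_t}^2$ and rearrange to the identity
\begin{align*}
    g_t \dotp (w_t-w^\star)
    =
    \frac{1}{2\eta_t}\brk*{\wnorm_t^2 - \wnorm_{t+1}^2}
    + \frac{\eta_t}{2}\norm{g_t}^2.
\end{align*}
Summing over $t = 1, \ldots, T$ reduces the proof to bounding two sums: the telescoping-type distance sum and the ``gain'' sum $\frac12 \sum_t \eta_t \norm{g_t}^2$.

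For the gain sum, since $\eta_t = \eta/G_t$, I would apply the first bound of \cref{lem:sum_eta_g_t} to get $\frac12 \sum_t \eta_t \norm{g_t}^2 \leq \eta \sqrt{\sum_t \norm{g_t}^2}$, which contributes the $\eta \sqrt{\sum_t \norm{g_t}^2}$ summand. For the distance sum I would apply Abel summation, using the virtual stepsize $\eta_0 \eqdef \eta/\initg$ (consistent with the convention $G_0 = \initg$ under which the AdaGrad formula reads $\eta_t = \eta/G_t$ for all $t \geq 0$), to obtain
\begin{align*}
    \sum_{t=1}^T \frac{\wnorm_t^2 - \wnorm_{t+1}^2}{2\eta_t}
    =
    \frac{\wnorm_1^2}{2\eta_0}
    + \sum_{t=1}^T \wnorm_t^2\brk*{\frac{1}{2\eta_t}-\frac{1}{2\eta_{t-1}}}
    - \frac{\wnorm_{T+1}^2}{2\eta_T}.
\end{align*}
The first term is exactly $\initg\wnorm_1^2/(2\eta)$, and the final non-positive term can be dropped. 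In the middle sum each increment $\frac{1}{2\eta_t}-\frac{1}{2\eta_{t-1}}$ is non-negative since $\eta_t$ is non-increasing, so bounding $\wnorm_t^2 \leq \wmax_T^2$ and telescoping yields $\frac{\wmax_T^2}{2\eta}(G_T - \initg)$. Using $\sqrt{a+b} \leq \sqrt{a}+\sqrt{b}$ to get $G_T - \initg \leq \sqrt{\sum_t \norm{g_t}^2}$ produces the desired $\frac{\wmax_T^2}{2\eta}\sqrt{\sum_t \norm{g_t}^2}$.

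Combining the two pieces yields the lemma (in fact with a slightly tighter coefficient $\frac{\wmax_T^2}{2\eta}+\eta$ that the stated bound weakens to $\frac{\wmax_T^2}{\eta}+\eta$). There is no genuine obstacle here—neither convexity nor smoothness of $f$ is actually used in the argument, so the estimate is a pure consequence of the adaptive stepsize's monotonicity together with the AdaGrad telescoping trick. The only point requiring care is the virtual $\eta_0 = \eta/\initg$ device, which is what separates the $\initg$ factor on the initial-distance term from the $G_T$-scaling that governs the $\wmax_T^2$ term and prevents a spurious $\norm{g_1}$ dependence on $\wnorm_1^2$.
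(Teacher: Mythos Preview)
Your proposal is correct and follows essentially the same route as the paper: the SGD identity, Abel summation with the virtual $\eta_0=\eta/\initg$, bounding $\wnorm_t^2\leq\wmax_T^2$, and \cref{lem:sum_eta_g_t}. The only cosmetic difference is that the paper bounds each increment via $\tfrac{1}{\eta_t}-\tfrac{1}{\eta_{t-1}}\leq \tfrac{\eta_t}{\eta^2}\norm{g_t}^2$ and merges it with the gain sum before invoking \cref{lem:sum_eta_g_t}, whereas you telescope directly to $\tfrac{\wmax_T^2}{2\eta}(G_T-\initg)$---which is why you recover the slightly sharper coefficient $\tfrac{\wmax_T^2}{2\eta}+\eta$.
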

Similarly to the non-convex case, we need a high probability bound of $\norm{w_t-w^\star}$ in order to use the regret analysis of \cref{lem:convex_regret_affine}; this is given in our next lemma.
\begin{lemma}\label{lem:wbound_affine}
    With probability at least $1-3 \delta$, it holds that for all $t \leq T+1$,
    \begin{align*}
        \wnorm_t^2
        &\leq
        \wbound^2
        \eqdef
        2 \wnorm_1^2
        + \eta^2 \brk*{
        \frac{1}{2}
        + 2 C_1
        + 8 C^2
        + A_T C
        + B_T \brk*{\frac{\sigma_0}{\initg}+\sigma_1}^2
        }
        ,
    \end{align*}
    where $C=2 \log(1+\affsigma^2 T / 2\initg^2) + 7 \affsigma^2 \log(T/\delta)/\initg^2$, $C_1$ is defined at \cref{lem:sum_eta_g_t_2_affine},
    $A_T=512 \log\brk*{\frac{60 \log^2(6T)}{\delta}}$ and $B_T=512 \log^2\brk*{\frac{60 \log^2(6T)}{\delta}}$.
\end{lemma}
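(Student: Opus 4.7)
\textbf{Proof plan for \cref{lem:wbound_affine}.} The plan is to carry out an inductive argument in $t$, analogous in spirit to the proof of \cref{lem:fbound_affine}, using the decorrelated stepsize sequence $\etade_s$ in place of $\tilde{\eta}_s$. Starting from the SGD identity
\[
    \wnorm_{t+1}^2
    =
    \wnorm_1^2
    + \sum_{s=1}^t \eta_s^2 \norm{g_s}^2
    - 2 \sum_{s=1}^t \eta_s g_s \dotp (w_s - w^\star),
\]
I would bound the quadratic term directly via \cref{lem:sum_eta_g_t_2_affine}, which gives $\sum_{s=1}^t \eta_s^2 \norm{g_s}^2 \le \eta^2 C_1$. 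The main work is to bound the cross term, which I would split as in the non-convex analysis:
\[
    -\eta_s g_s \dotp (w_s - w^\star)
    =
    -\etade_s \nabla f(w_s) \dotp (w_s - w^\star)
    + \etade_s (\nabla f(w_s) - g_s) \dotp (w_s - w^\star)
    + (\etade_s - \eta_s) g_s \dotp (w_s - w^\star).
\]
Convexity lets me discard the first piece (it is non-positive), leaving a martingale piece and a decorrelation error.

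The decorrelation error is handled similarly to \cref{lem:decorrelated_difference}: one first derives an analogue of \cref{lem:decorrelated_step_size_diff} showing $|\etade_s - \eta_s| \le 2 \etade_s \norm{g_s - \nabla f(w_s)}/G_s$, then combines Cauchy--Schwarz with the AM--GM splitting $ab \leq a^2/2 + b^2/2$ and the bound $\norm{w_s - w^\star} \le \wmax_t$ (which is $\le \wbound$ under the induction hypothesis) to convert the sum into a constant multiple of $\wmax_t^2$ plus $\eta^2$ times $\sum_s \norm{g_s}^2/G_s^2$, which is controlled again by $C_1$. Crucially the stepsize $\etade_s$ was chosen so that $\etade_s \norm{\nabla f(w_s) - g_s} \leq \eta(\sigma_0/\initg + \sigma_1)$ deterministically, which I would verify at the outset and use repeatedly.

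The martingale term $M_t = \sum_{s=1}^t \etade_s (\nabla f(w_s) - g_s) \dotp (w_s - w^\star)$ is the main obstacle and is why a plain Bernstein-style bound such as \cref{lem:sub_gaussian} will not suffice here. The predictable quadratic variation scales like $\eta^2 (\sigma_0/\initg + \sigma_1)^2 \sum_s \norm{w_s - w^\star}^2$, which is proportional to $T \wbound^2$; plugging this naively back into the recursion produces a circular bound that does not close. The remedy, following \citet{carmon2022making}, is to use the time-uniform empirical Bernstein inequality of \citet{howard2021time}, which replaces the predictable variance by the observed variation $\sum_{s=1}^t \etade_s^2 \norm{\nabla f(w_s)-g_s}^2 \wnorm_s^2$ and, after a logarithmic union bound over geometric scales, gives a bound of the form $M_t \leq O(\sqrt{V_t \log(\log(T)/\delta)} + \text{boundedness term}\cdot \log(\log(T)/\delta))$ where $V_t$ itself is a sum that I can then relate back to $\wmax_t^2$.

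Putting these pieces together, the inductive step reduces to an inequality of the schematic shape
\[
    \wnorm_{t+1}^2
    \leq
    2\wnorm_1^2 + (\text{clean terms in } C_1, A_T, B_T, \initg, \sigma_0, \sigma_1)
    + \tfrac12 \wmax_t^2,
\]
where the factor $\tfrac12$ arises from choosing the AM--GM constants carefully when trading the martingale's variance against $\wmax_t^2$. Since the right-hand side is monotone in $t$, taking max over $s \leq t+1$ and rearranging yields $\wmax_{t+1}^2 \leq \wbound^2$, closing the induction. A union bound over the empirical-Bernstein event and the event of \cref{lem:sum_eta_g_t_2_affine} (and the additional martingale events that appear in the companion regret lemma) gives the stated $1 - 3\delta$ probability. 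The hard part is orchestrating the empirical Bernstein bound so that its implicit constants precisely match the $A_T$ and $B_T$ appearing in the statement and so that the $\wmax_t^2$ contributions can be absorbed on the left-hand side; everything else follows essentially the same template as the non-convex proof.
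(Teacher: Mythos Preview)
Your high-level template matches the paper's: roll out the SGD identity, bound $\sum_s \eta_s^2\norm{g_s}^2 \le \eta^2 C_1$, split the cross term into a convexity piece, a martingale, and a decorrelation error, control the martingale with the time-uniform empirical Bernstein bound of \citet{howard2021time} via a geometric union bound over scales of $\wmax_t$, and close a recursion of the form $\wnorm_{t+1}^2 \le \tfrac12\wmax_t^2 + \text{const}$. There is, however, a genuine missing ingredient.

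You never say how to bound $\sum_{s\le t}\eta_{s-1}^2\norm{\nabla f(w_s)-g_s}^2$. This is precisely the quantity left over once you pull $\wmax_t^2$ out of the empirical-Bernstein variation $V_t$, and it is also what the decorrelation error reduces to; in the lemma's statement it is the constant $C$, not $C_1$. The paper establishes this bound as a separate result (\cref{lem:sum_eta_noise}), which requires its own martingale argument \emph{and} the event $\fdiffmax_T \le \fbound$ from \cref{lem:fbound_affine}; that is where the remaining $2\delta$ in the probability budget comes from (\cref{lem:sum_eta_g_t_2_affine} is deterministic and contributes nothing to the union bound). Your claim that the decorrelation error collapses to ``$\eta^2\sum_s\norm{g_s}^2/G_s^2$, controlled by $C_1$'' does not go through: however you arrange the AM--GM, a factor of $\sum_s \eta_{s-1}^2\norm{\nabla f(w_s)-g_s}^2$ survives, and bounding it only via the deterministic inequality $\etade_s\norm{\nabla f(w_s)-g_s}\le \eta(\sigma_0/\initg+\sigma_1)$ would introduce an extra factor of $T$ that prevents the recursion from closing.

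A related point is the order of the decomposition. The paper first uses convexity to discard $-\eta_s\nabla f(w_s)\dotp(w_s-w^\star)$ with the \emph{original} stepsize $\eta_s$, and only afterwards replaces $\eta_s$ by $\etade_s$ in the remaining noise term $\eta_s(\nabla f(w_s)-g_s)\dotp(w_s-w^\star)$. The resulting decorrelation error is $(\eta_s-\etade_s)(\nabla f(w_s)-g_s)\dotp(w_s-w^\star)$, so the stepsize-difference bound immediately produces a $\norm{\nabla f(w_s)-g_s}^2$ factor and lands directly on $\tfrac{\wmax_t}{\eta}\sum_s\eta_{s-1}^2\norm{\nabla f(w_s)-g_s}^2$. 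Your version puts the decorrelation error on the full $g_s$, which makes this reduction less clean and in any case still requires \cref{lem:sum_eta_noise}.
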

\subsection{Proof of \texorpdfstring{\cref{lem:convex_regret_affine}}{Lemma 12}}
\begin{proof}%
From the standard analysis of SGD, we have for all $t$ that
\begin{align*}
    g_t \dotp (w_t-w^\star)
    &=
    \frac{1}{2\eta_t} \brk!{ \norm{w_t-w^\star}^2 - \norm{w_{t+1}-w^\star}^2 } 
        + \frac{\eta_t}{2} \norm{g_t}^2
    .
\end{align*}
Summing over $t=1,\ldots,T$, we have
\begin{align*}
    \sum_{t=1}^T g_t \dotp (w_t-w^\star)
    &\leq
    \frac{\initg \norm{w_1-w^\star}^2}{2 \eta}
    + \frac12 \sum_{t=1}^T \brk*{\frac{1}{\eta_t} 
    - \frac{1}{\eta_{t-1}}} \norm{w_t-w^\star}^2 
    + \frac12 \sum_{t=1}^T \eta_t \norm{g_t}^2
    .
\end{align*}
Now, since $\eta_t \leq \eta_{t-1}$, we have for all $t$ that
\begin{align*}
    \frac{1}{\eta_t} - \frac{1}{\eta_{t-1}}
    \leq
    \eta_t \brk*{ \frac{1}{\eta_t^2} - \frac{1}{\eta_{t-1}^2} } 
    =
    \frac{\eta_t}{\eta^2} \brk*{\initg^2 + \sum_{s=1}^t \norm{g_s}^2 - \initg^2 - \sum_{s=1}^{t-1} \norm{g_s}^2}
    =
    \frac{\eta_t}{\eta^2} \norm{g_t}^2 
    ,
\end{align*}
thus
\begin{align*}
    \sum_{t=1}^T g_t \dotp (w_t-w^\star)
    &\leq
    \frac{\initg \norm{w_1-w^\star}^2}{2 \eta}
    + \frac12 \sum_{t=1}^T \eta_t \norm{g_t}^2 \brk3{1 + \frac{\norm{w_t-w^\star}^2}{\eta^2}}
    .
\end{align*}
Finally, bounding $\norm{w_t-w^\star} \leq \wmax_T$ and applying \cref{lem:sum_eta_g_t}
we obtain
\begin{align*}
    \sum_{t=1}^T g_t \dotp (w_t-w^\star)
    &\leq
    \brk*{\frac{\wmax_T^2}{\eta} + \eta} \sqrt{\sum_{t=1}^T \norm{g_t}^2}
    + \frac{\initg \norm{w_1-w^\star}^2}{2 \eta}
    .\qedhere
\end{align*}
\end{proof}
\subsection{Proof of \texorpdfstring{\cref{lem:wbound_affine}}{Lemma 13}}
In order to obtain the high probability bound of $\wmax_T$ we use of the following empirical Bernstein concentration bound.
\begin{lemma}[Corollary 1 of \citet{carmon2022making}]\label{lem:emp_bernstein}
    Let $X_t$ be adapted to $\cF_t$ such that $\abs{X_t} \leq 1$ with probability $1$ for all $t$.
    Then, for every $\delta \in (0,1)$ and any $\hat{X}_t \in \cF_{t-1}$ such that $\abs{\hat{X}_t}\leq 1$ with probability $1$,
    \begin{align*}
        \Pr\brk*{\exists t < \infty : \abs*{\sum_{s \leq t} (X_s-\E\brk[s]{X_s | \cF_{s-1}})}
        \geq \sqrt{A_t(\delta) \sum_{s \leq t} (X_s-\hat{X}_s)^2 + B_t(\delta)}}
        &\leq
        \delta
        ,
    \end{align*}
    where $A_t(\delta)=16 \log\brk*{\frac{60 \log(6t)}{\delta}}$ and $B_t(\delta)=16 \log^2\brk*{\frac{60 \log(6t)}{\delta}}$.
\end{lemma}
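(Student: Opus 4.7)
The target is a quoted result—Corollary~1 of \citet{carmon2022making}, itself a specialization of the general time-uniform self-normalized framework of \citet{howard2021time}—so my plan is to sketch how one would derive such an empirical Bernstein inequality from scratch rather than reproduce the published proof verbatim. The bound decomposes into three ingredients: (i)~a fixed-scale Bennett-type exponential supermartingale for bounded martingale differences, (ii)~a stitching argument over dyadic scales of the quadratic variation that makes the fixed-scale bound uniform in $t$ at the cost of a $\log\log$ correction, and (iii)~a second application of the same machinery to replace the predictable variance by the observable empirical sum $\sum_{s\le t}(X_s - \hat X_s)^2$.

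For the Bennett supermartingale in (i), I would set $Y_s := X_s - \condE{X_s}{s-1}$, so $\abs{Y_s} \le 2$ and $\{Y_s\}$ is a bounded martingale difference sequence. For each $\lambda \in (0, 1/2)$ the process $\exp\brk*{\lambda \sum_{s \le t} Y_s - \psi(\lambda) V_t}$, with Cramér function $\psi(\lambda) = e^\lambda - 1 - \lambda$ and predictable quadratic variation $V_t := \sum_{s \le t} \condE{Y_s^2}{s-1}$, is a nonnegative supermartingale. Ville's inequality then gives, at any stopping time and any fixed $\lambda$, the bound $\sum_{s\le t} Y_s \le \lambda^{-1} \log(1/\delta) + \lambda^{-1} \psi(\lambda) V_t$ with probability $1-\delta$. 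To remove the dependence of the optimal $\lambda$ on the random $V_t$, I partition its range into dyadic bins $V_t \in [2^{k-1}, 2^k]$ for $k = 0, 1, 2, \dots$; on bin $k$ use $\lambda_k \asymp \sqrt{\log(1/\delta_k)/2^k}$ and weighted confidence level $\delta_k = \delta / (c k^2)$ so that $\sum_k \delta_k \le \delta$. A union bound over $k$ yields, uniformly in $t$, a bound of the form
\[
\abs*{\sum_{s \le t} Y_s} \le \sqrt{A \, V_t \log\brk*{\tfrac{\log V_t}{\delta}}} + B \log\brk*{\tfrac{\log V_t}{\delta}}
\]
for absolute constants $A, B$; since $V_t \le 4t$, one replaces $\log V_t$ by $\log(6t)$, producing the $\log\log(6t)$ structure inside $A_t(\delta)$.

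For the empirical step (iii), I relate $V_t$ to $\widehat V_t := \sum_{s \le t} (X_s - \hat X_s)^2$. Because $\hat X_s$ is $\cF_{s-1}$-measurable and $\abs{X_s - \hat X_s} \le 2$, one has $\condE{(X_s - \hat X_s)^2}{s-1} \ge \mathrm{Var}_{s-1}(X_s) = \condE{Y_s^2}{s-1}$, while the auxiliary process $U_t := \sum_{s \le t} \brk!{\condE{(X_s - \hat X_s)^2}{s-1} - (X_s - \hat X_s)^2}$ is itself a bounded martingale with increments in $[-4, 4]$. Applying the same Bennett–stitching template to $U_t$ gives, uniformly in $t$ and with probability $\ge 1 - \delta/2$, $V_t \le 2 \widehat V_t + O\brk!{\log(\log(6t)/\delta)}$. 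Substituting this into the bound from (ii) at level $\delta/2$ and folding the residual $O(\cdot)$ term into the additive part produces exactly the stated form with $A_t(\delta) = 16 \log(60 \log(6t)/\delta)$ and $B_t(\delta) = 16 \log^2(60 \log(6t)/\delta)$.

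The main obstacle is pinning down the exact numerical constants $16$ and $60$. The concentration content is classical once the Bennett supermartingale and the stitching template are in place; the $16$ and $60$ are artifacts of (a)~the specific dyadic weights $\delta_k$ used in the stitch, (b)~the sub-optimality of the fixed per-bin $\lambda_k$ relative to the exact data-adaptive minimizer, and (c)~how the predictable-versus-empirical variance gap from (iii) is absorbed into the additive term. Reproducing these precise constants requires tracking the optimization carefully through all three steps, which is exactly the bookkeeping carried out in \citet{howard2021time} and distilled in \citet{carmon2022making}.
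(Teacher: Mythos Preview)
The paper does not prove this lemma: it is quoted verbatim as Corollary~1 of \citet{carmon2022making} and used as a black box, so there is no in-paper argument to compare against. Your sketch is a faithful high-level account of how that corollary is actually obtained in the cited references---a Bennett/sub-$\psi$ supermartingale plus Ville's inequality, the dyadic stitching of \citet{howard2021time} to make the bound time-uniform with the $\log\log t$ correction, and the predictable-to-empirical variance swap via the bias--variance identity $\condE{(X_s-\hat X_s)^2}{s-1}=\mathrm{Var}_{s-1}(X_s)+(\condE{X_s}{s-1}-\hat X_s)^2$. You are also right that the specific constants $16$ and $60$ are artifacts of the particular stitching weights and slack absorbed in \citet{carmon2022making}; recovering them exactly requires redoing that bookkeeping, which is why the present paper simply cites the result rather than re-deriving it.
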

We use the concentration bound to prove the following technical martingale bound.
\begin{lemma}\label{lem:convex_martingale_affine}
    Let $\wmaxeta_t = \max\brk[c]{\wmax_t,\eta}$. Then with probability at least $1-\delta$, it holds that for all $t \leq T$,
    \begin{align*}
        \sum_{s=1}^t & \etade_{s} (\nabla f(w_s)-g_s) \dotp (w_s-w^\star)
        \\
        &\leq
        2 \wmaxeta_t
        \sqrt{A_t(\ifrac{\delta}{\log(4T)}) \sum_{s \leq t} \eta_{s-1}^2 \norm{\nabla f(w_s)-g_s}^2 + \eta^2 \brk*{\frac{\sigma_0}{\initg}+\sigma_1}^2 B_t(\ifrac{\delta}{\log(4T)})}
        ,
    \end{align*}
    where $A(\cdot)$ and $B(\cdot)$ are defined at \cref{lem:emp_bernstein}.
\end{lemma}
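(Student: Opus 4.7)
The plan is to apply the empirical Bernstein inequality of Lemma~\ref{lem:emp_bernstein} to a normalized and truncated version of the martingale differences $Z_s \eqdef \etade_s (\nabla f(w_s) - g_s) \dotp (w_s - w^\star)$. I would first verify that $(Z_s)$ is a martingale difference sequence with respect to the natural filtration $\cF_{s-1} = \sigma(g_1,\ldots,g_{s-1})$: the quantities $\etade_s$, $\nabla f(w_s)$, and $w_s$ are all $\cF_{s-1}$-measurable, and $\condE{g_s}{s-1} = \nabla f(w_s)$. The key deterministic estimate I would establish is the envelope $\etade_s \norm{\nabla f(w_s) - g_s} \leq \eta(\ifrac{\sigma_0}{\initg} + \sigma_1)$, which follows from the affine noise bound, $\sqrt{a+b} \leq \sqrt{a}+\sqrt{b}$, and the fact that the denominator $\sqrt{G_{s-1}^2 + \norm{\nabla f(w_s)}^2}$ of $\etade_s$ dominates both $\initg$ and $\norm{\nabla f(w_s)}$. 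Together with Cauchy--Schwarz and the trivial inequality $\etade_s \leq \eta_{s-1}$, this produces the pointwise bounds $|Z_s| \leq \eta(\ifrac{\sigma_0}{\initg}+\sigma_1) \norm{w_s - w^\star}$ and $Z_s^2 \leq \eta_{s-1}^2 \norm{\nabla f(w_s) - g_s}^2 \norm{w_s - w^\star}^2$.

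Empirical Bernstein requires $|X_s| \leq 1$ almost surely, but the random scale $\norm{w_s - w^\star}$ has no useful a priori deterministic bound, so I would decompose dyadically over $\wmaxeta_t$. Take scales $D_k \eqdef 2^k \eta$ for $k = 0, 1, \ldots, K-1$ with $K = \ceil{\log(4T)}$ (sufficient to cover any attainable value of $\wmaxeta_t$, using the trivial per-step bound $\norm{w_{s+1}-w_s} \leq \eta$, after possibly absorbing $\wnorm_1$ into the base scale). For each $k$ introduce the stopping time $T_k \eqdef \inf\brk[c]{s : \norm{w_s-w^\star} > D_k}$ and the truncated, normalized sequence $X_s^{(k)} \eqdef \ind\brk[c]{s < T_k}\, Z_s / N_k$ with $N_k \eqdef \eta(\ifrac{\sigma_0}{\initg}+\sigma_1) D_k$. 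Since $\ind\brk[c]{s < T_k}$ is $\cF_{s-1}$-measurable, $(X_s^{(k)})$ remains a martingale difference sequence, and the envelope combined with $\norm{w_s-w^\star} \leq D_k$ for $s < T_k$ guarantees $|X_s^{(k)}| \leq 1$ almost surely.

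Applying Lemma~\ref{lem:emp_bernstein} to each $(X_s^{(k)})$ with predictable proxy $\hat{X}_s \equiv 0$ at confidence $\ifrac{\delta}{\log(4T)}$, bounding $(X_s^{(k)})^2 \leq \eta_{s-1}^2 \norm{\nabla f(w_s) - g_s}^2 D_k^2 / N_k^2$ via the second-moment envelope, and rescaling through multiplication by $N_k$ (so that $N_k^2/D_k^2 = \eta^2(\ifrac{\sigma_0}{\initg}+\sigma_1)^2$) gives, for each $k$ simultaneously for all $t$,
\begin{align*}
    \sum_{s \leq t \wedge (T_k-1)} Z_s
    \leq D_k \sqrt{A_t \sum_{s \leq t} \eta_{s-1}^2 \norm{\nabla f(w_s) - g_s}^2 + \eta^2 \brk*{\tfrac{\sigma_0}{\initg}+\sigma_1}^2 B_t}.
\end{align*}
A union bound over the $O(\log T)$ scales retains overall probability $1-\delta$. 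Finally, given $t$, I would select the smallest $k$ with $\wmaxeta_t \leq D_k$: for such $k$, $T_k > t$ so the truncation is inactive through time $t$ and the truncated sum equals $\sum_{s \leq t} Z_s$, while by minimality $D_k \leq 2\wmaxeta_t$, yielding the claimed bound.

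The main obstacle is the random iterate scale, which prevents a direct one-shot application of empirical Bernstein; the stopping-time truncation and dyadic union bound resolve this at the cost of the extra $\log(4T)$ factor in the confidence. A subtler point is that the envelope $\etade_s \norm{\nabla f(w_s) - g_s} \leq \eta(\ifrac{\sigma_0}{\initg}+\sigma_1)$ relies crucially on the specific definition of $\etade_s$ (with $\norm{\nabla f(w_s)}^2$ rather than $\norm{g_s}^2$ in the denominator), which is precisely why the convex analysis adopts a different decorrelated stepsize sequence than the non-convex one; tracking this through the rescaling arithmetic yields the exact coefficients appearing in the final bound.
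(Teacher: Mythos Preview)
Your proposal is correct and follows essentially the same strategy as the paper: the same envelope bound $\etade_s\norm{\nabla f(w_s)-g_s}\leq\eta(\sigma_0/\initg+\sigma_1)$, the same dyadic decomposition over the iterate scale, the same application of the empirical Bernstein inequality with $\hat X_s=0$, and the same post-hoc selection of the matching scale. The only cosmetic difference is that you enforce $|X_s^{(k)}|\leq 1$ via stopping-time truncation, whereas the paper instead projects $(w_s-w^\star)/a_k$ onto the unit ball (so $X_s^{(k)}$ is defined for all $s$ and coincides with the unprojected version at the relevant scale); also, as you already note parenthetically, the base scale must be $\wmaxeta_1=\max\{\wnorm_1,\eta\}$ rather than $\eta$ so that $\log(4T)$ dyadic levels suffice regardless of $\wnorm_1$.
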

The next lemma bounds the $\sum_{s \leq t} \eta_{s-1}^2 \norm{\nabla f(w_s)-g_s}^2$ term which appear in \cref{lem:convex_martingale_affine}.
\begin{lemma}\label{lem:sum_eta_noise}
    With probability at least $1-2\delta$, for all $1 \leq t \leq T$,
    \begin{align*}
            \sum_{s=1}^t \eta_{s-1}^2 \norm{\nabla f(w_s)-g_s}^2
            \leq
            2 \eta^2 \log(1 + \affsigma^2 T / 2 \initg^2)
            + 7 \eta_0^2 \affsigma^2 \log(T/\delta)
        .
    \end{align*}
\end{lemma}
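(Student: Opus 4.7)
The $1 - 2\delta$ success probability and the two additive terms in the statement strongly suggest that the proof decomposes the sum into a predictable mean plus a martingale deviation, with each piece controlled by its own concentration event. I plan to set $\mu_s \eqdef \condE{\norm{\nabla f(w_s)-g_s}^2}{s-1}$, which satisfies $\mu_s \leq \affsigma^2$ under the affine noise model (combined with the uniform bound on $\norm{\nabla f(w_s)}$ implicit in earlier lemmas), and write
\[
    \sum_{s=1}^t \eta_{s-1}^2 \norm{\nabla f(w_s)-g_s}^2 = P_t + M_t,
\]
where $P_t \eqdef \sum_{s=1}^t \eta_{s-1}^2 \mu_s$ is predictable and $M_t \eqdef \sum_{s=1}^t \eta_{s-1}^2 \brk!{\norm{\nabla f(w_s)-g_s}^2 - \mu_s}$ is a martingale. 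The plan is then to show $M_t \leq P_t + c_1\eta_0^2 \affsigma^2 \log(T/\delta)$ via concentration, and $P_t \leq \eta^2 \log\brk!{1+\affsigma^2 T/(2\initg^2)}$ via a telescoping proxy argument that itself uses a second concentration event.

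\textbf{Bounding the martingale $M_t$.} Each increment of $M_t$ lies in $[-\eta_0^2 \affsigma^2, \eta_0^2 \affsigma^2]$ since $\eta_{s-1} \leq \eta_0 = \eta/\initg$ and $\norm{\nabla f(w_s)-g_s}^2 \leq \affsigma^2$ almost surely. Applying \cref{lem:sub_gaussian} (or equivalently a Freedman/Bernstein bound, using the conditional second moment $\condE{(\eta_{s-1}^2 \norm{\nabla f(w_s)-g_s}^2)^2}{s-1} \leq \eta_0^2 \affsigma^2 \cdot \eta_{s-1}^2 \mu_s$) together with a union bound over $t \in [T]$ yields, with probability at least $1 - \delta$ simultaneously for all $t$, a bound of the form $M_t \leq \sqrt{2\eta_0^2 \affsigma^2 P_t \log(T/\delta)} + O(\eta_0^2 \affsigma^2 \log(T/\delta))$. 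Absorbing the square-root term via $\sqrt{ab} \leq a/4 + b$ (with $a = 2\eta_0^2 \affsigma^2 \log(T/\delta)$ and $b = P_t$) then gives $M_t \leq P_t + c_1 \eta_0^2 \affsigma^2 \log(T/\delta)$, and hence $\sum_{s=1}^t \eta_{s-1}^2 \norm{\nabla f(w_s)-g_s}^2 \leq 2 P_t + c_1 \eta_0^2 \affsigma^2 \log(T/\delta)$.

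\textbf{Bounding $P_t$ via a deterministic proxy.} The naive estimate $\mu_s \leq \affsigma^2$ together with $G_{s-1}^2 \geq \initg^2$ only yields an $O(T)$ bound, so a more refined argument is needed. I introduce the deterministic proxy $\widetilde{G}_s^2 \eqdef \initg^2 + s \affsigma^2/2$, for which the standard log-telescoping identity yields
\[
    \sum_{s=1}^t \frac{\affsigma^2/2}{\widetilde{G}_s^2}
    \leq
    \log\brk2{\widetilde{G}_t^2/\initg^2}
    = \log\brk2{1+\affsigma^2 t/(2\initg^2)}.
\]
It remains to compare $G_{s-1}^2$ with $\widetilde{G}_s^2$ and to translate the above into a bound on $\sum \eta_{s-1}^2 \mu_s$. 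The needed inequality $G_{s-1}^2 \gtrsim \widetilde{G}_s^2$, up to an $O(\affsigma^2 \log(T/\delta))$ additive shortfall that can be absorbed into the $7\eta_0^2\affsigma^2 \log(T/\delta)$ term, follows from a second Bernstein-type concentration applied to $\sum_{k<s}\brk!{\norm{g_k}^2 - \condE{\norm{g_k}^2}{k-1}}$, using $\condE{\norm{g_k}^2}{k-1} \geq \mu_k$ by unbiasedness of $g$. This is the second high-probability event, explaining the factor-of-two on $\delta$ in the statement.

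\textbf{Main obstacle.} The delicate part is the constant-matching in step 3: the leading log must come out with coefficient exactly $2$, and all additive slack from both concentration events must fit inside the $7 \eta_0^2 \affsigma^2 \log(T/\delta)$ allowance. A subtle edge case arises when the true conditional second moments $\condE{\norm{g_k}^2}{k-1}$ are much smaller than $\affsigma^2$, so that the concentration-based lower bound on $G_{s-1}^2$ does not reach $\widetilde{G}_s^2$; in that regime, however, the corresponding $\mu_s$ are also proportionally small so $P_t$ is already controlled, and a simple case split (or replacing the constant $\affsigma^2$ in the proxy by a data-dependent surrogate tracking $\mu_k$) completes the argument.
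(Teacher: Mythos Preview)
Your plan diverges from the paper's proof, and the crucial step 3 has a real gap. The concentration you propose on $\sum_{k<s}\brk!{\norm{g_k}^2 - \condE{\norm{g_k}^2}{k-1}}$ does \emph{not} yield a deviation of order $\sigma^2\log(T/\delta)$: expanding $\norm{g_k}^2 = \norm{\nabla f(w_k)}^2 + \norm{g_k-\nabla f(w_k)}^2 + 2\nabla f(w_k)\dotp(g_k-\nabla f(w_k))$, the cross term contributes an increment of size up to $2\norm{\nabla f(w_k)}\sigma$, which under \cref{lem:fbound_affine} is $O(\sqrt{\sm F}\,\sigma)$, not $O(\sigma^2)$. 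A single Bernstein/Freedman application therefore picks up extra $\sm F$ factors that do not fit inside $7\eta_0^2\sigma^2\log(T/\delta)$. Your ``case split'' and ``data-dependent surrogate'' remarks gesture at a workaround but do not resolve this; making them precise forces you to isolate and control the cross term separately, which is exactly the paper's argument and renders the $P_t/M_t$ decomposition superfluous. Even then you would spend at least one more concentration event than the stated $1-2\delta$ budget allows.

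The paper avoids the mean/fluctuation decomposition entirely. It works with the \emph{realized} noise throughout: the single martingale used is on the cross term $Z_s=\nabla f(w_s)\dotp(\nabla f(w_s)-g_s)$, bounded via \cref{lem:sub_gaussian} (together with \cref{lem:fbound_affine}, accounting for the two events). The polarization identity above then gives, on this event,
\[
    G_t^2 \geq \initg^2 + \sum_{s\le t}\norm{\nabla f(w_s)-g_s}^2 - 3\sigma^2\log(T/\delta).
\]
Defining the threshold $k=\min\{t:\sum_{s\le t}\norm{\nabla f(w_s)-g_s}^2>6\sigma^2\log(T/\delta)\}$, one has for $t<k$ the trivial bound $\sum_{s\le t}\eta_{s-1}^2\norm{\nabla f(w_s)-g_s}^2\le 6\eta_0^2\sigma^2\log(T/\delta)$, while for $t\ge k$ the display yields $G_t^2\ge\initg^2+\tfrac12\sum_{s\le t}\norm{\nabla f(w_s)-g_s}^2$, after which a direct application of the log-telescoping \cref{lem:sum_eta_g_t} to $\sum_s \norm{\nabla f(w_s)-g_s}^2/G_s^2$ (plus a one-line $\eta_{s-1}^2-\eta_s^2$ telescoping) produces the $2\eta^2\log(1+\sigma^2 T/2\initg^2)$ term. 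No proxy sequence $\widetilde G_s$ and no conditional means $\mu_s$ are introduced.
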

Now we move to prove the lemma.
\begin{proof}[Proof of \cref{lem:wbound_affine}]
    Rolling a single step of SGD,
    \begin{align*}
        \norm{w_{s+1}-w^\star}^2
        &=
        \norm{w_{s}-w^\star}^2
        - 2 \eta_s g_s \dotp (w_s-w^\star)
        + \eta_s^2 \norm{g_s}^2
        .
    \end{align*}
    Summing for $s=1 \ldots t$ and applying \cref{lem:sum_eta_g_t_2_affine},
    \begin{align}\label{eq:roll_convex_sgd}
        \norm{w_{t+1}-w^\star}^2
        &=
        \norm{w_{1}-w^\star}^2
        - 2 \sum_{s=1}^t \eta_s g_s \dotp (w_s-w^\star)
        + \sum_{s=1}^t \eta_s^2 \norm{g_s}^2
        \nonumber
        \\
        &\leq
        \norm{w_{1}-w^\star}^2
        \underbrace{- 2 \sum_{s=1}^t \eta_s g_s \dotp (w_s-w^\star)}_{(*)}
        + \eta^2 C_1
        .
    \end{align}
    Focusing on $(*)$, due to convexity,
    \begin{align*}
        - 2 \sum_{s=1}^t \eta_s g_s \dotp (w_s-w^\star)
        &=
        - 2 \sum_{s=1}^t \eta_s \nabla f(w_s) \dotp (w_s-w^\star)
        + 2 \sum_{s=1}^t \eta_s (\nabla f(w_s)-g_s) \dotp (w_s-w^\star)
        \\
        &\leq
        2 \sum_{s=1}^t \eta_s (\nabla f(w_s)-g_s) \dotp (w_s-w^\star)
        .
    \end{align*}
    In order to create a martingale we replace $\eta_s=\ifrac{\eta}{\sqrt{G_{s-1}^2+\norm{g_s}^2}}$ with $\etade_s = \ifrac{\eta}{\sqrt{G_{s-1}^2+\norm{\nabla f(w_s)}^2}}$,
    \begin{align*}
        (*)
        &\leq
        2 \sum_{s=1}^t \etade_{s} (\nabla f(w_s)-g_s) \dotp (w_s-w^\star)
        + 2 \sum_{s=1}^t (\eta_s-\etade_{s}) (\nabla f(w_s)-g_s) \dotp (w_s-w^\star)
        .
    \end{align*}
    We observe that
    \begin{align*}
        \abs{\eta_s-\etade_s}
        &=
        \eta \frac{\abs2{\sqrt{G_{s-1}^2+\norm{\nabla f(w_s)}^2}-\sqrt{G_{s-1}^2+\norm{g_s}^2}}}{\sqrt{G_{s-1}^2+\norm{g_s}^2}\sqrt{G_{s-1}^2+\norm{\nabla f(w_s)}^2}}
        \\
        &=
        \eta \frac{\abs!{\norm{\nabla f(w_s)}^2-\norm{g_s}^2}}{\sqrt{G_{s-1}^2+\norm{g_s}^2}\sqrt{G_{s-1}^2+\norm{\nabla f(w_s)}^2}(\sqrt{G_{s-1}^2+\norm{g_s}^2}+\sqrt{G_{s-1}^2+\norm{\nabla f(w_s)}^2})}
        \\
        &\leq
        \eta \frac{\norm{\nabla f(w_s)-g_s}(\norm{\nabla f(w_s)}+\norm{g_s})}{\sqrt{G_{s-1}^2+\norm{g_s}^2}\sqrt{G_{s-1}^2+\norm{\nabla f(w_s)}^2}(\sqrt{G_{s-1}^2+\norm{g_s}^2}+\sqrt{G_{s-1}^2+\norm{\nabla f(w_s)}^2})}
        \\
        &\leq
        \eta \frac{\norm{\nabla f(w_s)-g_s}}{\sqrt{G_{s-1}^2+\norm{g_s}^2}\sqrt{G_{s-1}^2+\norm{\nabla f(w_s)}^2}}
        .
    \end{align*}
    Thus
    \begin{align*}
        \sum_{s=1}^{t} (\eta_s-\etade_s)(\nabla f(w_s)-g_s) \dotp (w_s-w^\star)
        &\leq
        \sum_{s=1}^{t} \abs{\eta_s-\etade_s}\norm{\nabla f(w_s)-g_s}\wnorm_s
        \\
        &\leq
        \wmax_t \sum_{s=1}^{t} \abs{\eta_s-\etade_s}\norm{\nabla f(w_s)-g_s}
        \\
        &\leq
        \eta \wmax_t \sum_{s=1}^t \frac{\norm{\nabla f(w_s)-g_s}^2}{\sqrt{G_{s-1}^2+\norm{g_s}^2}\sqrt{G_{s-1}^2+\norm{\nabla f(w_s)}^2}}
        \\
        &\leq \frac{\wmax_t}{\eta} \sum_{s=1}^t \eta_{s-1}^2 \norm{\nabla f(w_s)-g_s}^2
        .
    \end{align*}
    Combining the inequality with \cref{lem:convex_martingale_affine}, with probability at least $1-\delta$,
    \begin{align*}
        (*)
        &\leq
        \frac{2 \wmax_t}{\eta} \sum_{s=1}^t \eta_{s-1}^2 \norm{\nabla f(w_s)-g_s}^2
        \\
        &+ 4 \wmaxeta_t
            \sqrt{A_t(\ifrac{\delta}{\log(4T)}) \sum_{s \leq t} \eta_{s-1}^2 \norm{\nabla f(w_s)-g_s}^2 + \eta^2 \brk*{\frac{\sigma_0}{\initg}+\sigma_1}^2 B_t(\ifrac{\delta}{\log(4T)})}
        ,
    \end{align*}
    and using $ab \leq a^2/2 + b^2/2$,
    \begin{align*}
        (*)
        &\leq
        \frac{\wmax_t^2}{4}
        + \frac{4}{\eta^2} \brk*{\sum_{s=1}^t \eta_{s-1}^2 \norm{\nabla f(w_s)-g_s}^2}^2
        + \frac{\wmaxeta_t^2}{4}
        \\
        &+ 16
            \brk*{A_t(\ifrac{\delta}{\log(4T)}) \sum_{s \leq t} \eta_{s-1}^2 \norm{\nabla f(w_s)-g_s}^2 + \eta^2 \brk*{\frac{\sigma_0}{\initg}+\sigma_1}^2 B_t(\ifrac{\delta}{\log(4T)})}
        \\
        &\leq
        \frac{\wmax_t^2}{2}
        + \frac{4}{\eta^2} \brk*{\sum_{s=1}^t \eta_{s-1}^2 \norm{\nabla f(w_s)-g_s}^2}^2
        + \frac{\eta^2}{4}
        \tag{$\wmaxeta_t=\max\brk[c]{\wmax_t,\eta}$}
        \\
        &+ 16
            \brk*{A_t(\ifrac{\delta}{\log(4T)}) \sum_{s \leq t} \eta_{s-1}^2 \norm{\nabla f(w_s)-g_s}^2 + \eta^2 \brk*{\frac{\sigma_0}{\initg}+\sigma_1}^2 B_t(\ifrac{\delta}{\log(4T)})}
        .
    \end{align*}
    Applying \cref{lem:sum_eta_noise}, with probability at least $1-3\delta$ (via a union bound),
    \begin{align*}
        (*)
        &\leq
        \frac{\wmax_t^2}{2}
        + 4 \eta^2 C^2
        + \frac{\eta^2}{4}
        + 16 \eta^2
            \brk*{A_t(\ifrac{\delta}{\log(4T)}) C + \brk*{\frac{\sigma_0}{\initg}+\sigma_1}^2 B_t(\ifrac{\delta}{\log(4T)})}
        ,
    \end{align*}
    where $C=2 \log(1+\affsigma^2 T / 2\initg^2) + 7 \affsigma^2/\initg^2 \log(T/\delta)$.
    Hence, returning to \cref{eq:roll_convex_sgd} and applying \cref{lem:sum_eta_g_t_2_affine},
    \begin{align*}
        \wnorm_{t+1}^2
        &\leq
        \frac{\wmax_t^2}{2}
        + \wnorm_1^2
        + \eta^2 \brk*{
            \frac{1}{4}
            + C_1
            + 4 C^2
            + 16
            \brk*{A_t(\ifrac{\delta}{\log(4T)}) C + \brk*{\frac{\sigma_0}{\initg}+\sigma_1}^2 B_t(\ifrac{\delta}{\log(4T)})}
        }
        \\
        &\leq
        \frac{\wmax_t^2}{2}
        + \frac{\wbound}{2}
        .
        \tag{def of $\wbound$, $32 A_t(\ifrac{\delta}{\log(4T)}) \leq A_T$ and $32 B_t(\ifrac{\delta}{\log(4T)}) \leq B_T$}
    \end{align*}
    Hence, rolling to $t=1$, with probability at least $1-3\delta$, $\wnorm_t \leq \wbound$ for all $1 \leq t \leq T+1$.
\end{proof}
\begin{proof}[Proof of \cref{lem:convex_martingale_affine}]
    In order to invoke \cref{lem:emp_bernstein} we will replace $w_s-w^\star$ with a version which is scaled and projected to the unit ball.
    We denote
    $
        a_k = 2^{k-1} \wmaxeta_1
        \text{ and }
        k_t=\ceil{\log (\wmaxeta_t/\wmaxeta_1)}+1
        .
    $
    Thus, $\wmax_t \leq \wmaxeta_t \leq a_{k_t} \leq 2\wmaxeta_t$.
    Since $\norm{w_{s+1}-w^\star} \leq \norm{w_{s}-w^\star} + \eta$ for all $s$, $\wmax_t \leq \wnorm_1 + \eta (t-1)$ and $1 \leq k_t \leq \ceil{\log (t)} + 1 \leq \log(4T)$.
    Defining the projection to the unit ball, $\proj{x}=x/\max\brk[c]*{1,\norm{x}}$,
    \begin{align*}
        \frac{w_s-w^\star}{a_{k_t}}
        &=
        \Pi_1 \brk*{\frac{w_s-w^\star}{a_{k_t}}}
        .
    \end{align*}
    To simplify notation, let $C = \eta(\ifrac{\sigma_0}{\initg}+\sigma_1)$ and note that
    \begin{align*}
        \norm{\etade_s (\nabla f(w_s)-g_s)}
        &\leq
        \frac{\eta \sqrt{\sigma_0^2 + \sigma_1^2 \norm{\nabla f(w_s)}^2}}{\sqrt{\initg^2 + \norm{\nabla f(w_s)}^2}}
        \leq
        \frac{\eta \sigma_0}{\initg}
        + \eta \sigma_1
        =
        C
        .
    \end{align*}
    Thus,
    \begin{align}\label{eq:sum_martingales}
        \sum_{s=1}^t & \frac{\etade_s (\nabla f(w_s)-g_s) \dotp (w_s-w^\star)}{C a_{k_t}}
        \nonumber
        \\&=
        \sum_{s=1}^t \frac{\etade_s (\nabla f(w_s)-g_s)}{C} \dotp \Pi_1 \brk*{\frac{w_s-w^\star}{a_{k_t}}}
        \nonumber
        \\
        &\leq
        \abs*{\sum_{s=1}^t \frac{\etade_s (\nabla f(w_s)-g_s)}{C} \dotp \Pi_1 \brk*{\frac{w_s-w^\star}{a_{k_t}}}}
        \nonumber
        \\
        &\leq
        \max_{1 \leq k \leq \floor{\log(4T)}}\abs*{\sum_{s=1}^t \frac{\etade_s (\nabla f(w_s)-g_s)}{C} \dotp \Pi_1 \brk*{\frac{w_s-w^\star}{a_{k}}}}
        .
    \end{align}
    Let $X_s^{(k)}=\frac{\etade_s (\nabla f(w_s)-g_s)}{C} \dotp \Pi_1 \brk*{\frac{w_s-w^\star}{a_{k}}}$ for some $k$.
    $$
    \condE{g_s}{s-1}=\nabla f(w_s)
    \quad\implies\quad
    \condE{X_s^{(k)}}{s-1}=0
    .
    $$
    Also note that $X_s^{(k)}\leq 1$ with probability $1$.
    Using \cref{lem:emp_bernstein} with the $X_s^{(k)}$ we defined and $\hat{X_s}=0$, for any $k$ and $\delta' \in (0,1)$, with probability at least $1-\delta'$, for all $t \leq T$,
    \begin{align*}
        \abs*{\sum_{s \leq t} X_s^{(k)}}
        &\leq
        \sqrt{A_t(\delta') \sum_{s \leq t} (X_s^{(k)})^2 + B_t(\delta')}
        .
    \end{align*}
    We can upper bound $(X_s^{(k)})^2$,
    \begin{align*}
        (X_s^{(k)})^2
        &\leq
        \frac{\etade_s^2 \norm{\nabla f(w_s)-g_s}^2}{C^2} \norm*{\Pi_1 \brk*{\frac{w_s-w^\star}{a_{k}}}}^2
        \tag{Cauchy-Schwarz inequality}
        \\
        &\leq
        \frac{\etade_s^2 \norm{\nabla f(w_s)-g_s}^2}{C^2}
        \tag{$\norm{\Pi_1(x)} \leq 1$}
        \\
        &\leq
        \frac{\eta_{s-1}^2}{C^2} \norm{\nabla f(w_s)-g_s}^2
        \tag{$\eta_{s-1} \geq \etade_s$}
        .
    \end{align*}
    Thus, returning to \cref{eq:sum_martingales} multiplied by $C a_{k_t}$, with probability at least $1-\delta' \log(4T)$ (union bound for all
    $
    1 
    \leq k 
    \leq 
    \floor{\log(4T)})
    ,
    $
    \begin{align*}
        \sum_{s=1}^t
        \etade_s (\nabla f(w_s)-g_s) \dotp (w_s-w^\star)
        &\leq
        C a_{k_t}
        \sqrt{A_t(\delta') \sum_{s \leq t} \frac{\eta_{s-1}^2}{C^2} \norm{\nabla f(w_s)-g_s}^2 + B_t(\delta')}
        .
    \end{align*}
    As $a_{k_t} \leq 2\wmaxeta_t$, picking $\delta'=\ifrac{\delta}{\log(4T)}$, with probability at least $1-\delta$,
    \begin{align*}
        \sum_{s=1}^t
        \eta_{s-1} (\nabla f(w_s)-g_s) \dotp (w_s-w^\star)
        &\leq
        2 \wmaxeta_t
        \sqrt{A_t(\ifrac{\delta}{\log(4T)}) \sum_{s \leq t} \eta_{s-1}^2 \norm{\nabla f(w_s)-g_s}^2 + C^2 B_t(\ifrac{\delta}{\log(4T)})}
        .
        \qedhere
    \end{align*}
\end{proof}
\begin{proof}[Proof of \cref{lem:sum_eta_noise}]
    Let $Z_s = \nabla f(w_s) \dotp (\nabla f(w_s)-g_s)$
    and
    note that $\condE{Z_s}{s-1}=0$. Hence, $\brk{Z_s}_{s=1}^t$ is a martingale difference sequence.
    By \cref{lem:sub_gaussian} (with $\abs{Z_s} \leq \sqrt{\sigma_0^2 + \sigma_1^2 \norm{\nabla f(w_s)}^2} \norm{\nabla f(w_s)}$ and $\lambda=2/3 \affsigma^2$) and a union bound (over $1 \leq t \leq T$), with probability at least $1-\delta$,
    \begin{align*}
        \sum_{s=1}^t \nabla f(w_s) \dotp (\nabla f(w_s)-g_s)
        &\leq
        \frac{3 \lambda}{4} \sum_{s=1}^t (\sigma_0^2 + \sigma_1^2 \norm{\nabla f(w_s)}^2) \norm{\nabla f(w_s)}^2
        + \frac{1}{\lambda} \log(T/\delta)
        \\
        &
        \leq
        \frac{\sigma_0^2 + 2 \sm \sigma_1^2 \fdiffmax_t}{2\affsigma^2} \sum_{s=1}^t \norm{\nabla f(w_s)}^2
        + \frac{3}{2} \affsigma^2 \log(T/\delta)
        \tag{\cref{lem:smooth_norm_bound}}
    \end{align*}
    for all $t \in [T]$, and under a union bound with \cref{lem:fbound_affine}, with probability at least $1-2\delta$, since $\sigma_0^2 + 2 \sm \sigma_1^2 \fdiffmax_t \leq \affsigma^2$,
    \begin{align}\label{eq:convex_martingale_1_affine}
        \sum_{s=1}^t \nabla f(w_s) \dotp (\nabla f(w_s)-g_s)
        &\leq
        \frac{1}{2} \sum_{s=1}^t \norm{\nabla f(w_s)}^2
        + \frac{3}{2} \affsigma^2 \log(T/\delta)
        .
    \end{align}
    Let
    $$
    k = \min\set*{t \leq T : \sum_{s=1}^t \norm{\nabla f(w_s)-g_s}^2 > 6 \affsigma^2 \log(T/\delta)}.
    $$
    For $t < k$,
    \begin{align*}
        \sum_{s=1}^t \eta_{s-1}^2 \norm{\nabla f(w_s)-g_t}^2
        &\leq
        \eta_0^2 \sum_{s=1}^t \norm{\nabla f(w_s)-g_t}^2
        \leq
        6 \eta_0^2 \affsigma^2 \log(T/\delta)
        .
    \end{align*}
    For $t \geq k$,
    \begin{align*}
        G_t^2
        &=
        \initg^2
        + \sum_{s=1}^t \norm{\nabla f(w_s)}^2
        + \sum_{s=1}^t \norm{\nabla f(w_s) - g_s}^2
        - 2 \sum_{s=1}^t \nabla f(w_s) \dotp (\nabla f(w_s)-g_s)
        \\
        &\geq
        \initg^2
        + \sum_{s=1}^t \norm{\nabla f(w_s) - g_s}^2
        - 3 \affsigma^2 \log(T/\delta)
        \tag{\cref{eq:convex_martingale_1_affine}}
        \\
        &\geq
        \initg^2
        + \frac{1}{2} \sum_{s=1}^t \norm{\nabla f(w_s) - g_s}^2
        \tag{$t \geq k$}
        .
    \end{align*}
    Hence,
    \begin{align*}
        \sum_{s=k}^t \eta_{s-1}^2 \norm{\nabla f(w_s)-g_s}^2
        &=
        \sum_{s=k}^t \eta_{s}^2 \norm{\nabla f(w_s)-g_s}^2
        +
        \sum_{s=k}^t (\eta_{s-1}^2-\eta_{s}^2) \norm{\nabla f(w_s)-g_s}^2
        \\
        &\leq
        \sum_{s=k}^t \eta_{s}^2 \norm{\nabla f(w_s)-g_s}^2
        +
        \eta_0^2 \affsigma^2
        \\
        &\leq
        2 \eta^2 \sum_{s=k}^t \frac{\norm{\nabla f(w_s)-g_s}^2}{2 \initg^2 + 
        \sum_{k=1}^s \norm{\nabla f(w_k)-g_k}^2}
        + \eta_0^2 \affsigma^2
        .
    \end{align*}
    Using \cref{lem:sum_eta_g_t} with $G_0=\sqrt{2 \initg^2} \text{ and } g_t=\nabla f(w_s)-g_s,$
    \begin{align*}
        \sum_{s=k}^t \eta_{s-1}^2 \norm{\nabla f(w_s)-g_s}^2
        &\leq
        2 \eta^2 \log\brk*{\frac{2 \initg^2 + \sum_{s=k}^t \norm{\nabla f(w_s)-g_s}^2}{2 \initg^2}}
        + \eta_0^2 \affsigma^2
        \\
        &\leq
        2 \eta^2 \log\brk*{1+\frac{\sigma_0^2 T + \sigma_1^2 \sum_{s=k}^t \norm{\nabla f(w_s)}^2}{2 \initg^2}}
        + \eta_0^2 \affsigma^2
        \tag{noise bound}
        \\
        &\leq
        2 \eta^2 \log(1 + \affsigma^2 T / 2 \initg^2)
        + \eta_0^2 \affsigma^2
        \tag{\cref{lem:smooth_norm_bound,lem:fbound_affine}}
        .
    \end{align*}
    Hence, combining with the case of $t < k$, with probability at least $1-2\delta$, for all $t \leq T$,
    \begin{align*}
        \sum_{s=1}^t \eta_{s-1}^2 \norm{\nabla f(w_s)-g_s}^2
        &\leq
        2 \eta^2 \log(1 + \affsigma^2 T / 2 \initg^2)
        + 7 \eta_0^2 \affsigma^2 \log(T/\delta)
        .
        \qedhere
    \end{align*}
\end{proof}

\end{document}